%% file: arxiv.tex
\documentclass[onefignum,onetabnum]{siamart251216}
\usepackage{amssymb}
\hypersetup{colorlinks=true,linkcolor=red,citecolor=blue,urlcolor=cyan}
\usepackage{bm}
\usepackage{booktabs}
\usepackage{graphicx}
\usepackage{array}

\newcommand{\RR}{\mathbb{R}}
\newcommand{\calD}{\mathcal{D}}
\newcommand{\calE}{\mathcal{E}}
\newcommand{\calF}{\mathcal{F}}

\newcommand{\calL}{\mathcal{L}}
\newcommand{\calN}{\mathcal{N}}

\newcommand{\calV}{\mathcal{V}}
\newcommand{\calW}{\mathcal{W}}
\newcommand{\calZ}{\mathcal{Z}}
\newcommand{\calM}{\mathcal{M}}

\newcommand{\calI}{\mathcal{I}}
\newcommand{\calS}{\mathcal{S}}

\newcommand{\Mset}[2]{\calM_{#1}(#2)}

\newsiamremark{remark}{Remark}

\AtBeginDocument{%
  \crefformat{section}{\S#2#1#3}%
  \crefformat{subsection}{\S#2#1#3}%
  \crefformat{subsubsection}{\S#2#1#3}%
  \Crefformat{section}{\S#2#1#3}%
  \Crefformat{subsection}{\S#2#1#3}%
  \Crefformat{subsubsection}{\S#2#1#3}%
}

\headers{Completeness Theory for GNN Interatomic Potentials}{P.~Ming and H.~Wang}

\title{Why Multi-Layer Message Passing Works:
  Completeness Theory for Graph Neural Network
  Interatomic Potentials
}

\author{Pingbing Ming\thanks{SKLMS, Institute of Computational Mathematics and
  Scientific/Engineering Computing, AMSS, Chinese Academy of Sciences,
  No.~55, East Road Zhong-Guan-Cun, Beijing 100190, China
  (\email{mpb@lsec.cc.ac.cn}).}
  \and Han Wang\thanks{National Key Laboratory of Computational Physics,
  Institute of Applied Physics and Computational Mathematics,
  Fenghao East Road 2, Beijing 100094, P.R.~China, and
  HEDPS, CAPT, College of Engineering, Peking University,
  Beijing 100871, P.R.~China
  (\email{wang\_han@iapcm.ac.cn}).}}

\usepackage{docmute}
\newcommand{\arxivbuild}{}

\begin{document}

\input{main}

\clearpage
\def\siamprelabel{SM}
\setcounter{section}{0}
\begin{center}
{\bfseries\large Supplementary Materials}
\end{center}

\input{supplement}

\end{document}

%% file: main.tex
\maketitle

\begin{abstract}
We prove that the Hypergraph Neural Network, an
invariant architecture with 3-body message
passing, is a universal approximator for potential energy
surfaces.  Our main contribution is a \emph{multi-layer
completeness theory}. We show that $L$ layers of
message passing on sparse, cutoff-based graphs achieve the
same representational power as having access to the full
$L$-hop neighborhood, provided the configurations are generic, satisfy an overlap condition 
and a 
connectivity condition.
This provides the first rigorous justification for the
common practice of using multi-layer message passing
with a per-layer cutoff smaller than the physical
interaction range, the setting used by virtually all
practical graph neural network based
machine-learned interatomic potentials. As immediate consequences, we show that both DPA3 and CHGNet architectures inherit universal approximation.
\end{abstract}

\begin{keywords}
machine-learned interatomic potentials, graph neural networks,
hypergraph neural networks, universal approximation
\end{keywords}

\begin{MSCcodes}
68T07, 68R10, 41A65, 92E10
\end{MSCcodes}

\section{Introduction}
\label{sec:intro}

Machine-learned interatomic potentials
(MLIPs)~\cite{behler2007generalized,bartok2010gaussian,
zhang2018deeppotse} have become an essential tool in
computational chemistry and materials science, enabling
molecular dynamics simulations that achieve near-quantum-mechanical accuracy at a fraction of the computational cost.
Applications range from drug
discovery~\cite{wang2024drug} and
catalysis~\cite{lan2023adsorbml} to materials
design~\cite{chen2022m3gnet} and phase diagram
exploration~\cite{wen2024alloy}. Recent advances in atomistic foundation models~\cite{zhang2025dpa3,batatia2024mace_mp,
merchant2023gnome,fmreview2025} have further expanded the
scope of MLIPs to universal potentials covering much of the
periodic table. This raises a fundamental question: can a multi-layer GNN-based MLIP approximate an arbitrary potential energy surface (PES) while rigorously preserving the required physical symmetries and continuity constraints? The answer to this question is of central importance, as it determines whether the predictive accuracy of such models is inherently limited by the architectural design, or whether it ultimately depends only on the training data and optimization procedure.

A closely related concept is \emph{completeness}.  A model is said to be complete if it produces distinct outputs for any two local environments that are not related by symmetry, that is, two
environments yield the same output only if they are related
by a rotation or reflection combined with a permutation of atoms of the
same species. Completeness is a necessary
condition for universal approximation,
yet whether it is also
sufficient is a separate question that we shall address in this
work. The concept was introduced by
Bart{\'o}k et~al.~\cite{bartok2013soap} in the context of
invariant representations, and brought into sharp focus by
Pozdnyakov et~al.~\cite{pozdnyakov2020incompleteness}, who
demonstrated that the widely used 3-body representations, such as SOAP power spectra, are \emph{incomplete}, in that there exist geometrically distinct environments that share identical sets of pairwise distances and triplet angles. Incompleteness thus imposes a fundamental ceiling on approximation accuracy that cannot be overcome, regardless of the training data.

Further incompleteness results have since revealed the scope
of the problem.  Pozdnyakov and
Ceriotti~\cite{pozdnyakov2022incompleteness} showed that
distance-only GNNs, such as
SchNet~\cite{schutt2018schnet}, are incomplete for
3D point clouds even with arbitrary depth and
nonlinearity, so angular information is not merely helpful but
\emph{necessary}.  Cen et~al.~\cite{cen2024hegnn} showed that
equivariant GNNs with fixed tensor degree~$l$ degenerate to
zero on symmetric structures for specific values of~$l$,
e.g., $l = 1$ fails on all symmetric graphs, demonstrating
that low-degree equivariant features are also insufficient.

Several approaches have been developed to achieve
completeness.  The Atomic Cluster Expansion
(ACE)~\cite{drautz2019ace,dusson2022ace} and Moment Tensor
Potentials (MTP)~\cite{shapeev2016mtp} construct systematically
improvable polynomial bases for invariant functions.  These are
\emph{linear} models
for which basis
density directly yields universal approximation.  Equivariant message-passing architectures such as
Tensor Field Networks~\cite{thomas2018tfn},
NequIP~\cite{nequip2022},
PaiNN~\cite{schutt2021painn},
SEGNN~\cite{brandstetter2022segnn}, and
Equivariant Transformers~\cite{tholke2022equivariant}
use $E(3)$-equivariant convolutions
to propagate vector and tensor features, achieving high data
efficiency.
Formal expressiveness guarantees for this line of work are available in the fully connected setting.
Dym and Maron~\cite{dym2021universality} proved that Tensor Field Networks are universal for equivariant functions on fully connected point clouds.
MACE~\cite{batatia2022mace} constructs the ACE basis within the GNN framework and thereby inherits completeness from ACE theory.
A different approach processes \emph{invariant} scalar features, namely distances and bond angles, with nonlinear networks.
Models in this class include
DimeNet~\cite{gasteiger2020dimenet},
GemNet~\cite{gasteiger2021gemnet},
SphereNet~\cite{liu2022spherenet},
ALIGNN~\cite{choudhary2021alignn},
M3GNet~\cite{chen2022m3gnet},
CHGNet~\cite{deng2023chgnet}, and
DPA3~\cite{zhang2025dpa3}.
Finally, coordinate-frame methods construct local reference
frames from pairs of neighbors and apply nonlinear functions
before symmetrization:
Nigam et~al.~\cite{nigam2024completeness} proposed
3-center-1-neighbor representations, and Pozdnyakov and
Ceriotti~\cite{pozdnyakov2023smooth} introduced the
Equivariant Coordinate System Ensemble (ECSE).

On the theoretical side, several foundational results underpin
the analysis of model expressiveness. The DeepSets
universality theorem~\cite{zaheer2017deepsets,
wagstaff2019limitations,han2019universal} established that $\phi \circ (\sum_j g)$ is a universal approximator for
permutation-invariant functions. Villar
et~al.~\cite{villar2021scalars} proved that any function
invariant under translations, rotations, reflections, and
permutations can be expressed as a permutation-invariant
function of the Gram matrix of displacement vectors, a result known as the
``scalars are universal'' theorem.  Li
et~al.~\cite{li2024completeness} demonstrated that several invariant
GNN architectures, such as DimeNet, SphereNet and GemNet, achieve
$E(3)$-completeness on fully connected graphs, and Hordan et~al.~\cite{hordan2024complete} constructed complete invariant networks of polynomial size in the same setting.
Joshi et~al.~\cite{joshi2023expressive} introduced the
Geometric Weisfeiler--Leman test, covering both
invariant and equivariant models, and proved an equivalence
between pairwise discrimination within a model class and universal approximation.
For sparse graphs, Sverdlov and Dym~\cite{sverdlov2025expressive} proved that maximally expressive message passing networks with rotation equivariant features generically separate all connected geometric graphs, whereas networks restricted to pairwise distance features generically succeed exactly on generically globally rigid graphs.

Despite this progress, significant gaps remain.
The completeness results of Villar et~al.~\cite{villar2021scalars}, Li et~al.~\cite{li2024completeness}, Delle Rose et~al.~\cite{dellerose2023wl}, and Hordan et~al.~\cite{hordan2024complete} are established only for \emph{fully connected} graphs.
For the sparse cutoff-based graphs employed in practice, the guarantees of Sverdlov and Dym~\cite{sverdlov2025expressive} concern generic separation, with the positive result built on equivariant vector features.
Villar et~al.\
characterize \emph{what} form invariant functions must take, but do not show any specific architecture can actually approximate them.
Nigam et~al.~\cite{nigam2024completeness} identify the correct geometric insight, namely that three-center features with nonlinearity applied before summation achieve completeness, yet their argument is presented as a proof sketch rather than a rigorous derivation.
The discrimination--UAT equivalence of Joshi et~al.~\cite{joshi2023expressive} operates on the \emph{function class}, requiring only that for each pair of environments some function in the class separates them.
This is conceptually different from completeness of a \emph{single} model with fixed weights.
Although separation by a single model is achievable~\cite{dym2024low,blumsmith2025galois}, it does not provide the extensive and local structure of an interatomic potential.
Moreover, to the best of our knowledge, no existing analysis concerns the approximation of a PES with a prescribed physical interaction range. In practice, multi-layer message passing extends the receptive field from the per-layer cutoff $r_c$ to a larger physical interaction range
$R_c>r_c$, and the impact of this extended range on completeness has not yet been addressed.

We close these gaps by introducing the Hypergraph Neural
Network (HGNN), an invariant 3-body architecture, and proving
the following multi-layer completeness theorem for HGNN:

\smallskip
\noindent\textbf{Main result (see \cref{thm:uat}).}
\emph{On generic configurations satisfying an overlap and a connectivity condition, and for $L$
large enough that the $L$-hop neighborhood covers the physical
interaction range~$R_c$, the $L$-layer HGNN is a universal approximator for the PES at range~$R_c$.
}
\smallskip

The above result extends completeness from the
per-layer cutoff~$r_c$ to the physical range~$R_c$ via
multi-layer message passing, a setting not addressed in prior work.  The proof operates on \emph{sparse} cutoff-based graphs,
provides a rigorous continuous approximation guarantee for the HGNN, and characterizes a \emph{single} fixed-weight representation rather than a function class.
Beyond this, the HGNN serves as a \emph{reference theoretical architecture}: any practical architecture that can simulate the HGNN inherits its universal approximation property.
We demonstrate this implication explicitly with DPA3 and CHGNet, i.e.,~\cref{cor:dpa3,cor:chgnet}.
The proofs further reveal one key architectural requirement, that message functions must be Multi-Layer Perceptrons (MLPs)~\cite{cybenko1989approx,hornik1989mlp}, rather than single linear layers, to ensure the desired expressiveness.

The paper is organized as follows.
\cref{sec:notation} introduces the PES properties,
symmetry framework, and the GNN-based interatomic potential
architectures considered in this work, namely, HGNN, ALIGNN, CHGNet,
DPA3. \cref{sec:completeness} proves the completeness--UAT
equivalence and establishes the existence of complete
representations via Gram matrix reconstruction.
\cref{sec:hgnn-approx} proves that the HGNN can
approximate complete representations.
\cref{sec:uat} states the UAT and derives corollaries for DPA3 and CHGNet, with the simulation proofs given in the supplementary materials.
\cref{sec:discussion} discusses implications,
assumptions, limitations, and open problems.
\section{Graph Neural Network Interatomic Potentials}
\label{sec:notation}

\subsection{Potential energy surface and target function class}

Consider a system of $N$ atoms with types $z_i \in \calZ = \{1,
\ldots, T\}$ and pairwise distinct positions $\bm{r}_i \in \RR^3$.
Write $B_r = \{\bm{r} \in \RR^3 : |\bm{r}| < r\}$ for the open ball of radius~$r$.
We assume the density is bounded, in that the number of atoms in any ball of fixed radius admits a uniform bound.
We write $M$ for such a bound at the largest cutoff appearing below.

\begin{definition}[Local environment]
\label{def:local-env}
The \emph{local environment} of atom~$j$ within cutoff~$r$ is
\begin{equation}\label{eq:local-env}
  \calD_j^{(r)} = \bigl(z_j,\; \{(\Delta\bm{r}_{jk},\, z_k)
    : k \neq j,\; \Delta\bm{r}_{jk} \in B_r\}\bigr),
  \qquad \Delta\bm{r}_{jk} = \bm{r}_k - \bm{r}_j,
\end{equation}
where the neighbor set is unordered (a multiset).  That is,
$\calD_j^{(r)}$ consists of the center type~$z_j$ together
with the displacement vectors and types of all neighbors
within~$r$.
We write $\calD_j \equiv \calD_j^{(r_c)}$ when the cutoff is
the default~$r_c$.
\end{definition}

A PES is a map
$E : \bigcup_{N=1}^{\infty} (\RR^3 \times \calZ)^N \to \RR$
subject to four structural properties:

\paragraph{(P1) Symmetry}
$E$ is invariant under translations, rotations/reflections,
and permutation of same-type atoms:
\begin{align}
  E(\{\bm{r}_i + \bm{t},\, z_i\}) &= E(\{\bm{r}_i,\, z_i\})
    &&\forall\, \bm{t} \in \RR^3,
    \label{eq:trans-inv} \\
  E(\{R\bm{r}_i,\, z_i\}) &= E(\{\bm{r}_i,\, z_i\})
    &&\forall\, R \in O(3),
    \label{eq:rot-inv} \\
  E(\{\bm{r}_{\pi(i)},\, z_i\})
    &= E(\{\bm{r}_i,\, z_i\})
    &&\forall\, \pi \in S_{N_1} \times \cdots \times S_{N_T},
    \label{eq:perm-inv}
\end{align}
where $N_t = |\{i : z_i = t\}|$, $S_n$ denotes the symmetric
group on $n$ elements, and $\pi$ permutes only positions of
same-type atoms.  Each atom's type is an intrinsic attribute that travels with it under
permutation.  Swapping two atoms exchanges their positions while
each retains its type, so only same-type swaps leave the
configuration unchanged.

\paragraph{(P2) Extensivity}
$E$ decomposes as a sum of atomic contributions:
\begin{equation}\label{eq:extensive}
  E(\{\bm{r}_i,\, z_i\}_{i=1}^N) =
    \sum_{i=1}^{N} \varepsilon_i.
\end{equation}

\paragraph{(P3) Locality}
There exists a physical interaction range $R_c > 0$ such that
$\varepsilon_i$ depends only on atoms within~$R_c$ of
atom~$i$:
$\varepsilon_i = \varepsilon(\calD_i^{(R_c)})$.

\paragraph{(P4) Smoothness}
$\varepsilon$ is at least $C^k$ ($k \ge 2$) with respect to the neighbor positions, including at the interaction-range boundary, where neighbors enter and leave the environment.
Since $\varepsilon$ is $C^k$, the forces are $C^{k-1}$ and the Hessian is $C^{k-2}$.
Requiring $k \ge 2$ therefore makes all three continuous.

The extensivity and locality properties reduce the
problem from a function on a $3N$-dimensional configuration
space to a function on the space of local environments, which
has bounded dimensionality.  We now formalize this space.

\begin{definition}[Environment space]
\label{def:env-space}
For a fixed cutoff $r > 0$ and a type signature $(z_0, \bm{z})$ with $z_0 \in \calZ$, $\bm{z} \in \calZ^n_{\le} = \{(z_1, \ldots, z_n) \in \calZ^n : z_1 \le \cdots \le z_n\}$, and $n \le M$, the \emph{stratum} $\calE_{z_0, \bm{z}}^{(r)}$ is
the set of ordered lists of the neighbors of a local environment within cutoff~$r$ (\cref{def:local-env}) whose center has type~$z_0$, arranged so that the $k$-th
neighbor has type~$z_k$.
Since the types are fixed by the signature, such a list is determined by its tuple of displacement vectors, written
\begin{equation}\label{eq:stratum}
  \calD = (\Delta\bm{r}_1, \ldots, \Delta\bm{r}_n)
  \in \bigl(B_{r} \setminus \{0\}\bigr)^n
    \;\subset\; \RR^{3n}.
\end{equation}
Every such tuple occurs, so the stratum is an open subset of~$\RR^{3n}$ and inherits its Euclidean metric, standard topology, and $C^\infty$ differentiable structure.
The type signature $(z_0, \bm{z})$ is index data rather than set data, so strata with different signatures are disjoint even when they have equal dimension.
The \emph{environment
space} with cutoff~$r$ is the disjoint union over all center
types and neighbor compositions:
\begin{equation}\label{eq:env-space}
  \calE^{(r)} = \bigsqcup_{z_0 \in \calZ}\;
    \bigsqcup_{n=0}^{M}\;
    \bigsqcup_{\bm{z} \in \calZ^n_{\le}}\;
    \calE_{z_0, \bm{z}}^{(r)}\,.
\end{equation}
We write $\calE \equiv \calE^{(r_c)}$ and
$\calE_{z_0, \bm{z}} \equiv \calE_{z_0, \bm{z}}^{(r_c)}$
when the cutoff is the default~$r_c$.
\end{definition}

The Euclidean metric on each stratum is
\begin{equation}\label{eq:stratum-metric}
  d(\calD, \calD')
  = \Bigl(\sum_{k=1}^{n}
    \|\Delta\bm{r}_{k} - \Delta\bm{r}_{k}'\|^2
  \Bigr)^{1/2},
  \qquad
  \calD, \calD' \in \calE_{z_0, \bm{z}}^{(r)}.
\end{equation}
Two tuples in the same stratum list the same local environment precisely when they differ by a type-preserving permutation of the neighbors, which is part of the symmetry group of \cref{def:sym-equiv}.
We refer to elements of~$\calE^{(r)}$ simply as local environments.
Since the type tuple is nondecreasing, every local environment has all of its representatives in a single stratum.
The mathematical structure of~$\calE$ is
stratum-wise, in that continuity, compactness, and $C^k$-smoothness are
defined with respect to the Euclidean metric and differentiable
structure on each stratum
$\calE_{z_0, \bm{z}}^{(r)} \subset \RR^{3n}$.  Globally,
$\calE^{(r)}$ is a disjoint union of strata of different
dimensions, while it is not a linear space.

The \emph{symmetry group} acting on a stratum of~$\calE$ with
$n$ neighbors of types $\bm z$ is
\begin{equation}\label{eq:sym-group}
  G = O(3) \times \bigl(S_{n_1} \times \cdots \times S_{n_T}\bigr),
\end{equation}
where $S_{n_t}$ permutes the $n_t$ neighbors of type~$t$
with $\sum_t n_t = n$.  We write
$\Gamma = S_{n_1} \times \cdots \times S_{n_T}$ for the
permutation part.  Only type-preserving permutations appear,
since atom types are intrinsic attributes, cf.~(P1).
The action of $(R, \pi) \in G$ on a stratum element
$\calD = (\Delta\bm{r}_1, \ldots, \Delta\bm{r}_n) \in
\calE_{z_0, \bm{z}}$ is
\begin{equation}\label{eq:group-action}
  (R, \pi) \cdot
  (\Delta\bm{r}_1, \ldots, \Delta\bm{r}_n)
  =
  (R\,\Delta\bm{r}_{\pi^{-1}(1)}, \ldots,
   R\,\Delta\bm{r}_{\pi^{-1}(n)}).
\end{equation}

\begin{definition}[Symmetry equivalence]
\label{def:sym-equiv}
Two environments $\calD, \calD'$, possibly of different
atoms or different configurations, are \emph{equivalent}, written $\calD \sim \calD'$, if they lie in the same stratum and
there exists $(R, \pi) \in G$ such that
$(R, \pi) \cdot \calD = \calD'$, that is,
\begin{equation}\label{eq:sym-equiv}
  R\,\Delta\bm{r}_{k} = \Delta\bm{r}_{\pi(k)}'
  \qquad \text{for all } k.
\end{equation}
\end{definition}

Write $d_{jk} = |\bm{r}_k - \bm{r}_j|$ for the distance between atoms $j$ and~$k$, and $\theta_{ijk}$ for the bond angle at~$j$ between neighbors~$i$ and~$k$.

Rotations and reflections change the displacement vectors but leave their mutual inner products unchanged.
Collecting those inner products therefore retains exactly the $O(3)$ invariants of the geometry. Two environments have the same inner products if and only if they lie in the same $O(3)$ orbit.
\begin{definition}[Gram matrix]
\label{def:gram}
The Gram matrix of a local environment is
\begin{equation}\label{eq:gram}
  G_{ik} = \Delta\bm{r}_i \cdot \Delta\bm{r}_k
  = d_i\, d_k\, \cos\theta_{ijk},
  \qquad i, k = 1, \ldots, n,
\end{equation}
where $d_i = |\Delta\bm{r}_i|$ and $\theta_{ijk}$ is the angle between $\Delta\bm{r}_i$ and $\Delta\bm{r}_k$.
\end{definition}

Within a stratum~$\calE_{z_0, \bm{z}}$, the type tuple~$\bm{z}$ is
fixed, so the Gram matrix~$G$ alone encodes all geometric
information.  The type-preserving permutation
group~$\Gamma$ is determined by the stratum.

For any atom~$k$, we write $G^{(k)}$ for the Gram matrix of atom~$k$'s own local environment.
Its entries are $G^{(k)}_{ab} = \Delta\bm{r}_a^{(k)} \cdot \Delta\bm{r}_b^{(k)}$ with $\Delta\bm{r}_a^{(k)} = \bm{r}_a - \bm{r}_k$.
\begin{definition}[Invariant function]
\label{def:inv-func}
A function $\varepsilon : \calE^{(r)} \to \RR$ is \emph{invariant}
under the symmetry group~$G$ if
$\varepsilon(\calD) = \varepsilon(\calD')$ whenever
$\calD \sim \calD'$.
\end{definition}
\begin{definition}[Target function space]
\label{def:target-space}
Denote by $\calF_{R_c}^k$ the set of functions
$\varepsilon : \calE^{(R_c)} \to \RR$ satisfying:
\begin{enumerate}
\item $G$-invariance;
\item $C^k$-smoothness on each stratum: the restriction of $\varepsilon$ to each $\calE_{z_0, \bm{z}}^{(R_c)} \subset \RR^{3n}$ is a $C^k$ function with respect to the Euclidean differentiable structure;
\item cross-stratum $C^k$ matching: if $\calD = (\Delta\bm{r}_1, \ldots, \Delta\bm{r}_n)$ lies in $\calE_{z_0, \bm{z}}^{(R_c)}$ and $\calD^-$ denotes the environment with the $m$-th neighbor deleted, then $\varepsilon(\calD) \to \varepsilon(\calD^-)$ together with all partial derivatives of order $\le k$ in the retained coordinates, while all derivatives involving $\Delta\bm{r}_m$ tend to zero, as $|\Delta\bm{r}_m| \to R_c$.
\end{enumerate}
\end{definition}

Condition~3 of \cref{def:target-space} ensures that $\varepsilon$ is well defined and is $C^k$ across the strata of different dimension that $\calE^{(R_c)}$ comprises. It is met, for instance, by $\varepsilon(\calD) = \sum_{i=1}^{n} s(|\Delta\bm{r}_i|)\, u(z_0, z_i, |\Delta\bm{r}_i|)$ with $u$ a $C^k$ function and $s : [0, R_c] \to [0,1]$ satisfying $s^{(m)}(R_c) = 0$ for $0 \le m \le k$.

\subsection{Machine learning interatomic potentials}
\label{sec:hgnn}

An MLIP approximates the
local energy $\varepsilon(\calD_j^{(R_c)})$ by
\begin{equation}\label{eq:mlip}
  \varepsilon_\theta(\calD_j^{(R_c)}) = f_\theta\bigl(\Phi_\theta(\calD_j^{(R_c)})\bigr),
\end{equation}
where $\Phi_\theta : \calE^{(R_c)} \to \RR^d$ is a \emph{representation}
that maps the local environment to a $d$-dimensional feature
vector, and $f_\theta : \RR^d \to \RR$ is a readout MLP
that maps the representation to an atomic energy
contribution.  Both carry learnable parameters, collectively denoted~$\theta$, and the total
energy is
\begin{equation}\label{eq:mlip-decomp}
  E_\theta(\{\bm{r}_i,\, z_i\}_{i=1}^N)
  = \sum_{i=1}^{N} \varepsilon_\theta(\calD_i^{(R_c)}).
\end{equation}
The ansatz builds (P2) and (P3) into the architecture.
The total energy is a sum of atomic contributions, each depending only on the $R_c$-environment.
The representation must be $G$-invariant so that (P1) holds.
Smoothness of $\Phi_\theta$ and~$f_\theta$, including at the cutoff boundary, yields (P4).
The quality of the MLIP hinges on the
expressiveness of the representation.

Two architectural families instantiate this ansatz. For
\emph{linear} models such as ACE~\cite{drautz2019ace,dusson2022ace}
and MTP~\cite{shapeev2016mtp}, $\Phi_\theta$ provides a complete
polynomial basis $\{B_\alpha\}$ and $f_\theta = \sum_\alpha c_\alpha B_\alpha$
is a linear combination.
Universal approximation then follows from the density of the basis.
For \emph{nonlinear} models, $f_\theta$ is an MLP and the relevant criterion is not
basis density but \emph{injectivity} of
$\Phi_\theta$ on symmetry orbits.  \cref{sec:gnn-arch} describes specific GNN-based realizations of $\Phi_\theta$.
\begin{figure}[t]
\centering
\includegraphics[width=\textwidth]{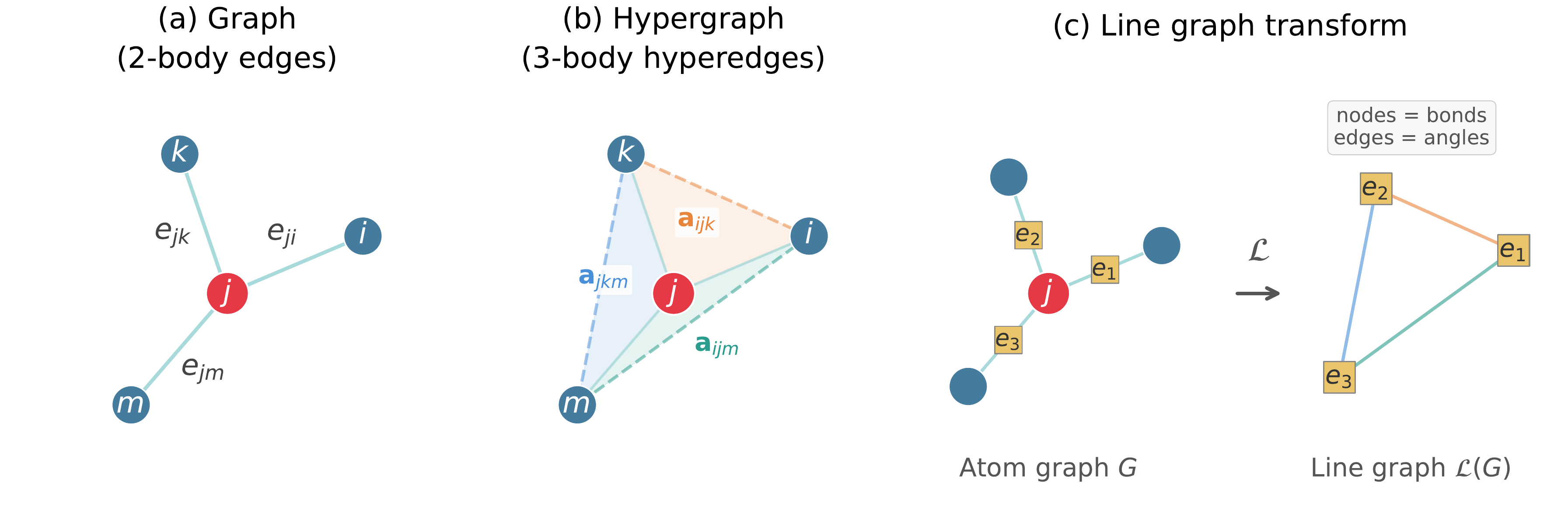}
\caption{Graph structures for atomistic modeling.
  (a)~A graph with 2-body edges, in which each edge~$e_{ji}$
  connects a pair of atoms and carries the interatomic
  distance.
  (b)~A hypergraph with 3-body hyperedges, in which each
  hyperedge~$\bm{a}_{ijk}$ (shaded triangle) connects a
  triplet $(j,i,k)$ and carries the distances and bond
  angle.  The three hyperedges are shown in orange, blue,
  and green.  Dashed lines indicate the outer edge of each
  triangle.
  (c)~The line graph transform~$\calL$, under which each edge in the
  atom graph~$G$ becomes a node (square) in~$\calL(G)$,
  and two nodes in~$\calL(G)$ are connected if the
  corresponding edges share an atom.  The edges in~$\calL(G)$ use the same colors as the
  hyperedges in~(b), showing the correspondence between
  3-body hyperedges and line graph edges.
  ALIGNN, CHGNet, and DPA3 use this two-graph structure to
  implement 3-body message passing.}
\label{fig:architecture}
\end{figure}

\subsection{Graph neural networks for atomistic systems}
\label{sec:gnn-arch}

Fix a center atom~$j$ with neighbor set
$\calN(j) = \{k : k \neq j,\; |\bm{r}_k - \bm{r}_j| < r_c\}$,
where $r_c$ is the per-layer cutoff.
Each atom~$k$ carries a node feature vector
$\bm{n}_k \in \RR^{d}$.

An atomic system is naturally represented as a \emph{graph}, see \cref{fig:architecture}(a), in which
atoms are nodes and pairs within cutoff~$r_c$ are connected by
edges.  A GNN updates node features by
\emph{message passing}.  At each layer, every node~$j$
collects information from its neighbors, aggregates it, and
updates its own feature.  A single message-passing layer has
the general form
\begin{equation}\label{eq:mp}
  \bm{n}_j^{(l+1)} = \psi\Bigl(
    \bm{n}_j^{(l)},\;
    \bigoplus_{k \in \calN(j)}
      s_a(d_{jk})\,
      \phi\bigl(\bm{n}_j^{(l)},\, \bm{n}_k^{(l)},\,
        d_{jk}\bigr)
  \Bigr),
\end{equation}
where $\phi$ is a message function, $\bigoplus$ is a
permutation-invariant aggregation, typically a summation,
$\psi$ is an update function, and $s_a \colon [0, r_c] \to [0, 1]$ is a smooth switch function, positive on $[0, r_c)$ with $s_a(r_c) = s_a'(r_c) = 0$, so that a neighbor's contribution leaves the energy and the forces continuous as it crosses the cutoff boundary.
Both $\phi$ and $\psi$ are
parametrized by MLPs.  The representation must be invariant
under both $O(3)$ and $\Gamma$.  In~\eqref{eq:mp},
$O(3)$-invariance is achieved by using only the scalar
distance~$d_{jk}$ as geometric input, and
$\Gamma$-invariance by the permutation-invariant
aggregation.

In the context of MLIPs, GNNs serve as representations.  The
output feature $\bm{n}_j^{(L)}$ after $L$ layers serves as the
representation, written $\Phi^{(L)}$.  The key advantage of
stacking layers is that the receptive field grows.  After $L$
layers, the feature of atom~$j$ depends on all atoms
reachable by $L$ hops, each shorter than~$r_c$.
This allows the model to
capture interactions at range~$R_c > r_c$ without using a
large single-layer cutoff, which would be computationally
expensive.

However, the framework~\eqref{eq:mp} uses only pairwise
geometric information: the scalar distance~$d_{jk}$
along each edge.
Pozdnyakov et~al.~\cite{pozdnyakov2020incompleteness} showed
that such two-body representations are incomplete, and that
even adding three-body invariant correlations does
not fully resolve the problem, since certain configurations with
identical distance and angle distributions are
geometrically distinct.
\subsection{Hypergraph neural networks}

A \emph{hypergraph}, see~\cref{fig:architecture}(b), generalizes
a graph by allowing \emph{hyperedges} that connect more
than two nodes simultaneously. An HGNN passes
messages along such hyperedges.
In the architecture considered here, each hyperedge contains a center atom~$j$ and two of its neighbors $i$ and~$k$.
It carries the 3-body information, namely the bond-angle cosine~$\cos\theta_{ijk}$ in addition to the distances~$d_{ji}$ and~$d_{jk}$.
The message-passing update aggregates over all hyperedges
incident to a given node, analogous to~\eqref{eq:mp} but with
each message depending on a triplet rather than a pair.  This
gives the HGNN access to angular information that standard
two-body GNNs lack. Unlike the GNN framework of \cref{sec:gnn-arch}, which maintains only node and edge features, the HGNN computes
three-body angle features at each layer.  However, only node
features are carried across layers, and the angle features
are recomputed from scratch at each layer. The HGNN layer comprises an angle-feature calculation and an aggregation operation, defined as follows.
\paragraph{Angle features}
For each ordered pair $i, k \in \calN(j)$ with $i \neq k$:
\begin{equation}\label{eq:angle-feat}
  \bm{a}_{j;\,i,k} = \rho_3\Bigl(
  \bm{n}_j^{(\mathrm{in})},\;
  \bm{n}_i^{(\mathrm{in})},\;
  \bm{n}_k^{(\mathrm{in})},\;
  d_{ji},\;
  d_{jk},\;
    \cos\theta_{ijk}
  \Bigr) \in \RR^{d_a},
\end{equation}
where $\rho_3$ is an MLP and $d_a$ is the angle-feature
dimension.

\paragraph{Aggregation}
\begin{equation}\label{eq:aggregation}
  \bm{n}_j^{(\mathrm{out})} = \psi_3\Bigl(
    \sum_{\substack{i, k \in \calN(j) \\ i \neq k}}
      s_a(d_{ji})\, s_a(d_{jk})\, \bm{a}_{j;\,i,k}
  \Bigr),
\end{equation}
where $\psi_3$ is an MLP and $s_a$ is the switch function of~\eqref{eq:mp}.

\paragraph{Multi-layer stacking}
An $L$-layer HGNN stacks $L$ such layers.  The input node features
for layer~1 are type embeddings: $\bm{n}_k^{(0)} =
\mathrm{type\_embed}(z_k)$.  The output of layer~$l$ provides the
input for layer~$l+1$, in that
$\bm{n}_k^{(l)} \equiv \bm{n}_k^{(\mathrm{out}, l)}$ serves as
$\bm{n}_k^{(\mathrm{in})}$ in \eqref{eq:angle-feat} for the
next layer.  The \emph{$L$-layer HGNN representation}~$\Phi^{(L)}$ maps the $L$-hop surroundings of atom~$j$ to~$\bm{n}_j^{(L)}$.
Its domain is formalized in \cref{def:l-hop}.
The total energy is
$E_\theta = \sum_j f_\theta(\bm{n}_j^{(L)})$, where
$f_\theta$ is a readout MLP.
\subsection{DPA3: GNN on a line graph series}
\label{sec:dpa3-arch}

Three architectures used in practice fall within the scope of our results: ALIGNN~\cite{choudhary2021alignn}, CHGNet~\cite{deng2023chgnet}, and DPA3~\cite{zhang2025dpa3}.
All three maintain node, edge, and angle features and update them in the order angle $\to$ edge $\to$ node.
They differ, however, in the specific form of the message functions and in how angular information is propagated to the nodes.
Among these differences, the last one is particularly consequential.
As we show in~\cref{sec:uat}, it determines which of the three architectures inherits universal approximation.
Specifically, DPA3 and CHGNet build their messages from MLPs, whereas ALIGNN relies on single gated linear layers within its edge-gated convolutions.
We present DPA3 in detail here, as it contains the core ingredients.
CHGNet and ALIGNN are described in \cref{sm:chgnet,sm:alignn} of the supplementary materials, respectively.

DPA3~\cite{zhang2025dpa3} adopts the same angle $\to$ edge
$\to$ node aggregation pattern as ALIGNN and CHGNet,
formalizing it through a \emph{line graph series} (LiGS).
Starting from the atomic graph $G^{(1)} \equiv G$, one
recursively applies the line graph transform to obtain a
series $G^{(1)}, G^{(2)}, G^{(3)}, \ldots$, where
$G^{(k)} = \calL(G^{(k-1)})$.  In particular,
$G^{(2)} = \calL(G)$.  The vertices and edges of
successive graphs correspond to progressively higher-order
geometric entities: $G^{(1)}$ has atoms/bonds, $G^{(2)}$
has bonds/angles, $G^{(3)}$ has angles/dihedrals, and so
forth. The LiGS thus naturally extends to arbitrary order~$K$.  In
practice, DPA3 uses $K = 2$, limiting itself to 3-body angular features, although $K = 3$ with 4-body dihedral features has
also been explored in~\cite{zhang2025dpa3}.

DPA3 uses \emph{directed} edges, so
each bond $\{j,i\}$ gives rise to two directed edges
$(j,i)$ and $(i,j)$, which carry independent features.
DPA3 maintains three feature types: node features
$\bm{n}_j^{(l)} \in \RR^d$, edge features
$\bm{e}_{ji}^{(l)} \in \RR^d$ (which serve as vertex features of the graph~$G^{(2)}$), and angle features
$\bm{a}_{j;i,k}^{(l)} \in \RR^d$ (which serve as edge features
of the graph~$G^{(2)}$).
The initial representations are constructed from type embeddings for nodes, distance embeddings for edges, and angle embeddings for angles.

Each DPA3 layer applies four updates in sequence, each consuming the most recent value of its inputs.\footnote{This ordering is not described in~\cite{zhang2025dpa3}.  It is implemented in the DeePMD-kit source as the \texttt{sequential\_update} option of the DPA3 descriptor, enabled by setting \texttt{sequential\_update = True}.  The default path instead computes all four updates from the layer-$l$ features and combines them additively.}

\smallskip\noindent\emph{1: Edge self-message.}
The edge features first load the node features of both endpoints:
\begin{equation}
  \bm{e}_{ji}^{(l+\frac{1}{2})} = \bm{e}_{ji}^{(l)}
  + \delta_s^{(1)}\, \phi_s^{(1)}\bigl(\bm{n}_j^{(l)},\,
    \bm{n}_i^{(l)},\, \bm{e}_{ji}^{(l)}\bigr).
  \label{eq:dpa3-eself}
\end{equation}

\smallskip\noindent\emph{2: Angle self-message.}
The angle features are updated from the two node-enriched edges that span them:
\begin{equation}
  \bm{a}_{j;i,k}^{(l+1)} = \bm{a}_{j;i,k}^{(l)}
  + \delta_s^{(2)}\, \phi_s^{(2)}\bigl(
    \bm{e}_{ji}^{(l+\frac{1}{2})},\, \bm{e}_{jk}^{(l+\frac{1}{2})},\,
    \bm{a}_{j;i,k}^{(l)}\bigr).
  \label{eq:dpa3-aself}
\end{equation}

\smallskip\noindent\emph{3: Angle $\to$ edge.}
Each edge aggregates over the angles centered at its tail atom:
\begin{equation}
  \bm{e}_{ji}^{(l+1)} = \bm{e}_{ji}^{(l+\frac{1}{2})}
  + \delta_u^{(2)}\, \phi_u^{(2)}\Bigl(
    \sum_{k \in \calN^{(3)}(j) \setminus \{i\}}
    w_{jik}\, \phi_c^{(2)}\bigl(
      \bm{e}_{ji}^{(l+\frac{1}{2})},\, \bm{e}_{jk}^{(l+\frac{1}{2})},\,
      \bm{a}_{j;i,k}^{(l+1)}\bigr)
  \Bigr),
  \label{eq:dpa3-a2e}
\end{equation}
where $\calN^{(3)}(j)$ denotes the angle-cutoff neighbors
of~$j$ ($d_{jk} < r_c^{(3)}$), $w_{jik}$ is a switch
prefactor, and $\delta_u^{(2)}$ is a trainable step size.

\smallskip\noindent\emph{4: Edge $\to$ node.}
Each node aggregates over its edges:
\begin{equation}
  \bm{n}_j^{(l+1)} = \bm{n}_j^{(l)}
  + \delta_u^{(1)}\, \phi_u^{(1)}\Bigl(
    \sum_{i \in \calN^{(2)}(j)}
    w_{ji}\, \phi_c^{(1)}\bigl(
      \bm{n}_j^{(l)},\, \bm{n}_i^{(l)},\,
      \bm{e}_{ji}^{(l+1)}\bigr)
  \Bigr),
  \label{eq:dpa3-e2n}
\end{equation}
where $\calN^{(2)}(j)$ denotes the edge-cutoff neighbors
of~$j$ ($d_{ji} < r_c^{(2)}$).  DPA3 uses two cutoff radii:
$r_c^{(3)} \le r_c^{(2)}$.
All $\phi$ functions ($\phi_s^{(1)}$, $\phi_s^{(2)}$, $\phi_c^{(2)}$,
$\phi_u^{(2)}$, $\phi_c^{(1)}$, $\phi_u^{(1)}$)
are MLPs.
The prefactors are products of switch-function values, $w_{jik} = s_a^{(3)}(d_{ji})\, s_a^{(3)}(d_{jk})$ and $w_{ji} = s_a^{(2)}(d_{ji})$, where $s_a^{(2)}$ and $s_a^{(3)}$ are switch functions as in~\eqref{eq:mp} for the cutoffs $r_c^{(2)}$ and $r_c^{(3)}$.
The $\delta$ step sizes are trainable scalars.
The total energy is
$E_\theta = \sum_j \mathrm{MLP}(\bm{n}_j^{(L)})$.


As noted above, the LiGS extends to arbitrary order~$K$.
Empirically, the DPA3 paper found that $K = 2$
yields optimal performance, while $K = 3$ offers no further improvement and may even degrade accuracy.
Our completeness theory provides an explanation for this observation.
On generic configurations, 3-body information already suffices for universal approximation, so higher body-order features are theoretically unnecessary.

\section{Representation Completeness}
\label{sec:completeness}

\subsection{Completeness and universal approximation}

As described in \cref{sec:hgnn}, most MLIP architectures
decompose the local energy as
$\varepsilon(\calD) = f(\Phi(\calD))$, where
$\Phi : \calE^{(R_c)} \to \RR^d$ is a $G$-invariant representation
and $f : \RR^d \to \RR$ is a readout network.

\begin{definition}[Completeness]
\label{def:completeness}
A \emph{complete} representation at cutoff~$r$ is a continuous $G$-invariant map $\Phi : \calE^{(r)} \to \RR^d$ that separates orbits: for any two local environments $\calD_1, \calD_2 \in \calE^{(r)}$,
\begin{equation}\label{eq:completeness}
  \Phi(\calD_1) = \Phi(\calD_2) \quad\text{implies}\quad
  \calD_1 \sim \calD_2.
\end{equation}
\end{definition}

Since $\sim$ includes both rotations/reflections and
type-preserving permutations, completeness requires
separating orbits under the full symmetry group~$G$.

\begin{definition}[Universal approximation]
\label{def:uat}
A parametrized family $\{\varepsilon_\theta\}$ of continuous functions on $\calE^{(R_c)}$ is a \emph{universal approximator} for $\calF_{R_c}^0$ on a subset $A \subseteq \calE^{(R_c)}$ if for every $\varepsilon \in \calF_{R_c}^0$, every compact $K \subseteq A$, and every $\delta > 0$ there is a parameter~$\theta$ with $\|\varepsilon_\theta - \varepsilon\|_{C^0(K)} < \delta$.
We omit the subset when $A = \calE^{(R_c)}$.
\end{definition}

Completeness constrains the representation alone, whereas universal approximation constrains the architecture $f_\theta \circ \Phi$ as a whole.
That an incomplete representation caps the achievable accuracy is immediate.
The substance of the next result is the converse, that completeness suffices.

\begin{theorem}[Completeness $\Leftrightarrow$ UAT]
\label{thm:completeness-uat}
Let $\Phi : \calE^{(R_c)} \to \RR^d$ be a continuous $G$-invariant representation, and let $\{f_\theta\}$ be a family of MLPs, which by~\cite{cybenko1989approx,hornik1989mlp} approximate every continuous function on~$\RR^d$ uniformly on compact sets.
Then the architecture $\varepsilon_\theta = f_\theta \circ \Phi$ is a universal approximator for $\calF_{R_c}^0$ in the sense of \cref{def:uat} if and only if $\Phi$ is complete.
\end{theorem}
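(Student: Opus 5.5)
I will prove the two directions separately, working stratum by stratum, since continuity, compactness and the $C^0$ norm in \cref{def:uat} are all defined on the strata $\calE_{z_0,\bm z}^{(R_c)}$.

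\emph{Necessity (UAT $\Rightarrow$ completeness).} I argue by contraposition. Suppose $\Phi(\calD_1)=\Phi(\calD_2)$ but $\calD_1\not\sim\calD_2$. Then every $\varepsilon_\theta=f_\theta\circ\Phi$ takes the same value at $\calD_1$ and $\calD_2$. I need some $\varepsilon\in\calF^0_{R_c}$ with $\varepsilon(\calD_1)\neq\varepsilon(\calD_2)$.

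If the two environments lie in different strata, or have different neighbor counts, a suitable choice is
\[
\varepsilon(\calD)=\sum_i s(|\Delta\bm r_i|)\,u(z_0,z_i,|\Delta\bm r_i|),
\]
as in the example after \cref{def:target-space}, with $u$ chosen to distinguish the two environments.

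For the general case, including two environments in the same stratum that are not equivalent, I would use an invariant separating function built from sorted Gram-matrix data. Concretely, I would take
\[
\varepsilon(\calD)=\min_{g\in G} d\bigl(g\cdot\calD,\;\calD_1^{\mathrm{cut}}\bigr),
\]
suitably smoothed by the switch $s$ so that the cross-stratum matching of \cref{def:target-space} holds. The delicate point is that this matching requires the function to extend continuously as neighbors leave. A cleaner construction is a sum over neighbor subsets weighted by products $\prod_i s(|\Delta\bm r_i|)$, each factor multiplied by an orbit-distance to the corresponding restriction of $\calD_1$. I will only sketch this step.

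Taking $K=\{\calD_1,\calD_2\}$ and $\delta<|\varepsilon(\calD_1)-\varepsilon(\calD_2)|/2$ then rules out approximation. \textbf{Main obstacle.} Producing this separating $\varepsilon$ inside $\calF^0_{R_c}$, in particular satisfying the boundary matching.

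\emph{Sufficiency (completeness $\Rightarrow$ UAT).} Fix $\varepsilon\in\calF^0_{R_c}$, a compact $K$ and $\delta>0$. Compactness in the disjoint union means $K$ meets only finitely many strata, with a compact piece in each. Since $\Phi$ is continuous, $\Phi(K)$ is compact in $\RR^d$.

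Define $h$ on $\Phi(K)$ by $h(\Phi(\calD))=\varepsilon(\calD)$. This is well defined: if $\Phi(\calD)=\Phi(\calD')$, completeness gives $\calD\sim\calD'$, and $G$-invariance of $\varepsilon$ gives equal values.

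Next I show $h$ is continuous. Take a closed set $C\subset\RR$. Its preimage is $h^{-1}(C)=\Phi(K\cap\varepsilon^{-1}(C))$, which is the continuous image of a compact set, hence compact and therefore closed. By Tietze's theorem $h$ extends to a continuous function on $\RR^d$. Since MLPs are dense on compact sets by~\cite{cybenko1989approx,hornik1989mlp}, there is $f_\theta$ with $|f_\theta-h|<\delta$ on $\Phi(K)$, which gives $\|f_\theta\circ\Phi-\varepsilon\|_{C^0(K)}<\delta$.

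One subtlety is that environments in different strata must not collide with inconsistent values. Completeness already excludes $\Phi$-equal environments from different strata, since $\sim$ requires membership in the same stratum, so no collision can occur. This direction is routine; the real work lies in the necessity construction above.
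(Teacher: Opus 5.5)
Your sufficiency direction is correct and is essentially the paper's argument. Showing that $h^{-1}(C)=\Phi(K\cap\varepsilon^{-1}(C))$ is closed is equivalent to the paper's remark that $\Phi|_K$ is a quotient map.

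The gap is in necessity, at the point you flag yourself. The whole substance of that direction is producing an $\varepsilon\in\calF^0_{R_c}$ that separates $\calD_1$ from $\calD_2$ \emph{and} satisfies the cross-stratum matching, and you leave this as a sketch.
\begin{itemize}
\item Your first candidate $\min_{g\in G} d(g\cdot\calD,\calD_1)$ is defined only on the stratum of $\calD_1$. ``Suitably smoothed by the switch'' does not say how to extend it to other strata.
\item In the subset-sum variant, ``the corresponding restriction of $\calD_1$'' is not well defined when $\calD$ has a different neighbor set. Nothing in the sketch guarantees that the resulting sums differ at $\calD_1$ and $\calD_2$.
\item Your two-body function for the different-strata case misses one case. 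If both environments have no neighbors and different center types, the sum is empty for both, so you need an additive center term $c(z_0)$.
\end{itemize}

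The paper closes the gap for all pairs at once using the complete representation $\Phi^{*}$ of \cref{thm:existence}. It takes the switched Gram matrix $s(d_i)s(d_k)G_{ik}$, zero-padded to fixed per-type slots, feeds it into generators of the $\prod_t S_{M_t}$-invariant polynomials, and appends the center type. Its components match across strata because the switched entries of a departing neighbor tend to zero. Then $\varepsilon(\calD)=\min\{1,\|\Phi^{*}(\calD)-\Phi^{*}(\calD_2)\|\}$ lies in $\calF^0_{R_c}$, vanishes at $\calD_2$, and is positive at $\calD_1$.

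Alternatively, your subset idea can be made rigorous for the same-stratum case. Suppose $\calD_1\not\sim\calD_2$ both lie in $\calE^{(R_c)}_{z_0,\bm z}$, so $n\ge 1$. Put $\varepsilon(\calD)=0$ when the center type is $\ne z_0$, and otherwise
$\varepsilon(\calD)=\sum_S\bigl(\prod_{i\in S}s(|\Delta\bm r_i|)\bigr)\rho(\calD|_S)$.
Here $S$ ranges over neighbor subsets whose multiset of types equals that of $\bm z$, and $\rho(\bm x)=\min_{g\in G}d(g\cdot\bm x,\calD_2)$, which is $1$-Lipschitz and bounded on the closed balls.
\begin{itemize}
\item This $\varepsilon$ is $G$-invariant and continuous on each stratum.
\item As a neighbor departs, the terms containing it vanish because $s\to0$ while $\rho$ stays bounded. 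The remaining terms are exactly those of $\varepsilon(\calD^-)$.
\item On the stratum itself only the full subset contributes. Hence $\varepsilon(\calD_2)=0<\varepsilon(\calD_1)$, since the orbit $G\cdot\calD_2$ is compact and therefore closed.
\end{itemize}
Combined with your two-body function (plus $c(z_0)$) for different strata, this gives a more elementary proof than the paper's. The paper's route relies on Noether's finite generation theorem, but it handles every pair uniformly.
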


Continuity of~$\Phi$ and compactness of test sets are understood with respect to the Euclidean metric on each stratum.

\begin{proof}
\emph{Completeness $\Rightarrow$ UAT.}
Let $\varepsilon \in \calF_{R_c}^0$ and $K \subset \calE^{(R_c)}$ be compact.
Since $\varepsilon$ is $G$-invariant and $\Phi$ is complete, $\Phi(\calD_1) = \Phi(\calD_2)$ forces $\calD_1 \sim \calD_2$ and hence $\varepsilon(\calD_1) = \varepsilon(\calD_2)$, so $f(\Phi(\calD)) = \varepsilon(\calD)$ defines a function~$f$ on the compact set $\Phi(K)$.
The restriction $\Phi|_K$ is a continuous surjection from a compact space onto a Hausdorff one, hence a quotient map.
The composition $f \circ \Phi|_K = \varepsilon|_K$ is continuous, so $f$ is continuous.
By the Tietze extension theorem, $f$ extends to a continuous function on~$\RR^d$, and by MLP universality, for any $\delta>0$, there exists $f_\theta$ that approximates that extension on $\Phi(K)$ with $\|f_\theta \circ \Phi - \varepsilon\|_{C^0(K)} < \delta$.

\smallskip\noindent\emph{Converse (incompleteness $\Rightarrow$ not UAT).}
If $\Phi$ is incomplete, there exist
$\calD_1 \not\sim \calD_2$ with
$\Phi(\calD_1) = \Phi(\calD_2)$.  For any readout network~$f$,
$f(\Phi(\calD_1)) = f(\Phi(\calD_2))$, so the architecture
cannot distinguish $\calD_1$ and~$\calD_2$.  It remains to
exhibit a target $\varepsilon \in \calF_{R_c}^0$ with
$\varepsilon(\calD_1) \neq \varepsilon(\calD_2)$.

Let $\Phi^{*}$ be a complete representation whose components lie in $\calF_{R_c}^0$.
Such a representation is constructed in \cref{thm:existence} via the switched Gram matrix, without relying on the present theorem, so no circularity arises.
Set $\varepsilon(\calD) = \min\bigl\{1, \|\Phi^{*}(\calD) - \Phi^{*}(\calD_2)\|\bigr\}$.
Then $\varepsilon$ is $G$-invariant and continuous because $\Phi^{*}$ is, and it satisfies the cross-stratum matching of \cref{def:target-space} because the components of~$\Phi^{*}$ do, so $\varepsilon \in \calF_{R_c}^0$.
Moreover $\varepsilon(\calD_2) = 0$, while $\varepsilon(\calD_1) > 0$, because completeness and $\calD_1 \not\sim \calD_2$ give $\Phi^{*}(\calD_1) \neq \Phi^{*}(\calD_2)$.

Taking $K = \{\calD_1, \calD_2\}$ (compact) and
$\delta = \varepsilon(\calD_1)/2$, every model
$f_\theta \circ \Phi$ satisfies
$\|f_\theta \circ \Phi - \varepsilon\|_{C^0(K)}
\ge \varepsilon(\calD_1)/2 > 0$,
so the architecture is not a universal approximator.
\end{proof}
\subsection{Generic configurations and Gram matrix reconstruction}

This subsection is purely information-theoretic.  We show that
complete representations exist, without reference to
any neural network architecture.
Consider a local environment $\calD$ with center type~$z_0$ and
$n$~neighbors at displacements
$\Delta\bm{r}_1, \ldots, \Delta\bm{r}_n \in B_{r_c} \setminus
\{0\}$ with types $z_1, \ldots, z_n$.

\begin{definition}[Generic configuration]
\label{def:generic}
A local environment $\calD \in \calE^{(r)}$ is \emph{generic} if no two same-type neighbors are equidistant from the center, that is, if $z_i = z_k$ and $i \neq k$ imply $|\Delta\bm{r}_i| \neq |\Delta\bm{r}_k|$.
We write $\calE^{*} \subset \calE^{(r)}$ for the set of generic environments.
\end{definition}

On each stratum, the complement of $\calE^{*}$ is the union, over the finitely many same-type pairs $i \neq k$, of the zero sets of the nonzero polynomials $|\Delta\bm{r}_i|^2 - |\Delta\bm{r}_k|^2$.
The zero set of a polynomial that does not vanish identically is closed and has zero Lebesgue measure in $(\RR^3)^n$.
The complement of $\calE^{*}$ is a finite union of such sets, so it is closed and has zero Lebesgue measure.
Hence $\calE^{*}$ is open and dense in $\calE^{(r)}$.

A complete representation has two requirements, invariance under~$G$ and separation of distinct $G$-orbits.
The Gram matrix meets the first for the $O(3)$ factor, and the second amounts to the converse, that it determines the displacement vectors up to a single orthogonal transformation.
That is the content of \cref{lem:gram-recon}.
The permutation factor~$\Gamma$ is handled separately in \cref{lem:gram-orbit}.

\begin{lemma}[Gram matrix reconstruction]
\label{lem:gram-recon}
If\/ $\bm{A}, \bm{B} \in \RR^{n \times 3}$ satisfy
$\bm{A}\bm{A}^T = \bm{B}\bm{B}^T$, then there exists
$R \in O(3)$ such that $\bm{B} = \bm{A}R$.
\end{lemma}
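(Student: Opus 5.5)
The plan is to build the orthogonal map directly on the row space and then extend it to all of~$\RR^3$. Write $\bm{a}_1,\ldots,\bm{a}_n \in \RR^3$ for the rows of~$\bm{A}$ (viewed as row vectors) and $\bm{b}_1,\ldots,\bm{b}_n$ for the rows of~$\bm{B}$. The hypothesis $\bm{A}\bm{A}^T = \bm{B}\bm{B}^T$ says precisely that $\bm{a}_i\cdot\bm{a}_k = \bm{b}_i\cdot\bm{b}_k$ for all $i,k$. Set $V = \operatorname{span}\{\bm{a}_i\} = \operatorname{row}(\bm{A})$ and $W = \operatorname{row}(\bm{B})$, both subspaces of~$\RR^3$.

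\emph{Step 1: a well-defined isometry $V \to W$.} I will define $T : V \to W$ by $T\bigl(\sum_i c_i \bm{a}_i\bigr) = \sum_i c_i \bm{b}_i$. The key check is well-definedness: if $\sum_i c_i\bm{a}_i = \sum_i c_i'\bm{a}_i$, put $\bm{c} = \bm{c}-\bm{c}'$. Then
\[
  \Bigl|\sum_i c_i\bm{b}_i\Bigr|^2 = \bm{c}^T\bm{B}\bm{B}^T\bm{c} = \bm{c}^T\bm{A}\bm{A}^T\bm{c} = \Bigl|\sum_i c_i\bm{a}_i\Bigr|^2 = 0 .
\]
Hence $\ker(\bm{A}^T) = \ker(\bm{B}^T)$, and $T$ is well defined. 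The map $T$ is linear, and the same computation shows that it preserves norms, hence inner products. It is also surjective onto~$W$, so $\dim V = \dim W$.

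\emph{Step 2: extend to $O(3)$.} Choose orthonormal bases of $V^\perp$ and $W^\perp$; these have equal dimension $3-\dim V$. Map one basis onto the other to obtain an isometry $V^\perp \to W^\perp$, and take the direct sum with~$T$. The result is a linear isometry $Q$ of~$\RR^3$ with $Q\bm{a}_i = \bm{b}_i$ for all~$i$ (as column vectors).

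\emph{Step 3: translate to matrices.} Let $R = Q^T$ in the row-vector convention, so that $\bm{b}_i = \bm{a}_i R$. Then $R \in O(3)$ and $\bm{B} = \bm{A}R$.

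The only nontrivial point is the well-definedness in Step~1, which is exactly where the hypothesis is used. Rank-deficient cases ($n<3$, or coplanar/collinear rows) are absorbed by the complement extension in Step~2, so no case split is needed. An alternative route goes through the SVD $\bm{A} = U\Sigma V_A^T$, $\bm{B} = U\Sigma V_B^T$ with a shared $U,\Sigma$, giving $R = V_AV_B^T$. However, choosing a common $U$ requires the same care with repeated and zero singular values, so I prefer the direct construction above.
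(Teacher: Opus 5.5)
Your proof is correct, and it takes a genuinely different route from the paper's.

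The paper factors both matrices through the canonical square root $P = (\bm{A}\bm{A}^T)^{1/2}$, writing $\bm{A} = PU$ and $\bm{B} = PV$ with $U^TU = V^TV = I_3$. In the rank-deficient case it chooses the orthonormal completions so that $\mathrm{col}\,U = \mathrm{col}\,V$. It then concludes $UU^T = VV^T$, since both are the orthogonal projection onto that common subspace, and sets $R = U^TV$. You instead build the isometry $\mathrm{row}(\bm{A}) \to \mathrm{row}(\bm{B})$ directly and extend it by an arbitrary isometry of the orthogonal complements.

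The two arguments hinge on the same fact. The paper states it as $\mathrm{col}\,\bm{A} = \mathrm{col}\,P = \mathrm{col}\,\bm{B}$, and you state it as $\ker\bm{A}^T = \ker\bm{B}^T$. The paper's freedom in choosing the completions plays the role of your freedom in choosing the map $\mathrm{row}(\bm{A})^\perp \to \mathrm{row}(\bm{B})^\perp$.

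The paper's version gives an explicit formula for $R$ in terms of SVD factors. Yours is more elementary, shows exactly where the hypothesis enters, and applies verbatim for every $n$. The paper's factorization needs an $n \times 3$ matrix with orthonormal columns, so it tacitly assumes $n \ge 3$. The one- and two-neighbor strata, where the lemma is also invoked through \cref{lem:gram-orbit}, then rely on an unstated padding of $\bm{A}$ and $\bm{B}$ by zero rows.

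On your closing remark: the paper's variant does not need a common left singular basis, because $P$ is canonical. Repeated singular values therefore cause no trouble there, and only the zero singular values call for the completion argument.
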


\begin{proof}
It follows from the thin SVD and $\bm{A}\bm{A}^T=\bm{B}\bm{B}^T$ that
\[
\bm{A}=PU\qquad \bm{B}=PV
\]
with $P=(\bm{A}\bm{A}^T)^{1/2}$ and $U^TU=V^TV=I_{3}$.
When $\mathrm{rank}\,\bm{A} < 3$, the factors are not unique.
The column space $\text{col}\,U$ of~$U$ is a 3-dimensional subspace containing $\text{col}\,\bm{A}$, determined by an arbitrary orthonormal completion in the thin SVD, and likewise for~$V$.
Since $\text{col}\,\bm{A} = \text{col}\,P = \text{col}\,\bm{B}$, we choose the two completions so that $\text{col}\,U = \text{col}\,V$.
We claim that
\[
UU^T=VV^T.
\]

Using the fact that for any $x\in\mathbb{R}^3$ and $y\in\mathbb{R}^n$,
\[
((I-UU^T)y,Ux)=(U^Ty,x)-(U^TUU^Ty,x)=(U^Ty,x)-(U^Ty,x)=0.
\]
We conclude that $UU^T$ is the orthogonal projection onto $\text{col}\,U$.
Similarly, $VV^T$ is the orthogonal projection onto $\text{col}\,V$.
The two column spaces coincide by our choice, which gives $UU^T=VV^T$.

Let $R=U^TV$.
It is straightforward to verify that $R\in O(3)$ and $\bm{B}=\bm{A}R$.
\end{proof}
\subsection{Existence of a complete representation}
\label{sec:existence}
\begin{lemma}[Gram matrix orbit characterization]
\label{lem:gram-orbit}
On a stratum~$\calE_{z_0, \bm{z}}$, the Gram matrix map
$\Psi(\calD) = G$ is $O(3)$-invariant and
$\Gamma$-equivariant, where the permutation group $\Gamma = S_{n_1} \times \cdots \times S_{n_T}$ acts on the indices via
$(\pi \cdot G)_{ik} = G_{\pi^{-1}(i),\pi^{-1}(k)}$.

The map~$\Psi$ determines the local
environment up to~$O(3)$-equivalence. More precisely, two environments
$\calD_1, \calD_2$ in the same stratum satisfy
$\calD_1 \sim \calD_2$ if and only if
\begin{equation}\label{eq:gram-equiv}
  \exists\, \pi \in \Gamma:\;
  G^{(2)}_{\pi(i),\pi(k)} = G^{(1)}_{ik} \qquad\forall\, i,k.
\end{equation}
\end{lemma}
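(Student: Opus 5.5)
We write $\bm{A}_\calD \in \RR^{n\times 3}$ for the matrix whose $k$-th row is $\Delta\bm{r}_k^T$, so that $\Psi(\calD) = \bm{A}_\calD\bm{A}_\calD^T$. The plan is to verify the two symmetry properties by direct computation and then prove the two directions of~\eqref{eq:gram-equiv}. The backward direction is where \cref{lem:gram-recon} is used.

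For invariance and equivariance, take $(R,\pi)\in G$. The action~\eqref{eq:group-action} sends $\bm{A}_\calD$ to $P_\pi \bm{A}_\calD R^T$, where $P_\pi$ is the permutation matrix with $(P_\pi \bm{A})_k = \bm{A}_{\pi^{-1}(k)}$. Hence
\[
  \Psi\bigl((R,\pi)\cdot\calD\bigr) = P_\pi \bm{A}_\calD R^T R\, \bm{A}_\calD^T P_\pi^T = P_\pi\, \Psi(\calD)\, P_\pi^T .
\]
Since $R^TR=I$, taking $\pi=\mathrm{id}$ gives $O(3)$-invariance. Entrywise, the right-hand side is $(\pi\cdot G)_{ik} = G_{\pi^{-1}(i),\pi^{-1}(k)}$, which is the asserted $\Gamma$-equivariance.

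For the forward direction, suppose $\calD_1\sim\calD_2$. Then \cref{def:sym-equiv} gives $(R,\pi)$ with $R\,\Delta\bm{r}^{(1)}_k = \Delta\bm{r}^{(2)}_{\pi(k)}$ for all $k$. Because $R$ is orthogonal, $G^{(2)}_{\pi(i),\pi(k)} = R\Delta\bm{r}^{(1)}_i\cdot R\Delta\bm{r}^{(1)}_k = G^{(1)}_{ik}$. One should also check that $\pi\in\Gamma$, i.e.\ that it preserves types. This holds because both environments lie in the same stratum and \cref{def:sym-equiv} only uses elements of $G$.

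For the backward direction, assume~\eqref{eq:gram-equiv} holds for some $\pi\in\Gamma$. Define $\bm{B}$ to have $k$-th row $(\Delta\bm{r}^{(2)}_{\pi(k)})^T$, that is, $\bm{B} = P_\pi^{-1}\bm{A}_{\calD_2}$ with the convention above. The hypothesis then reads $\bm{A}_{\calD_1}\bm{A}_{\calD_1}^T = \bm{B}\bm{B}^T$. \Cref{lem:gram-recon} supplies $Q\in O(3)$ with $\bm{B}=\bm{A}_{\calD_1}Q$, and row by row this says $\Delta\bm{r}^{(2)}_{\pi(k)} = Q^T\Delta\bm{r}^{(1)}_k$. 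Taking $R=Q^T\in O(3)$ yields~\eqref{eq:sym-equiv}, and $\pi\in\Gamma$ maps each neighbor to one of the same type, so $\calD_1\sim\calD_2$.

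The only delicate point I expect is bookkeeping the index conventions. The action~\eqref{eq:group-action} is written with $\pi^{-1}$, while~\eqref{eq:sym-equiv} and~\eqref{eq:gram-equiv} are written with $\pi$. I will fix the single convention $R\Delta\bm{r}_k=\Delta\bm{r}'_{\pi(k)}$ throughout, noting that replacing $\pi$ by $\pi^{-1}$ stays inside $\Gamma$, so neither the statement nor the proof depends on the choice. Everything else reduces to \cref{lem:gram-recon}.
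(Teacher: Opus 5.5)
Your proposal is correct and follows the paper's proof essentially verbatim. The forward direction uses orthogonality of $R$. The backward direction stacks the rows $\Delta\bm{r}^{(2)}_{\pi(k)}$ into a matrix with the same Gram matrix as $\calD_1$ and invokes the Gram reconstruction lemma.

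Your explicit computation $\Psi((R,\pi)\cdot\calD) = P_\pi\,\Psi(\calD)\,P_\pi^T$, which establishes the invariance and equivariance claims, is a small addition that the paper leaves implicit. Your index bookkeeping is consistent with both the group action and the definition of equivalence.
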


\begin{proof}
If $\calD_1 \sim \calD_2$ via
$\Delta\bm{r}_{\pi(i)}^{(2)} = R\Delta\bm{r}_i^{(1)}$, then
$G^{(2)}_{\pi(i),\pi(k)}
= R\Delta\bm{r}_i^{(1)} \cdot R\Delta\bm{r}_k^{(1)}
= G^{(1)}_{ik}$.
Suppose a type-preserving $\pi$ gives
$G^{(2)}_{\pi(i),\pi(k)} = G^{(1)}_{ik}$ for all $i, k$.
Let $\bm{X}^{(1)}$ have rows $\Delta\bm{r}_i^{(1)}$ and $\bm{Y}$
have rows $\Delta\bm{r}_{\pi(i)}^{(2)}$.  Then
$\bm{Y}\bm{Y}^T = \bm{X}^{(1)} (\bm{X}^{(1)})^T$.
By \cref{lem:gram-recon}, there exists $R \in O(3)$ such that $\bm{Y} = \bm{X}^{(1)} R$, hence $\calD_1 \sim \calD_2$.
\end{proof}

\cref{lem:gram-orbit} reduces completeness to separating the $\Gamma$-orbits of Gram matrices.
Since $\Gamma$ is finite, invariant theory supplies finitely many polynomials that do so on each stratum.

Physical applications require continuity with respect to the atomic coordinates, in particular when an atom crosses the boundary at cutoff radius~$R$ and the environment passes from one stratum to another.
A representation built from the Gram matrix alone is discontinuous at such a crossing, because the entries involving a departing neighbor do not decay to zero as $|\Delta\bm{r}_k| \to R$.

To restore continuity, we damp each displacement with a switch function that vanishes smoothly at the cutoff radius.
Let $s\colon [0, R] \to [0, 1]$ be a smooth function, positive on $[0, R)$, satisfying $s(R) = s'(R) = 0$, and such that $r \mapsto s(r)\,r$
is injective on~$(0, R)$. We then define the
\emph{switched Gram matrix} by replacing each displacement
vector by its damping counterpart $\Delta\tilde{\bm{r}}_k =
s(|\Delta\bm{r}_k|)\,\Delta\bm{r}_k$:
\[
  \tilde{G}_{ik}
  = \Delta\tilde{\bm{r}}_i \cdot \Delta\tilde{\bm{r}}_k
  = s(d_i)\,s(d_k)\,G_{ik}.
\]
The switched Gram matrix glues the strata together in the proof of the following proposition.

\begin{proposition}[Existence of a complete representation]
\label{thm:existence}
For any cutoff $R > 0$ there exists a complete representation $\Phi^{*} : \calE^{(R)} \to \RR^{d}$ whose components satisfy the cross-stratum matching of condition~3 in \cref{def:target-space}, with $R_c$ replaced by~$R$.
\end{proposition}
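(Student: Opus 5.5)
The plan is to build $\Phi^{*}$ as a sum over neighbors and neighbor pairs of switched quantities, so that it is automatically $\Gamma$-invariant, $O(3)$-invariant, and continuous across strata. The cleanest route avoids explicit invariant theory by working with the multiset of switched displacements itself. Concretely, the switched Gram matrix $\tilde G$ determines the damped vectors $\Delta\tilde{\bm r}_k$ up to $O(3)$ by \cref{lem:gram-recon}. Since $r\mapsto s(r)\,r$ is injective on $(0,R)$, $\Delta\tilde{\bm r}_k$ determines $\Delta\bm r_k$: its norm gives $s(d_k)d_k$, hence $d_k$, and the direction is shared. Thus the $O(3)\times\Gamma$-orbit of $\calD$ is encoded by the $\Gamma$-orbit of the typed, switched Gram data. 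Neighbors at the boundary contribute zero vectors and can be appended or deleted freely.

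To separate $\Gamma$-orbits continuously and uniformly across strata, I will use the typed multiset of pairs
\[
\mu(\calD) = \bigl\{\bigl(z_k,\ \tilde G_{kk},\ (z_i,\tilde G_{ik},\tilde G_{ii})_{i}\bigr)\bigr\}_k .
\]
To avoid nested multisets, I will instead take polynomial power-sum features: for every type pair $(t,t')$ and every monomial $m$ up to a sufficient degree, form
\[
\Phi^{*}_{t,m}(\calD) = \sum_{k:\,z_k=t} p_m(\Delta\tilde{\bm r}_k),
\]
where $p_m$ ranges over a generating set of $O(3)$-invariant polynomials of the multiset. By the first fundamental theorem these are polynomials in the $\tilde G$ entries. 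By the finite-group separation of orbits (Noether / invariants of the finite group $\Gamma$ acting on $\tilde G$ separate orbits), finitely many $\Gamma$-invariant polynomials in $\tilde G$ separate $\Gamma$-orbits of Gram matrices on each stratum. To make them cross-stratum compatible, I will choose them of the symmetric form $\sum_{k_1,\dots,k_p}\prod \tilde G_{k_a k_b}\prod\mathbf 1[z_{k_a}=t_a]$ (sums over index tuples of typed neighbors). Every monomial vanishing when $\Delta\tilde{\bm r}_m=0$ then makes the feature on $\calD$ tend to the feature on $\calD^-$. This requires each monomial to involve each summed index at least once through a $\tilde G$ factor, which I will impose.

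The key steps, in order, are as follows. \textbf{(i)} Show that $\calD\sim\calD'$ iff the typed multisets of switched vectors agree up to a common $O(3)$ element, using injectivity of $s(r)r$ and \cref{lem:gram-orbit}. \textbf{(ii)} Show that the typed "graph moments" $\sum_{k_1..k_p}\prod_{a\le b}\tilde G_{k_ak_b}^{e_{ab}}$ separate these orbits. A clean way is via a complete tensor-moment encoding: the $O(3)$-orbit of the typed measure $\sum_k\delta_{(z_k,\Delta\tilde{\bm r}_k)}$ (with zero vectors discarded) is determined by the $O(3)$-invariants of its moment tensors $\sum_{k:z_k=t}\Delta\tilde{\bm r}_k^{\otimes q}$ up to bounded order, since the point count is at most $M$. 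Full contractions of these tensors are exactly such graph moments, and invariants of $O(3)$ on a finite tuple of tensors are generated by full contractions (Weyl's first fundamental theorem), which separate compact-group orbits. Finiteness comes from Hilbert's basis theorem together with the degree bound $M$. \textbf{(iii)} Check continuity, $G$-invariance, and the cross-stratum matching: each feature is a polynomial in the $\Delta\tilde{\bm r}_k$ with no constant term per index. Consequently its value and retained-coordinate derivatives converge to those for $\calD^-$, and derivatives in $\Delta\bm r_m$ vanish as $s(R)=s'(R)=0$. Since \cref{def:target-space} only needs $C^0$ matching for $\calF^0$, this suffices; for higher order I would require more vanishing derivatives of $s$.

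The main obstacle is step (ii). The separation must hold uniformly across all strata with a single finite feature list, and discarded zero vectors must not collapse distinct environments. The latter is fine because interior neighbors have $s>0$, and only environments within $\calE^{(R)}$ (strictly inside $B_R$) are compared. I would first try to reduce to a single stratum at a time, using the fact that the moment $\sum_{k:z_k=t}\tilde G_{kk}^0$-type counting is unavailable. Counts $n_t$ are then recovered by separating on each $(n_t)$ and noting that the switched configurations with different counts are genuinely different measures, which the moment-determinacy of finite atomic measures (at most $M$ atoms) handles.
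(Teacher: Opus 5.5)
Your argument is correct apart from one small omission, but it runs in the opposite order from the paper's. The paper removes $O(3)$ first and the permutations second. It pads the switched Gram matrix $\tilde G$ with zeros to a fixed size with $M_t$ slots per type, reads the stratum off the positive diagonal, and undoes the switch by injectivity of $r\mapsto s(r)\,r$. It then appeals to \cref{lem:gram-recon,lem:gram-orbit} for the orthogonal part and separates orbits of the finite group $\bar\Gamma=\prod_t S_{M_t}$ by Noether generators. The boundary matching comes from the padded entries vanishing.

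You instead remove the permutations first. You pass to the typed moment tensors $T_{t,q}=\sum_{k:\,z_k=t}\Delta\tilde{\bm r}_k^{\otimes q}$, $1\le q\le M$, which determine the typed multisets of nonzero switched vectors. The power sums of degree at most $M$ of the zero-padded multiset determine it, and orders $q\ge 1$ are blind to zero vectors, so the counts $n_t$ are recovered without any slot bookkeeping. You then remove $O(3)$ by compact-group invariant theory. Invariant polynomials separate orbits of a compact linear action, Hilbert's theorem makes finitely many suffice, and the tensor form of Weyl's first fundamental theorem identifies them with full contractions, i.e.\ your typed graph moments.

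Your route buys explicit ACE/MTP-type features and a transparent cross-stratum matching. In fact any polynomial in the $T_{t,q}$ with $q\ge 1$ is automatically continuous as a neighbor exits, so the fundamental theorem is needed only for the graph-moment description, not for step (iii). The cost is heavier machinery: orbit separation for compact groups, Hilbert finiteness for $O(3)$, and the first fundamental theorem. The paper's route needs only the elementary SVD argument and finite-group invariants, at the price of less explicit features.

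The omission is that your features never see the center type $z_0$. Two environments with identical neighbors but different centers lie in different strata and are not equivalent, yet they would receive the same features. Append $z_0$ as the paper does.

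Also drop the single-index features $\sum_{k:\,z_k=t}p_m(\Delta\tilde{\bm r}_k)$ with $O(3)$-invariant $p_m$ from your first paragraph. Such $p_m$ see only the norms $|\Delta\tilde{\bm r}_k|$, so these features lose all angular information. It is the multi-index contractions of step (ii) that carry the proof.
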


\begin{proof}
\emph{Step~1: a single stratum.}
Work on a fixed stratum~$\calE_{z_0, \bm{z}}^{(R)}$ with
$n$~neighbors.  The Gram matrix
$\Psi(\calD) = G$ takes values in the space of
$n \times n$ real symmetric matrices
$\calV \cong \RR^{n(n+1)/2}$.  By \cref{lem:gram-orbit},
$\calD_1 \sim \calD_2$ if and only if $\Psi(\calD_1)$ and
$\Psi(\calD_2)$ lie in the same $\Gamma$-orbit under the
action $G \mapsto P_\pi G P_\pi^T$.

Since $\Gamma$ is a finite group acting on~$\calV$,
by Noether's theorem~\cite{noether1915koerper} the ring of
$\Gamma$-invariant polynomials $\RR[\calV]^\Gamma$ is finitely
generated, so there exist $\Gamma$-invariant polynomials
$p_1, \ldots, p_d : \calV \to \RR$ such that every
$\Gamma$-invariant polynomial is a polynomial in
$p_1, \ldots, p_d$.  Moreover, these generators separate
$\Gamma$-orbits, in the sense that if $p_i(G^{(1)}) = p_i(G^{(2)})$ for all
$i = 1, \ldots, d$, then $G^{(1)}$ and $G^{(2)}$ lie in the
same $\Gamma$-orbit.
Indeed, for any
$G^{(2)} \notin \Gamma \cdot G^{(1)}$, the polynomial
$P(X) = \prod_{M \in \Gamma \cdot G^{(1)}} \|X - M\|^2$
is $\Gamma$-invariant, vanishes on $\Gamma \cdot G^{(1)}$,
and is strictly positive on $\Gamma \cdot G^{(2)}$.
Since $P$ is $\Gamma$-invariant, it is expressible in the
generators, so at least one generator differs between the
two orbits.

Define
\[
  \Phi(\calD) = \bigl(p_1(\Psi(\calD)),\; \ldots,\;
    p_d(\Psi(\calD))\bigr) \in \RR^d.
\]
$\Phi$ is continuous because its components are polynomials in the entries of the continuous map~$\Psi$, $O(3)$-invariant because $\Psi$ is, and $\Gamma$-invariant because each $p_i$ is, hence $G$-invariant.  If
$\Phi(\calD_1) = \Phi(\calD_2)$, then all generators agree on
$\Psi(\calD_1)$ and $\Psi(\calD_2)$, so
$\Psi(\calD_1) \sim_\Gamma \Psi(\calD_2)$, hence
$\calD_1 \sim \calD_2$ by \cref{lem:gram-orbit}.

\emph{Step~2: assembly across strata.}
The per-stratum maps are now glued through the switched Gram matrix.
Pad $\tilde{G}$ by zeros to a fixed size, reserving for each type~$t$ as many index slots as the maximal number~$M_t$ of type-$t$ neighbors, and write $\hat{G}(\calD)$ for the result.
When a neighbor leaves through the boundary, its switched entries vanish, so $\hat{G}(\calD) \to \hat{G}(\calD^-)$ along the transitions of condition~3.
The pair $(z_0, \hat{G})$ still determines the environment.
The positive diagonal entries reveal the number of neighbors of each type, hence the stratum, and the injectivity of $r \mapsto s(r)\,r$ recovers the distances, then the values $s(d_i) > 0$, and finally $G_{ik} = \hat{G}_{ik}/(s(d_i)\,s(d_k))$, so \cref{lem:gram-recon,lem:gram-orbit} apply.
Repeating Step~1 for the finite group $\bar{\Gamma} = \prod_t S_{M_t}$, which permutes the slots within each type block of~$\hat{G}$, and appending the center type~$z_0$, yields~$\Phi^{*}$.
Its components are polynomials in the entries of~$\hat{G}$, so they inherit the continuity and the boundary matching, and the center type is constant along every condition-3 transition.
\end{proof}

$\Phi^{*}$ is the representation invoked in the converse direction of \cref{thm:completeness-uat}.
\section{HGNN Approximation of Complete Representations}
\label{sec:hgnn-approx}
We have thus established the existence of complete representations, yet the construction of \cref{thm:existence} is information-theoretic.
It assembles $\Phi^{*}$ from invariant polynomials rather than from a trainable network.
We now turn to the question of whether such a representation can be realized in practice.
Specifically, we show that the HGNN architecture is expressive enough to approximate any such complete representation to arbitrary accuracy on compact sets.
\subsection{DeepSets universality}

We first record a form of the DeepSets universality theorem~\cite{zaheer2017deepsets,wagstaff2019limitations} that covers vector-valued elements.
It is the tool underlying every approximation result below.
The HGNN aggregation~\eqref{eq:aggregation} is a sum of per-triplet messages, and DeepSets universality is what lets such a sum represent an arbitrary symmetric function of the triplet multiset.

For a compact $X \subset \RR^d$, let $\Mset{M}{X} \equiv \binom{X}{\le M}$\footnote{The notation $\binom{X}{\le M}$ follows the convention in which $\binom{X}{m}$ denotes the collection of $m$-element sub-multisets of~$X$; the subscript $\le M$ admits every size up to~$M$.} denote the set of all multisets of elements of~$X$ having size at most~$M$.
Each fixed-size stratum
$\Mset{m}{X} = X^m / S_m$ carries the quotient metric
$d(S, S') =
\min_{\pi \in S_m} \max_i \|x_i - y_{\pi(i)}\|$, where $S = \{x_1, \ldots, x_m\}$ and
$S' = \{y_1, \ldots, y_m\}$ are any ordered representatives,
and $\Mset{M}{X} = \bigsqcup_{m=0}^{M} \Mset{m}{X}$
has the disjoint-union topology.
Since $X$ is compact, $X^m$ is compact as a finite product of compact sets, and $\Mset{m}{X}$ is compact as the continuous image of~$X^m$ under the quotient map.

\begin{lemma}[DeepSets universality]
\label{lem:deepsets}
Let $X \subset \RR^d$ be compact.
There is an integer $m$ depending only on $M$ and~$d$ such that, for any continuous function
$f\colon \Mset{M}{X} \to \RR$, there exist continuous
functions
$\phi\colon X \to \RR^{m}$ and $\rho\colon \RR^{m} \to \RR$
such that
$f(S) = \rho\bigl(\sum_{x \in S} \phi(x)\bigr)$
for every $S \in \Mset{M}{X}$, where $\sum_{x \in S}$
denotes the sum over elements of~$S$ with multiplicity.
One may take $\phi(x) = (x^{\alpha})_{0 \le |\alpha| \le M}$, the monomials of degree at most~$M$, so that $m = \binom{M+d}{d}$.
\end{lemma}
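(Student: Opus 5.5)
The plan is to show that the single feature map $\phi(x) = (x^{\alpha})_{0 \le |\alpha| \le M}$ makes the sum-encoding
\[
  \Sigma(S) = \sum_{x \in S} \phi(x) \in \RR^{m}, \qquad m = \binom{M+d}{d},
\]
injective and continuous on $\Mset{M}{X}$. Once that holds, $\rho$ is obtained by a compactness and extension argument, exactly as in the proof of \cref{thm:completeness-uat}. Continuity is immediate: on each stratum $\Mset{k}{X}$, $\Sigma$ is induced by the symmetric continuous map $(x_1,\dots,x_k) \mapsto \sum_i \phi(x_i)$ on $X^k$, so it descends to the quotient. Since $\Mset{M}{X}$ carries the disjoint-union topology, continuity on each stratum is all that is needed.

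For injectivity, suppose $\Sigma(S) = \Sigma(S')$. The $\alpha = 0$ component equals $|S|$, so both multisets have a common size $k \le M$. The remaining components are the power sums $p_\alpha(S) = \sum_{x\in S} x^\alpha$ for $|\alpha| \le M$. To avoid invoking the generation theorem for multisymmetric polynomials, I would reduce to one dimension. For any $w \in \RR^d$ and $j \le M$, the multinomial expansion writes $\sum_{x \in S} \langle w, x\rangle^j$ as a fixed linear combination of the $p_\alpha(S)$ with $|\alpha| = j$. Hence the first $M \ge k$ power sums of the scalar multiset $\{\langle w,x\rangle : x\in S\}$ agree with those of $S'$. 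By Newton's identities, the power sums $e$-determine the elementary symmetric polynomials, and hence the monic polynomial $\prod_{x\in S}(t - \langle w,x\rangle)$. So the projected multisets coincide for every $w$.

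Now pick $w$ such that $x \mapsto \langle w,x\rangle$ is injective on the finite set $S \cup S'$. This is possible because only finitely many hyperplanes $\{w : \langle w, x-y\rangle = 0\}$ must be avoided. Equality of the projected multisets, including multiplicities, then lifts to equality $S = S'$.

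Finally, $\Mset{M}{X}$ is compact, being a finite disjoint union of compact strata, and $\RR^m$ is Hausdorff. So $\Sigma$ is a homeomorphism onto its compact image $Y$, and $f \circ \Sigma^{-1}$ is continuous on $Y$. By Tietze, it extends to a continuous $\rho \colon \RR^m \to \RR$, which gives $f(S) = \rho(\Sigma(S))$. The integer $m$ depends only on $M$ and $d$, as claimed.

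I expect the main obstacle to be the injectivity step: degree $\le M$ monomials must separate multisets of every size up to $M$. The linear-projection trick combined with Newton's identities is the route I would try first. The point to check carefully is that $k \le M$ ensures enough power sums are available.
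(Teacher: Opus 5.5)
Your proof is correct. Its outer structure matches the paper's:
\begin{itemize}
\item the $\alpha = 0$ coordinate recovers $|S|$;
\item $\Sigma$ is shown to be injective;
\item a continuous bijection from the compact space $\Mset{M}{X}$ is a homeomorphism onto its compact image;
\item Tietze then supplies $\rho$.
\end{itemize}

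The difference is in how you establish injectivity. The paper cites Weyl's theorem: for each fixed size, the polarized power sums $\sum_{x \in S} x^{\alpha}$, $1 \le |\alpha| \le M$, generate the ring of multisymmetric polynomials. It then reuses the finite-group orbit-separation argument from Step~1 of the proof of \cref{thm:existence}, namely the invariant product polynomial. You avoid the generation theorem entirely. You project along $w$ and use the multinomial expansion to write $\sum_{x\in S}\langle w,x\rangle^j$ as a combination of the $p_\alpha(S)$. One-dimensional Newton identities, which need only $j \le k \le M$, then recover the projected multiset. Finally, you lift back to $S = S'$ by choosing $w$ off the finitely many hyperplanes $\{w : \langle w, x-y\rangle = 0\}$. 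Your lifting step, including the multiplicity bookkeeping, is sound.

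Your route is elementary and self-contained, and it proves exactly the separation property the lemma needs. The paper's route proves something stronger: every symmetric polynomial in the elements of $S$ is a polynomial in $\Sigma(S)$. It also plugs into the invariant-theoretic machinery already used for \cref{thm:existence}, at the price of relying on a nontrivial classical theorem.

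One small typo: ``the power sums $e$-determine'' should read ``the power sums determine''.
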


\begin{proof}
The coordinate $\alpha = 0$ recovers the size of~$S$, and for each fixed size the sums $\sum_{x \in S} x^{\alpha}$ with $1 \le |\alpha| \le M$ generate the polynomial invariants of the symmetric group acting on tuples of vectors~\cite{weyl1946classical}, which separate its orbits as in Step~1 of the proof of \cref{thm:existence}.
Hence $S \mapsto \sum_{x \in S} \phi(x)$ is injective, and, being a continuous bijection from a compact space onto its image, a homeomorphism.
Composing $f$ with the inverse and extending by the Tietze theorem yields~$\rho$.
\end{proof}

\cref{lem:deepsets} produces continuous $\phi$ and~$\rho$, whereas the HGNN's message functions are MLPs.
The following lemma may be viewed as an approximate version of DeepSets universality.
\begin{lemma}[DeepSets MLP approximation]
\label{lem:deepsets-mlp}
Let $X \subset \RR^d$ be compact,
$K \subset \Mset{M}{X}$ compact, and
$f \in C^0(K)$ a continuous symmetric function.
For every $\delta > 0$, there exist MLPs
$\phi_\theta \colon X \to \RR^{m}$ and
$\rho_\theta \colon \RR^{m} \to \RR$, with $m$ as in \cref{lem:deepsets}, such that
\[
  \bigl\|f - \rho_\theta \circ
  \textstyle\bigl(\sum_i \phi_\theta\bigr)
  \bigr\|_{C^0(K)} < \delta.
\]
\end{lemma}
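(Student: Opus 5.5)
The plan is to start from the exact decomposition of \cref{lem:deepsets} and replace its two continuous pieces by MLPs, one at a time, keeping the error under control. By \cref{lem:deepsets}, and after extending $f$ from $K$ to all of $\Mset{M}{X}$ by Tietze (the multiset space is metrizable and $K$ is closed), there are continuous $\phi\colon X\to\RR^m$ and $\rho\colon\RR^m\to\RR$ with $f(S)=\rho\bigl(\sum_{x\in S}\phi(x)\bigr)$ on $K$. Only the inner sums over $S \in K$ matter. Since each $S$ has at most $M$ elements, the map $S\mapsto\sum_{x\in S}\phi(x)$ takes values in the compact set $Y=\{\sum_{i\le k} y_i : k\le M,\ y_i\in\phi(X)\}$, a finite union of continuous images of compact products.

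First fix the outer function. Enlarge $Y$ to the compact neighbourhood $Y_1=\{y:\operatorname{dist}(y,Y)\le 1\}$. The function $\rho$ is uniformly continuous on $Y_1$, so there is $\eta\in(0,1]$ with $|\rho(y)-\rho(y')|<\delta/2$ whenever $y,y'\in Y_1$ and $\|y-y'\|<\eta$. Next approximate $\phi$ componentwise on the compact $X$ by an MLP $\phi_\theta$ with $\sup_X\|\phi_\theta-\phi\|<\eta/M$, using \cite{cybenko1989approx,hornik1989mlp}. For every $S\in K$ the perturbed sum $\sum_{x\in S}\phi_\theta(x)$ then differs from the exact sum by less than $\eta$, with multiplicity counted and at most $M$ terms. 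In particular it lies in $Y_1$, and
\[
\Bigl|\rho\bigl(\textstyle\sum_{x\in S}\phi_\theta(x)\bigr)-f(S)\Bigr|<\delta/2 .
\]
Finally approximate $\rho$ uniformly on $Y_1$ by an MLP $\rho_\theta$ to within $\delta/2$. The triangle inequality then gives the claimed bound $\|f-\rho_\theta\circ\sum_i\phi_\theta\|_{C^0(K)}<\delta$.

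The main point of care is the order of the approximations. The modulus of continuity of $\rho$ must be fixed on a neighbourhood of $Y$ before $\phi$ is approximated, so that the perturbed sums remain in a compact set where both the uniform continuity of $\rho$ and the MLP approximation of $\rho$ are valid. The factor $M$ from summing at most $M$ element errors is harmless because $M$ is fixed. The width $m$ is inherited unchanged from \cref{lem:deepsets}, since $\phi_\theta$ is vector valued with the same output dimension.
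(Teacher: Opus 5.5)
Your proof is correct and follows essentially the same route as the paper: you approximate $\phi$ by an MLP so that the sums of at most $M$ terms move by less than a modulus of continuity of $\rho$, then approximate $\rho$ uniformly on a compact neighbourhood of the set of sums and conclude with the triangle inequality. Your ordering is slightly cleaner than the paper's, since you fix the modulus on the fixed neighbourhood $Y_1$ before choosing the accuracy of $\phi_\theta$; the paper's neighbourhood $\Sigma_\epsilon$ depends on the very $\epsilon$ that is later chosen from the modulus on it.
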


\begin{proof}
For a multiset $S \in \Mset{M}{X}$, we write
$\mathrm{elts}(S) \subset X$ for the set of elements
appearing in~$S$ (forgetting multiplicities) and
$\sum_{x \in S}$ for the sum over elements with
multiplicity.

By \cref{lem:deepsets},
$f(S) = \rho\bigl(\sum_{x \in S} \phi(x)\bigr)$
for continuous $\phi \colon X \to \RR^{m}$ and
$\rho \colon \RR^{m} \to \RR$.
Define $X_K = \bigcup_{S \in K} \mathrm{elts}(S)
\subset X$, which is compact because $K$ is compact, and
$\Sigma = \bigl\{\sum_{x \in S} \phi(x) : S \in K\bigr\}
\subset \RR^{m}$, which is also compact because it is a continuous image of~$K$.

By MLP universality on~$X_K$, there exists
$\phi_\theta$ with
$\|\phi_\theta - \phi\|_{C^0(X_K)} < \epsilon$.
Then for every $S \in K$,
$\bigl\|\sum_{x \in S} \phi_\theta(x) -
\sum_{x \in S} \phi(x)\bigr\|
\le M \epsilon$.
Let $\Sigma_\epsilon$ be the closed $M\epsilon$-neighborhood
of~$\Sigma$, which is compact.
By MLP universality on~$\Sigma_\epsilon$, there exists
$\rho_\theta$ with
$\|\rho_\theta - \rho\|_{C^0(\Sigma_\epsilon)} < \epsilon$.
Since $\rho$ is continuous on the compact
set~$\Sigma_\epsilon$, it is uniformly continuous: for any
$\delta' > 0$ there exists $\eta > 0$ such that
$\|s - s'\| < \eta \Rightarrow |\rho(s) - \rho(s')| <
\delta'$ for all $s, s' \in \Sigma_\epsilon$.
Set $\delta' = \delta / 2$, let $\eta$ be the corresponding modulus, and choose $\epsilon$ small enough that $M\epsilon < \eta$ and $\epsilon < \delta / 2$.
For every $S \in K$, writing
$s = \sum_{x \in S} \phi(x) \in \Sigma$ and
$\tilde{s} = \sum_{x \in S} \phi_\theta(x) \in
\Sigma_\epsilon$,
\[
  |\rho_\theta(\tilde{s}) - \rho(s)|
  \le |\rho_\theta(\tilde{s}) - \rho(\tilde{s})|
  + |\rho(\tilde{s}) - \rho(s)|
  < \tfrac{\delta}{2} + \tfrac{\delta}{2} = \delta,
\]
where the first term is bounded by the choice of~$\rho_\theta$ and the second by uniform continuity, since $\|\tilde{s} - s\| \le M\epsilon < \eta$.
Since the bound holds for every $S \in K$,
$\|f - \rho_\theta \circ (\sum \phi_\theta)\|_{C^0(K)}
< \delta$.
\end{proof}

\subsection{Single-layer approximation}

\cref{lem:deepsets-mlp} shows that a pair of MLPs can approximate any symmetric function of a multiset.
To apply this result to a single HGNN layer, we require that the multiset over which the layer aggregates encodes the same information as the Gram matrix. The following identities demonstrate that this is indeed the case.
The map from the Gram matrix~$G$ to distances and angles,
\begin{equation}\label{eq:gram-to-dist}
  d_i = \sqrt{G_{ii}}, \qquad
  \cos\theta_{ijk} = \frac{G_{ik}}{d_i\, d_k}
  \quad (i \neq k),
\end{equation}
is a bijection with inverse
\begin{equation}\label{eq:dist-to-gram}
  G_{ik} = d_i\, d_k\, \cos\theta_{ijk}, \qquad
  G_{ii} = d_i^2.
\end{equation}

The two descriptions of the local geometry are thus interchangeable.
The computation performed by a single HGNN layer can therefore be equivalently formulated in terms of the Gram matrix.
The following lemma establishes that, under the condition that distinct neighbors carry distinct labels, this layer can approximate any continuous $\Gamma$-invariant function of that Gram matrix and the incoming node features.
\begin{lemma}[Architectural expressiveness]
\label{lem:arch-express}
On a stratum~$\calE_{z_0, \bm{z}}^{(r_c)}$ with $n \ge 2$ neighbors and permutation group~$\Gamma$, consider a single HGNN layer (\eqref{eq:angle-feat}--\eqref{eq:aggregation}).
Suppose the pairs $(\bm{n}_k^{(\mathrm{in})}, d_{jk})$, $k \in \calN(j)$, are pairwise distinct.
Suppose further that the node features of neighbors of different types take values in disjoint sets.
Then, with sufficient MLP capacity and sufficiently large angle-feature dimension~$d_a$, the HGNN layer can approximate any continuous $\Gamma$-invariant function of the labeled data $\calL = (G^{(j)}, (\bm{n}_1^{(\mathrm{in})}, \ldots, \bm{n}_n^{(\mathrm{in})}))$, where $G^{(j)}$ is the Gram matrix~\eqref{eq:gram} of $j$'s environment and $\Gamma$ acts by $\pi \cdot (G, \bm{n}) = (P_\pi G P_\pi^T, \bm{n}_{\pi^{-1}})$.
\end{lemma}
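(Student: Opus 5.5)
The plan is to treat one HGNN layer as a DeepSets model on the multiset of ordered triplets and apply \cref{lem:deepsets-mlp}. The key point is that this multiset, built from labeled triplets, already determines the labeled Gram data $\calL$ up to the action of~$\Gamma$.

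For each ordered pair $(i,k)$ with $i\neq k$, form the triplet datum
\[
x_{ik}=(\bm{n}_j^{(\mathrm{in})},\bm{n}_i^{(\mathrm{in})},\bm{n}_k^{(\mathrm{in})},d_{ji},d_{jk},\cos\theta_{ijk}).
\]
Since $s_a>0$ on $[0,r_c)$, the switched weight $s_a(d_{ji})s_a(d_{jk})$ is a known positive function of $d_{ji},d_{jk}$. We can therefore let $\rho_3$ approximate $\phi(x)/(s_a(d_{ji})s_a(d_{jk}))$. On a compact set $K$ the distances stay bounded away from $r_c$, so this quotient is continuous there. Aggregation~\eqref{eq:aggregation} then produces $\sum_{i\neq k}\phi(x_{ik})$, and $\psi_3$ plays the role of~$\rho$.

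By \cref{lem:deepsets-mlp}, with $d_a=m$ and the multiset size bounded by $M(M-1)$, it suffices to show one thing. Namely, the target function $F(\calL)$ must factor continuously through the multiset $S=\{x_{ik}\}$, i.e., $F(\calL)=f(S)$ for a continuous symmetric $f$ on the compact image of~$K$.

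\emph{Step 1 (injectivity modulo $\Gamma$).} Suppose two labeled data $\calL,\calL'$ give the same multiset $S$. The key is that every neighbor carries a distinct label $\ell_k=(\bm{n}_k^{(\mathrm{in})},d_{jk})$, and the label is readable from each triplet. From $S$ we recover:
\begin{itemize}
\item the set of labels $\{\ell_k\}$, as the second components of the triplets;
\item for each ordered pair of labels, the entry $G_{ik}=d_{ji}d_{jk}\cos\theta_{ijk}$;
\item $G_{ii}=d_{ji}^2$, via~\eqref{eq:dist-to-gram}.
\end{itemize}
Since $n\ge2$, every label occurs in some triplet. Thus $S$ determines a map from labels to rows of the Gram matrix. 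Matching the labels of $\calL$ and $\calL'$ yields a bijection $\pi$ with $G'_{\pi(i)\pi(k)}=G_{ik}$ and $\bm{n}'_{\pi(i)}=\bm{n}_i$. Because node features of different types lie in disjoint sets, $\pi$ preserves types, so $\pi\in\Gamma$. Hence $F(\calL)=F(\calL')$ by $\Gamma$-invariance, and $f$ is well defined on the image.

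\emph{Step 2 (continuity).} The map $\calL\mapsto S$ is continuous, with the multiset metric, from the compact domain $K$ (compactness taken stratum-wise as in the paper). Its image is compact and Hausdorff, so the quotient-map argument from the proof of \cref{thm:completeness-uat} shows $f$ is continuous. Tietze then extends $f$ to all of $\Mset{M(M-1)}{X}$, where $X$ is a compact box containing all triplets. Applying \cref{lem:deepsets-mlp} gives MLPs $\phi_\theta,\rho_\theta$. We then set $\rho_3$ to approximate $\phi_\theta/(s_as_a)$ uniformly on the compact triplet set, which controls the error in the sum by $M^2$ times the sup error, and set $\psi_3=\rho_\theta$. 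Uniform continuity of $\rho_\theta$ on a compact neighborhood of the sum image then bounds the total error, as in the proof of \cref{lem:deepsets-mlp}.

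\emph{Main obstacle.} I expect the hardest part to be Step 1. Recovering $G$ up to a simultaneous relabeling, and not merely up to some pair-dependent mixing, relies on the pairwise-distinct labels. Without them, distinct triplets could not be consistently attached to the same neighbor, and 3-body incompleteness \cite{pozdnyakov2020incompleteness} would reappear.

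A secondary subtlety is the multiplicative switch weights near the cutoff. I would handle this by noting that $K$ is compact inside the open stratum, so all $d_{jk}\le r_c-\eta$ for some $\eta>0$, which keeps the division by $s_a$ bounded and continuous.
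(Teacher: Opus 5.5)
Your proposal is correct and follows essentially the same route as the paper. You write the layer in DeepSets form over the ordered-triplet multiset, absorb the switch factors by a quotient on compact sets, get injectivity modulo $\Gamma$ from the pairwise-distinct labels $(\bm{n}_k^{(\mathrm{in})}, d_{jk})$ together with type-disjoint features, then conclude with continuity of the factor map via compactness, Tietze, and \cref{lem:deepsets-mlp}. The differences are cosmetic: you include the center feature in each tuple, which the paper's $\mu$ omits, and you argue continuity through the quotient-map property rather than the paper's closed-map identity $g_0^{-1}(C) = \mu(f^{-1}(C)\cap K)$.
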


The argument proceeds in three steps. First, a single HGNN layer has DeepSets form over the multiset of neighbor triplets, so \cref{lem:deepsets-mlp} applies. Second, every continuous $\Gamma$-invariant function of the labeled data factors through that multiset. Third, the two combine by composition.

Before delving into the proof, we first fix notation.
Let $\calV = \RR^{n(n+1)/2}_{\mathrm{sym}}$ denote the space
of Gram matrices and
$\calW = \calV \times (\RR^{d})^n$ the space of labeled
data $\calL = (G, (\bm{n}_1, \ldots, \bm{n}_n))$.
The group~$\Gamma$ acts on~$\calW$ by
$\pi \cdot (G, \bm{n}) = (P_\pi G P_\pi^T,
\bm{n}_{\pi^{-1}})$. Let $X = \RR^d \times \RR^d \times (0, r_c) \times (0, r_c)\times [-1, 1]$ be the space of a single tuple
$(\bm{n}_i, \bm{n}_k, d_{ji}, d_{jk}, \cos\theta_{ijk})$.
Define the map $\mu \colon \calW \to
\Mset{M(M-1)}{X}$ by
\begin{equation}\label{eq:calS}
\mu(G, \bm{n}) = \bigl\{(\bm{n}_i^{(\mathrm{in})},\,
\bm{n}_k^{(\mathrm{in})},\, d_{ji},\, d_{jk},\,
\cos\theta_{ijk})\bigr\}_{i \neq k},
\end{equation}
where distances and angles are obtained from~$G$ via
~\eqref{eq:gram-to-dist} and the index pair $(i,k)$
runs over all ordered pairs of distinct neighbors.
Write $C^0_{\mathrm{sym}}(\Mset{M(M-1)}{X})$ for the
continuous functions on the multiset space,
$C_\Gamma(\calW)$ for the continuous $\Gamma$-invariant
functions on~$\calW$, and $\mu^{*} \colon g \mapsto g \circ \mu$ for the pullback along~$\mu$.

\begin{proof}
\smallskip\noindent\emph{Step~1: DeepSets universality.}
The HGNN computes
\[
  \bm{n}_j^{(\mathrm{out})} = \psi_3\Bigl(
    \sum_{\substack{i, k \in \calN(j) \\ i \neq k}}
      \tilde{\rho}_3\bigl(
        \bm{n}_j^{(\mathrm{in})},\,
        \bm{n}_i^{(\mathrm{in})},\,
        \bm{n}_k^{(\mathrm{in})},\,
        d_{ji},\, d_{jk},\,
        \cos\theta_{ijk}\bigr)
  \Bigr),
\]
where $\tilde{\rho}_3$ absorbs the switch factors
$s_a(d_{ji})\,s_a(d_{jk})$ into~$\rho_3$, both being continuous functions of the inputs $d_{ji}$ and~$d_{jk}$ of~$\rho_3$.
Conversely, since $s_a$ is positive on $[0, r_c)$, on compact sets the desired summand is realized by letting $\rho_3$ approximate the quotient by the switch factors.
This has DeepSets form
$\psi_3 \circ \sum \circ \tilde{\rho}_3$
over~$\mu(\calL)$, where each summand $\tilde{\rho}_3$ acts on one
element of the multiset.
The multiset has at most $M(M-1)$ elements.
For any $g \in C^0_{\mathrm{sym}}(\Mset{M(M-1)}{X})$, compact
$K' \subset \Mset{M(M-1)}{X}$, and $\delta > 0$, \cref{lem:deepsets-mlp}, applied with sufficiently large~$d_a$ and with~$X$ replaced by a compact subset containing the tuple values over~$K'$, yields MLP weights for $\rho_3$ and~$\psi_3$ such that
$\|\bm{n}_j^{(\mathrm{out})} - g\|_{C^0(K')} < \delta$.

\smallskip\noindent\emph{Step~2: $\mu^*$ maps
$C^0_{\mathrm{sym}}(\Mset{M(M-1)}{X})$ into $C_\Gamma(\calW)$ and is
surjective.}

\emph{$\mu^*$ is well-defined.}
The map~$\mu$ is continuous and $\Gamma$-invariant.
Continuity holds because each element of the multiset depends continuously on~$\calL$ via~\eqref{eq:gram-to-dist}.
For invariance, any $\pi \in \Gamma \subset S_n$ permutes the ordered pairs $(i,k)$ with $i \neq k$ among themselves, and the multiset is unordered, so $\mu(\pi \cdot \calL) = \mu(\calL)$.
Hence for $g \in C^0_{\mathrm{sym}}(\Mset{M(M-1)}{X})$, the pullback
$g \circ \mu$ is continuous (composition of continuous maps)
and $\Gamma$-invariant, i.e.,
$g \circ \mu \in C_\Gamma(\calW)$.

\emph{$\mu^*$ is surjective.}
It suffices to show that $\mu$ is injective modulo~$\Gamma$,
i.e., $\mu(\calL_1) = \mu(\calL_2) \Rightarrow
\calL_1 \sim_\Gamma \calL_2$.
Since $(\bm{n}_k^{(\mathrm{in})}, d_{jk})$ is distinct for
distinct~$k$ (by hypothesis), each element
$(\bm{n}_i, \bm{n}_k, d_{ji}, d_{jk}, \cos\theta_{ijk})$
of the multiset uniquely identifies the atoms $i$ and~$k$.
Equal multisets therefore imply a permutation $\pi \in S_n$
with $\bm{n}_{\pi(k)}^{(2)}
= \bm{n}_k^{(1)}$,
$d_{j,\pi(k)}^{(2)} = d_{jk}^{(1)}$, and
$\cos\theta_{\pi(i)j\pi(k)}^{(2)} = \cos\theta_{ijk}^{(1)}$ for all
$i \neq k$, where the superscripts refer to $\calL_1$ and~$\calL_2$.
Since the node features of neighbors of different types lie in disjoint sets, the matching $\bm{n}_{\pi(k)}^{(2)} = \bm{n}_k^{(1)}$ forces~$\pi$ to be
type-preserving, that is, $\pi \in \Gamma$.
By \eqref{eq:dist-to-gram}, the matched distances and angles give
$G^{(2)}_{\pi(i),\pi(k)} = G^{(1)}_{ik}$, so
$\calL_1 \sim_{\Gamma} \calL_2$.

Now let $f \in C_\Gamma(\calW)$ and
$K \subset \calW$ compact.
Since~$\mu$ is injective modulo~$\Gamma$,
$\mu(\calL_1) = \mu(\calL_2) \Rightarrow
f(\calL_1) = f(\calL_2)$,
so $f$ factors as $f = g_0 \circ \mu$ for a well-defined
function~$g_0$ on the image~$\mu(K)$.
The function~$g_0$ is continuous on~$\mu(K)$.
The restriction $\mu|_K$ is a continuous map from the compact
set~$K$ to the metrizable space
$\Mset{M(M-1)}{X}$, hence a closed map, and for any closed $C \subset \RR$,
$g_0^{-1}(C) = \mu(f^{-1}(C) \cap K)$ is closed
in~$\mu(K)$.
Since $\mu(K)$ is a compact subset of
the metrizable space $\Mset{M(M-1)}{X}$,
the Tietze extension theorem provides a continuous extension
$g \colon \Mset{M(M-1)}{X} \to \RR$ with
$g|_{\mu(K)} = g_0$.

\smallskip\noindent\emph{Step~3: approximation.}
Let $f \in C_\Gamma(\calW)$,
$K \subset \calW$ compact, and $\delta > 0$.
By Step~2, $f = g \circ \mu$ on~$K$ with
$g \in C^0_{\mathrm{sym}}(\Mset{M(M-1)}{X})$.
By Step~1, there exist MLP weights such that the HGNN
output~$\tilde{g}$ 
satisfies
$\|\tilde{g} - g\|_{C^0(\mu(K))} < \delta$.
For every $\calL \in K$,
\[
  |(\tilde{g} \circ \mu)(\calL) - f(\calL)|
  = |\tilde{g}(\mu(\calL)) - g(\mu(\calL))| < \delta,
\]
so the HGNN approximates~$f$ uniformly on~$K$.
\end{proof}

Applying this procedure to each component of the complete representation of~\cref{thm:existence}, we prove that a single HGNN layer approximates the representation on generic environments.
\begin{proposition}[Single-layer HGNN approximates
  a complete representation]
\label{thm:single-hgnn}
Let the HGNN have a single layer with
$r_c^{(3)} = r_c^{(2)} = r_c$ and sufficient feature dimensions $d_a$ and~$d$.
For any compact $K \subset \calE^{(r_c)} \cap \calE^*$ whose environments have at least two neighbors, and any $\delta > 0$,
there exist MLP weights such that the HGNN output approximates the complete representation $\Phi^{*}$ of \cref{thm:existence} with
$\|\Phi - \Phi^*\|_{C^0(K)} < \delta$.
\end{proposition}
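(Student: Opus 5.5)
The plan is to reduce the statement to \cref{lem:arch-express}, applied componentwise to $\Phi^{*}$ on each stratum, and then to use the finiteness of the strata meeting~$K$.
The single layer reads the type embeddings $\bm{n}_k^{(0)} = \mathrm{type\_embed}(z_k)$, so the first step is to verify the hypotheses of \cref{lem:arch-express}.
Choose the type embedding injective, for instance one-hot, with $d \ge T$.
Then node features of neighbors of different types take values in disjoint sets, in fact in distinct points.
The pairs $(\bm{n}_k^{(0)}, d_{jk})$ are pairwise distinct: for different types the embeddings differ, and for equal types genericity (\cref{def:generic}) forces $d_{jk}$ to differ, since $K \subset \calE^{*}$.
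The assumption of at least two neighbors supplies $n \ge 2$.
Thus, on each stratum $\calE_{z_0,\bm z}^{(r_c)}$ meeting~$K$, the layer can approximate any continuous $\Gamma$-invariant function of the labeled data $(G^{(j)}, \bm{n}^{(0)})$.

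The second step is to express $\Phi^{*}$ in that form.
By the construction in \cref{thm:existence}, $\Phi^{*}$ consists of the center type~$z_0$ together with $\bar\Gamma$-invariant polynomials of the padded switched Gram matrix $\hat G$.
On a fixed stratum, $\hat G$ is obtained from $G$ by the continuous map $G_{ik} \mapsto s(\sqrt{G_{ii}})\,s(\sqrt{G_{kk}})\,G_{ik}$ followed by zero-padding into the type-block slots.
This padding is $\Gamma$-equivariant into $\bar\Gamma$, so each component of $\Phi^{*}$ is a continuous $\Gamma$-invariant function on~$\calW$; the node features simply enter trivially.
The center type $z_0$ is not visible to the aggregation unless it is fed in through $\bm{n}_j^{(\mathrm{in})}$, which is an input of $\rho_3$.
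Since $\bm{n}_j^{(0)}$ encodes~$z_0$ and the type signature $\bm z$ is recoverable from the multiset (the zeroth DeepSets moment counts elements, and the embeddings give the types), a single target function on the union of strata can be defined.
It is defined by cases on the stratum label, and this is continuous because the strata are disjoint open pieces in the disjoint-union topology.

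The third step is to pass from one stratum to all of~$K$.
Since $K$ is compact in a disjoint union, it meets only finitely many strata, and $K \cap \calE_{z_0,\bm z}$ is compact for each of them.
Working directly in the multiset space $\Mset{M(M-1)}{X}$, where $X$ now also includes the center feature, the images $\mu(K\cap\calE_{z_0,\bm z})$ are compact and pairwise disjoint.
They are disjoint because the multiset size and the embedded types identify the stratum, and they lie in different components of the disjoint-union topology or are separated by distinct discrete type values.
The function $g$ defined piecewise on their union is therefore continuous on a compact set, and Tietze extension followed by Step~1 of \cref{lem:arch-express} (i.e.\ \cref{lem:deepsets-mlp}), applied to each of the $d$ output components with $\psi_3$ vector-valued, yields one set of weights with $\|\Phi-\Phi^{*}\|_{C^0(K)}<\delta$.

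I expect the main obstacle to be this gluing across strata together with the switch-factor issue.
The aggregation in \eqref{eq:aggregation} carries $s_a(d_{ji})s_a(d_{jk})$, and \cref{lem:arch-express} absorbs it by dividing, which is legitimate only because $K$ stays compactly away from $d=r_c$, where $s_a>0$ is bounded below.
I would check that compactness of $K$ in each open stratum $(B_{r_c}\setminus\{0\})^n$ indeed gives $\max d_{jk}<r_c$ and $\min d_{jk}>0$, so that $\sqrt{G_{ii}}$ and the quotient maps are uniformly continuous.
I would also check that the disjointness of the per-stratum multiset images holds even between strata of equal size but different signature.
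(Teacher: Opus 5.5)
Your proposal is correct and follows essentially the paper's proof: you verify the two hypotheses of \cref{lem:arch-express} from injective type embeddings and genericity, write each component of $\Phi^{*}$ as a continuous $\Gamma$-invariant function of the labeled data on each stratum, and glue the finitely many strata meeting $K$ using the disjointness the type embeddings provide. Your version is more explicit than the paper's one-line cross-stratum remark. You route the center type $z_0$ through $\bm{n}_j^{(\mathrm{in})}$, glue at the level of disjoint compact images in the multiset space, and check that compactness keeps distances away from $r_c$ so the switch factors can be divided out.
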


\begin{proof}
Work on a fixed stratum
$\calE_{z_0, \bm{z}}^{(r_c)}$.  The cross-stratum
case follows because type embeddings place each stratum's
data in a disjoint region of the MLP input space, and by MLP
universality a single set of weights can approximate
independent targets on each of the finitely many strata.

By \cref{thm:existence}, the components of the complete representation~$\Phi^{*}$, restricted to the stratum, are polynomials in the entries of the switched Gram matrix.
These entries are continuous functions of the distances and angles, so each component is a continuous $\Gamma$-invariant function of the labeled data.

For the single-layer case, the input node features are type
embeddings
$\bm{n}_k^{(0)} = \mathrm{type\_embed}(z_k)$.  For generic
configurations, distinct same-type atoms have distinct
distances~$d_{jk}$, so the pair
$(\bm{n}_k^{(0)}, d_{jk})$ is distinct for distinct~$k$.
The type embeddings are chosen distinct, so the features of neighbors of different types lie in disjoint sets, and both hypotheses of \cref{lem:arch-express} are satisfied.

By \cref{lem:arch-express} applied to each component,
there exist MLP weights such that the HGNN output
$\Phi(\calD)$ approximates $\Phi^*(\calD)$ to
within~$\delta$ uniformly on~$K$.
\end{proof}

\subsection{Multi-layer approximation}

We now extend the single-layer result to the multi-layer HGNN.
We first introduce the $L$-hop neighborhoods and environments that an $L$-layer network sees, then the two conditions on the configuration that the main theorem requires.
These are connectivity, which makes $L$ layers reach the physical range~$R_c$, and overlap, which lets each layer reconstruct the extended geometry from local data.
We then show that this reconstruction is possible in \cref{lem:info-suff} and prove the multi-layer approximation theorem in \cref{thm:multi-hgnn}. Two collections of Gram matrices recur in what follows.
For a center atom~$j$,
\begin{equation}\label{eq:info-data}
  \calI_j = \bigl(G^{(j)},\;
  \{G^{(k)}\}_{k \in \calN(j)}\bigr)
\end{equation}
gathers the Gram matrices of $j$ and of each of its neighbors, while $\bar{G}^{(j)}$ denotes the Gram matrix of the extended environment obtained by merging all the neighbors' environments.

\begin{definition}[$L$-hop neighborhood and environment]
\label{def:l-hop}
For cutoff~$r_c$, define the 1-hop neighbor set
$\calN(j) = \{k : k \neq j,\; |\bm{r}_k - \bm{r}_j| <
r_c\}$.
The $L$-hop neighbor set is defined inductively by $\calN^1(j) = \calN(j)$ and $\calN^{l+1}(j) = \bigl(\calN(j) \cup \bigcup_{k \in \calN(j)} \calN^l(k)\bigr) \setminus \{j\}$, so that the center never counts among its own neighbors.
The $L$-hop local environment is
\begin{equation}\label{eq:l-hop-env}
  \calD_j^{[r_c, L]} = \bigl(z_j,\; \{(\Delta\bm{r}_{jk},\, z_k)
    : k \in \calN^L(j)\}\bigr),
  \qquad \Delta\bm{r}_{jk} = \bm{r}_k - \bm{r}_j.
\end{equation}
\end{definition}

The $L$-hop neighborhood $\calN^L(j)$ contains every atom
reachable from~$j$ by a chain of at most $L$ steps, each of
length less than~$r_c$.  Note that $\calN^L(j)$ is in general
a \emph{subset} of the ball $\{k : |\bm{r}_k - \bm{r}_j| < Lr_c\}$.
An atom within distance~$Lr_c$ may not be reachable
if there are gaps in the neighbor structure.

\begin{definition}[$L$-hop environment space]
\label{def:l-hop-space}
The \emph{$L$-hop environment space} $\calE^{[r_c, L]}$ is the set of all $L$-hop local environments $\calD_j^{[r_c, L]}$, as $j$ ranges over the atoms of all configurations.
\end{definition}

Since every atom reachable from~$j$ in at most $L$ hops of length less than $r_c$ lies within distance~$Lr_c$ of~$j$, we have the inclusion $\calE^{[r_c, L]} \subset \calE^{(Lr_c)}$.
The membership condition is intrinsic.
An environment $\calD \in \calE^{(Lr_c)}$ lies in $\calE^{[r_c, L]}$ if and only if every neighbor of the central atom is connected to it by a chain of at most $L$ steps, with each step linking two atoms at distance less than $r_c$.
This condition can be expressed as a finite disjunction over all such chains of finitely many strict inequalities.
Hence $\calE^{[r_c, L]}$ is an open subset of $\calE^{(Lr_c)}$, inheriting its stratification, metric, and differentiable structure.
We call an $L$-hop environment \emph{generic} if the local environment of every atom in it is generic in the sense of \cref{def:generic} and no two of its atoms lie at distance exactly~$r_c$.
\begin{definition}[Connectivity condition]
\label{def:connectivity}
A configuration satisfies the \emph{connectivity condition}
with respect to cutoff~$r_c$ and physical range~$R_c$ if,
for every atom~$j$ and every atom~$k \neq j$ with
$|\bm{r}_k - \bm{r}_j| < R_c$, there exists $L \ge 1$ such
that $k \in \calN^L(j)$.
\end{definition}

Since a configuration has finitely many atoms, the hop depths required for the individual pairs admit a finite maximum. Hence, a single~$L$ suffices for all pairs simultaneously. Consequently, $\calD_j^{[r_c, L]} \supseteq \calD_j^{(R_c)}$ for every atom~$j$, meaning that an 
$L$-layer HGNN effectively covers the entire physical interaction range 
$R_c$. See \cref{fig:receptive-field} for an illustration.
In condensed matter systems, where atoms typically form a connected network at cutoff~$r_c$, this condition is generally satisfied.
\begin{figure}[t]
\centering
\includegraphics[width=\textwidth]{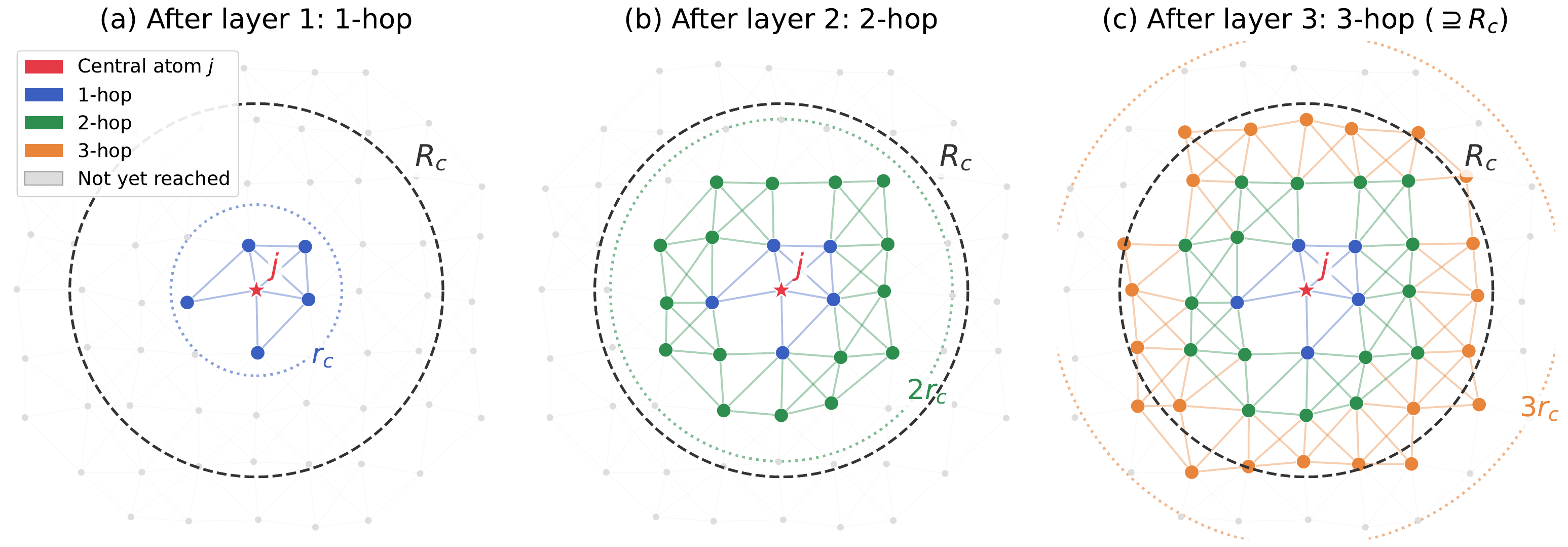}
\caption{Extending the receptive field via multi-layer message
  passing.  The central atom~$j$ (star) has a per-layer
  cutoff~$r_c$.  The dashed circle marks the physical
  interaction range~$R_c$.
  (a)~After one layer, only the 1-hop neighbors (blue)
  are reached.
  (b)~After two layers, the 2-hop neighborhood (green)
  extends further, but atoms near the boundary of~$R_c$
  remain unreached.
  (c)~After three layers, the 3-hop neighborhood (orange)
  covers all atoms within~$R_c$.
  Note that the $L$-hop neighborhood is in general a
  \emph{subset} of the ball of radius~$Lr_c$ (dotted
  circles), since an atom within distance~$Lr_c$ may not be
  reachable in $L$ graph hops if intermediate neighbors
  are absent.  The connectivity condition
  guarantees that sufficiently
  many layers cover~$R_c$.  The overlap condition
  ensures that each layer faithfully
  reconstructs the extended geometry from local data.}
\label{fig:receptive-field}
\end{figure}

\begin{definition}[Overlap condition]
\label{def:overlap}
Write $\calN[j] = \calN(j) \cup \{j\}$ for the closed neighborhood of atom~$j$.
A configuration satisfies the \emph{overlap condition} with respect to~$r_c$ if every atom has a neighbor and, for every ordered pair of atoms $(j, k)$ with $|\bm{r}_j - \bm{r}_k| < r_c$, the displacements $\{\bm{r}_m - \bm{r}_k : m \in \calN[j] \cap \calN[k]\}$ span~$\RR^3$.
\end{definition}

Both $j$ and $k$ lie in $\calN[j] \cap \calN[k]$, since $|\bm{r}_j - \bm{r}_k| < r_c$.
Spanning $\RR^3$ is equivalent to the existence of three atoms $m_1, m_2, m_3$ in $\calN[j] \cap \calN[k]$ whose displacements relative to~$k$ are linearly independent, which is the form used in \cref{lem:info-suff}.
Three is exactly the number required there, since the relative orientation of the local frames of $j$ and~$k$ is recovered as a linear map of~$\RR^3$, and such a map is determined by its values on a basis.

The overlap condition requires $r_c$ to be large enough that two neighboring atoms share enough common neighbors to span~$\RR^3$.
In condensed matter systems, a cutoff slightly above the nearest-neighbor distance typically suffices.

\begin{definition}[Overlap condition for $L$-hop environments]
\label{def:overlap-env}
The \emph{hop depth} of an atom in an $L$-hop environment is the smallest number of hops connecting it to the center.
An $L$-hop environment satisfies the \emph{overlap condition} if its center has a neighbor and the spanning requirement of \cref{def:overlap} holds for every ordered pair of its atoms $(j, k)$ with $|\bm{r}_j - \bm{r}_k| < r_c$ in which $j$ has hop depth at most $L - 1$.
Here the neighborhoods are computed within the environment.
\end{definition}

The depth restriction exempts the outermost atoms, whose shared neighbors may lie beyond the $L$-hop horizon.
For a pair whose first atom has hop depth at most $L - 1$, the shared neighbors lie within one hop of that atom, hence at depth at most~$L$.
Consequently, every $L$-hop environment of a configuration satisfying the overlap condition satisfies \cref{def:overlap-env}.

The inductive step of the multi-layer theorem needs a center atom to recover its extended environment from what its neighbors already encode.
At the level of Gram matrices, we have
%
\begin{lemma}[Information sufficiency]
\label{lem:info-suff}
Under the overlap condition at~$r_c$, the data~$\calI_j$ uniquely determines the
extended Gram matrix $\bar{G}^{(j)}$.  Moreover,
$\bar{G}^{(j)}$ is a continuous function of~$\calI_j$.
\end{lemma}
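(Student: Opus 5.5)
The plan is to reconstruct the displacement vectors of the extended environment in $j$'s frame, up to one global orthogonal transformation, and then read off $\bar G^{(j)}$ as their inner products.

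\emph{Step 1: fix $j$'s frame.} By \cref{lem:gram-recon}, $G^{(j)}$ determines the rows $\Delta\bm{r}^{(j)}_a$, $a\in\calN(j)$, up to a common $R\in O(3)$. Fix one such realization $\bm{X}^{(j)}$, for instance via the spectral factorization $P = (G^{(j)})^{1/2}$ as in the proof of \cref{lem:gram-recon}.

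\emph{Step 2: fix each neighbor's frame.} For each neighbor $k\in\calN(j)$, $G^{(k)}$ likewise determines a realization $\bm{X}^{(k)}$ of $k$'s neighbor displacements, up to its own unknown $R_k\in O(3)$.

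\emph{Step 3: align the two frames.} By the overlap condition there are $m_1,m_2,m_3\in\calN[j]\cap\calN[k]$ with $\bm{r}_{m_i}-\bm{r}_k$ linearly independent. In $j$'s frame these vectors are known: $\bm{r}_{m_i}-\bm{r}_k = \Delta\bm{r}^{(j)}_{m_i}-\Delta\bm{r}^{(j)}_k$, where the term for $m_i=j$ is $-\Delta\bm{r}^{(j)}_k$. In $k$'s frame they are the rows $\Delta\bm{r}^{(k)}_{m_i}$. The orthogonal map $Q_k$ sending the second triple to the first is therefore uniquely determined, because a linear map is fixed by its values on a basis. It equals $B_j B_k^{-1}$, where $B_j$ and $B_k$ are the $3\times3$ matrices with columns given by the triple in each frame. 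Every neighbor $m$ of $k$ is then located in $j$'s frame as
\[
\Delta\bm{r}^{(j)}_k + Q_k\,\Delta\bm{r}^{(k)}_m .
\]
Taking all pairwise inner products of these vectors gives $\bar G^{(j)}$. This result is independent of the arbitrary choices made in Steps 1 and 2, since changing them only conjugates or right-multiplies by orthogonal matrices that cancel in the inner products.

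\emph{Main obstacle: matching labels across frames.} The argument above assumes we know which rows of $G^{(k)}$ correspond to $m_1,m_2,m_3$ and to $j$. In the paper's setting, Gram matrices are indexed data, so within $\calI_j$ the atoms carry global identities (they are labeled atoms of one configuration). I would state explicitly that $\calI_j$ comes with this labeling of rows by atom indices; \cref{thm:multi-hgnn} later supplies the labels through distinct node features. Under that convention uniqueness is immediate, and atoms reached through several neighbors $k$ receive consistent positions because all of them are true positions expressed in $j$'s frame.

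\emph{Continuity.} The map from $\calI_j$ to $\bar G^{(j)}$ is a composition of continuous operations: a matrix square root or factorization, the inversion of $B_k$ (which is nonsingular by overlap and so stays nonsingular nearby), and inner products. A difficulty is that the factorization in Step 1 need not be continuous when $G^{(j)}$ has repeated eigenvalues. To avoid this, I would avoid choosing frames altogether and express $\bar G^{(j)}$ directly through inner products. Write $u_i = \bm{r}_{m_i}-\bm{r}_k$. For a neighbor $m$ of $k$, expand $\Delta\bm{r}^{(k)}_m = \sum_i c_i u_i$ with coefficients $c = (U^{(k)})^{-1} g^{(k)}_m$. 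Here $U^{(k)}$ is the $3\times3$ Gram matrix of the $u_i$ computed from $G^{(k)}$, and $g^{(k)}_m$ is the vector of inner products $u_i\cdot\Delta\bm{r}^{(k)}_m$, also read from $G^{(k)}$. The inner products of $u_i$ with $j$-frame vectors are read from $G^{(j)}$. All entries of $\bar G^{(j)}$ then become rational functions with nonvanishing denominators $\det U^{(k)}>0$, and are therefore continuous. This also handles pairs $m,m'$ reached through different neighbors $k$ and $k'$, since each vector is expressed in terms of $j$-frame vectors.
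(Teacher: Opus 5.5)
Your proposal is correct and follows essentially the paper's route. The frame-free version you settle on for continuity, with coefficients $c=(U^{(k)})^{-1}g^{(k)}_m$ read from $G^{(k)}$ and cross inner products read from $G^{(j)}$, is the paper's normal-equation computation of $\bm{c}_a$ followed by the expansion \eqref{eq:gram-case2-expand}, and the paper's Case~1 formula \eqref{eq:gram-case1} is a special case of your single treatment. Your requirement that $\calI_j$ carry atom labels is one the paper's proof also makes tacitly; the paper discharges it later through genericity in the proof of \cref{thm:multi-hgnn}.
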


\begin{proof}
Every atom $a$ in the extended environment belongs to the
environment of at least one $k_a \in \calN(j)$.  We compute
each entry $\bar{G}^{(j)}_{ab}$ for two cases.

\smallskip\noindent\emph{Case~1: $a, b$ belong to a common
neighbor's environment.}
Let $k \in \calN(j)$ have both $a$ and~$b$ in its
environment.  Since $k \in \calN(j)$ implies
$j \in \calN(k)$, atom~$j$ appears in $G^{(k)}$.  Direct
computation gives
\begin{equation}\label{eq:gram-case1}
  \bar{G}^{(j)}_{ab}
  = G^{(k)}_{ab} - G^{(k)}_{aj} - G^{(k)}_{bj} + d_{jk}^2,
\end{equation}
where $d_{jk}^2 = G^{(j)}_{kk}$.  All quantities on the
right-hand side are known from~$\calI_j$.

\smallskip\noindent\emph{Case~2: $a$ and $b$ belong to different
neighbors' environments with no common environment containing
both.}
Let $a$ belong to $k_a$'s environment and $b$ to $k_b$'s, with
$k_a \neq k_b$.  This requires determining the relative
orientation between $k_a$'s and $k_b$'s local environments via
frame alignment.

\smallskip\noindent\emph{Gram matrix reconstruction.}
  From $G^{(j)} = \bm{X}^{(j)}(\bm{X}^{(j)})^T$, choose a
  position matrix $\bm{X}^{(j)}$ (determined up to
  right-multiplication by $R \in O(3)$).  Row~$k$ gives a
  vector $\bm{u}_k$ with
  $\Delta\bm{r}_k^{(j)} = R\,\bm{u}_k$.  Similarly, from
  $G^{(k_a)}$, choose $\bm{X}^{(k_a)}$ (up to
  $R_{k_a} \in O(3)$).
  Throughout, the center of each frame is assigned the zero vector, so that Gram entries carrying the index of the center vanish.

\smallskip\noindent\emph{Frame alignment via shared atoms.}
  By the overlap condition,
  $\calN[j] \cap \calN[k_a]$ contains atoms
  $m_1, m_2, m_3$ whose displacements relative to~$k_a$ are linearly independent.
  Each shared atom~$m$
  satisfies
  $R_{k_a}\,\bm{v}_m^{(k_a)} = R(\bm{u}_m - \bm{u}_{k_a})$.
  Define $V_{k_a}$ and $U_{k_a}$ as the $3 \times 3$ matrices
  with columns $\bm{v}_{m_i}^{(k_a)}$ and
  $\bm{u}_{m_i} - \bm{u}_{k_a}$, respectively.
  Linear independence implies $V_{k_a}$ is non-singular.  Then
  \begin{equation}\label{eq:frame-align}
    T_{k_a} \equiv R^{-1}\,R_{k_a}
    = U_{k_a}\,V_{k_a}^{-1}.
  \end{equation}

\smallskip\noindent\emph{Expansion coefficients.}
  Define $\bm{c}_a = V_{k_a}^{-1}\,\bm{v}_a^{(k_a)} \in \RR^3$,
  so that the position of~$a$ in $j$'s frame is
  $\bm{u}_a = \bm{u}_{k_a} + U_{k_a}\,\bm{c}_a$.
  The coefficients $\bm{c}_a$ satisfy
  the normal equations
  \begin{equation}\label{eq:normal-eq}
    (V_{k_a}^T V_{k_a})\,\bm{c}_a
    = V_{k_a}^T\,\bm{v}_a^{(k_a)},
  \end{equation}
  where $(V_{k_a}^T V_{k_a})_{ij} = G^{(k_a)}_{m_i,\, m_j}$
  and $(V_{k_a}^T \bm{v}_a^{(k_a)})_i = G^{(k_a)}_{m_i,\, a}$
  are entries of~$G^{(k_a)}$.  Since $V_{k_a}$ is non-singular,
  \[
    \bm{c}_a
    = \bigl(G^{(k_a)}_{m_i,\,m_j}\bigr)^{-1}\,
      \bigl(G^{(k_a)}_{m_i,\,a}\bigr),
  \]
  a continuous function of Gram entries of~$G^{(k_a)}$.
  Similarly for $\bm{c}_b$ via~$G^{(k_b)}$.

\smallskip\noindent\emph{Inner product formula.}
  Expanding $\bm{u}_a \cdot \bm{u}_b
  = (\bm{u}_{k_a} + U_{k_a}\,\bm{c}_a)
    \cdot (\bm{u}_{k_b} + U_{k_b}\,\bm{c}_b)$ and using
  $\bm{u}_\alpha \cdot \bm{u}_\beta = G^{(j)}_{\alpha,\beta}$:
  \begin{equation}\label{eq:gram-case2-expand}
    \bar{G}^{(j)}_{ab}
    = G^{(j)}_{k_a,\,k_b}
    + \bm{c}_a^T\,\bm{g}_{k_a \to k_b}
    + \bm{c}_b^T\,\bm{g}_{k_b \to k_a}
    + \bm{c}_a^T\,M_{k_a,\,k_b}\,\bm{c}_b,
  \end{equation}
  where
  \begin{align}
    (\bm{g}_{k_a \to k_b})_i
      &= G^{(j)}_{m_i,\,k_b} - G^{(j)}_{k_a,\,k_b},
    \label{eq:gvec-def} \\
    (M_{k_a,\,k_b})_{ij}
      &= G^{(j)}_{m_i,\,m_j'}
       - G^{(j)}_{m_i,\,k_b}
       - G^{(j)}_{k_a,\,m_j'}
       + G^{(j)}_{k_a,\,k_b}.
    \label{eq:Mmat-def}
  \end{align}
  Here $m'_1, m'_2, m'_3$ denote the shared atoms chosen for the pair $(j, k_b)$, in analogy with the $m_i$ for $(j, k_a)$.
  Every term is a continuous function of entries of~$\calI_j$.

\smallskip\noindent\emph{Independence of the choice of shared atoms.}
The coefficients $\bm{c}_a$, $\bm{g}_{k_a \to k_b}$, and
$M_{k_a, k_b}$ all depend on the choice of shared atoms
$m_1, m_2, m_3$.  However, the final value
$\bar{G}^{(j)}_{ab}$ does not: since
$\bm{u}_a = \bm{u}_{k_a} + U_{k_a}\,\bm{c}_a$ and
$T_{k_a} = U_{k_a} V_{k_a}^{-1}$ is the unique linear map
sending $\bm{v}_m^{(k_a)} \mapsto \bm{u}_m - \bm{u}_{k_a}$
for \emph{all} shared atoms~$m$, the product
$U_{k_a}\,\bm{c}_a = T_{k_a}\,\bm{v}_a^{(k_a)}$ is
independent of which triple defines $V_{k_a}$ and $U_{k_a}$.
Hence $\bm{u}_a$ is the same for any valid triple, and
$\bar{G}^{(j)}_{ab} = \bm{u}_a \cdot \bm{u}_b$ is
well-defined.

In both cases, $\bar{G}^{(j)}_{ab}$ is a well-defined
continuous function of~$\calI_j$.  Since this holds for every
pair $(a, b)$, the entire extended Gram matrix $\bar{G}^{(j)}$
is uniquely and continuously determined by~$\calI_j$.
\end{proof}

We are ready to prove the multi-layer theorem.
The argument proceeds by induction on $L$. At each layer, \cref{lem:arch-express} provides the required approximation capabilities, while \cref{lem:info-suff} ensures that the information reaching that layer is sufficient to determine the extended environment it must represent.
The induction then composes these layerwise approximations to establish the full $L$-layer completeness result.
\begin{theorem}[Multi-layer HGNN approximates
  a complete representation]
\label{thm:multi-hgnn}
Let the $L$-layer HGNN have equal cutoffs $r_c^{(3)} = r_c^{(2)} = r_c$ and sufficient feature dimensions $d_a$ and~$d$.
Then there is a continuous complete representation $\Phi^{*,(L)}$ on $\calE^{[r_c, L]}$ such that for every compact $K \subset \calE^{[r_c, L]}$ consisting of generic environments that satisfy the overlap condition, and every $\delta > 0$, there exist MLP weights for which the $L$-layer HGNN representation $\Phi^{(L)} : \calD_j \mapsto \bm{n}_j^{(L)}$ satisfies $\|\Phi^{(L)} - \Phi^{*,(L)}\|_{C^0(K)} < \delta$.
\end{theorem}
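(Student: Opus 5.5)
The plan is an induction on~$L$. We first fix the target $\Phi^{*,(L)}$ and then show that the HGNN tracks it layer by layer.

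\emph{Target.} Take $\Phi^{*,(L)}$ to be the complete representation of \cref{thm:existence} at cutoff $R = Lr_c$, restricted to the open subset $\calE^{[r_c,L]} \subset \calE^{(Lr_c)}$. It is continuous and complete there because completeness is inherited by restriction to a subset. The base case $L=1$ is \cref{thm:single-hgnn}.

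\emph{Inductive hypothesis.} The natural statement to propagate is stronger than the theorem. For every compact set of generic, overlap-satisfying environments, layer~$l$ can produce node features $\bm{n}_k^{(l)}$ that uniformly approximate an injective continuous encoding of the $l$-hop data of atom~$k$. A convenient encoding is $\bigl(z_k, \Phi^{*,(l)}(\calD_k^{[r_c,l]})\bigr)$ together with a continuous recoverable copy of a canonical form of the $l$-hop Gram matrix of~$k$.

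\emph{Inductive step.} In layer $l+1$ the center $j$ sees three kinds of data:
\begin{itemize}
\item its own 1-hop Gram matrix $G^{(j)}$;
\item for each neighbor~$k$, the feature $\bm{n}_k^{(l)}$, which encodes the Gram matrix of $k$'s $l$-hop environment up to the permutation ambiguity;
\item the distances $d_{jk}$.
\end{itemize}
Applying the argument of \cref{lem:info-suff} with the 1-hop data of neighbors replaced by their $l$-hop data should determine the $(l+1)$-hop Gram matrix $\bar G^{(j)}$ continuously, up to $\Gamma$.

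\begin{itemize}
\item \emph{Case~1 entries} come directly from \eqref{eq:gram-case1}.
\item \emph{Case~2 entries} require frame alignment between $j$'s frame and $k$'s frame through three shared atoms in $\calN[j]\cap\calN[k]$, as in \eqref{eq:frame-align}. This is supplied by \cref{def:overlap-env}, since $j$ has depth~$0 \le l$. Atoms deeper in $k$'s environment are then expressed through the expansion coefficients of \cref{lem:info-suff}.
\end{itemize}
Composing with the polynomial invariants of \cref{thm:existence} gives a continuous $\Gamma$-invariant function of the labeled data $\bigl(G^{(j)}, (\bm{n}_k^{(l)})_k\bigr)$ that equals $\Phi^{*,(l+1)}$.

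The hypotheses of \cref{lem:arch-express} then have to be checked:
\begin{itemize}
\item \emph{Distinct pairs.} The pairs $(\bm{n}_k^{(l)}, d_{jk})$ are distinct by genericity, since distinct same-type neighbors have distinct $d_{jk}$.
\item \emph{Disjoint type sets.} Features of different types lie in disjoint sets because $z_k$ is carried in the encoding.
\end{itemize}
\cref{lem:arch-express} then yields MLP weights for layer $l+1$ approximating this function. A small difficulty is that the lemma aggregates over pairs and sees only $G^{(j)}$ and neighbor features; it does not see $G^{(k)}$ rows matched to $j$'s indices. I would resolve this by including in $\bm{n}_k^{(l)}$ the local geometry keyed by distances. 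Shared atoms can then be identified across frames by their (type, distance-to-$k$, distance-to-$j$) signatures, which genericity makes unambiguous.

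\emph{Error propagation.} Layer~$l$ outputs only approximations of the ideal features, so layer $l+1$ receives perturbed inputs. I would handle this with the standard compactness argument:
\begin{itemize}
\item the ideal layer-$(l+1)$ map is continuous on a compact neighborhood of the ideal feature set, hence uniformly continuous there;
\item choose the layer-$(l+1)$ MLPs first, to accuracy $\delta/2$ on that neighborhood;
\item then choose the lower layers accurately enough, via the inductive hypothesis, that the perturbed inputs stay in the neighborhood and cost at most $\delta/2$.
\end{itemize}
The per-stratum assembly uses the disjoint type embeddings, as in \cref{thm:single-hgnn}.

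\emph{Main obstacle.} I expect the hardest step to be continuity of the inversion ``feature $\mapsto$ Gram matrix'' used inside the inductive step. Two things must be shown:
\begin{itemize}
\item the frame alignment is continuous, which holds because $V_{k_a}^{-1}$ is continuous while $V_{k_a}$ stays nonsingular, and on the compact set $K$ the overlap determinants are bounded away from zero;
\item the choice of shared triple can be made locally constant, or its independence can be exploited as in \cref{lem:info-suff}.
\end{itemize}
On compact generic sets, distances stay uniformly separated and those determinants are uniformly bounded below, which is what makes the induction close.
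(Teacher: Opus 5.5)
Your proposal has the same skeleton as the paper's proof. It uses the same target (\cref{thm:existence} at cutoff $Lr_c$, restricted to $\calE^{[r_c,L]}$) and the same induction from \cref{thm:single-hgnn}. It identifies $j$ and the shared atoms inside each neighbor's Gram matrix by types and distances, uses \cref{lem:info-suff} for the extended Gram matrix, makes the same two hypothesis checks for \cref{lem:arch-express}, and does the same uniform-continuity bookkeeping for the error of the lower layers.

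Where you diverge is in producing a \emph{continuous} $\Gamma$-invariant function $g$ of the labeled data with $\Phi^{*,(l+1)} = g \circ \nu$. You build $g$ explicitly: decode each label into a Gram matrix, align frames, apply the invariant polynomials. This forces you to add a continuous canonical form of the $l$-hop Gram matrix to the inductive hypothesis. It also obliges you to prove continuity of the decoding, of the choice of shared triples, and, implicitly, of merging atoms that appear in several neighbors' environments. The paper keeps the inductive hypothesis exactly as the theorem states it. It proves only that $\nu$ is injective modulo~$\Gamma$ on~$K$: a complete invariant determines $G^{(k)}$ up to relabeling, genericity fixes the indices of the atoms of $\calN[j]\cap\calN[k]$, and then \cref{lem:info-suff,lem:gram-recon} apply. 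Continuity of~$g$ then comes for free from the closed-map and Tietze argument in the proof of \cref{lem:arch-express}. So the ``main obstacle'' you single out never arises in the paper. Your route buys an explicit description of what layer $l+1$ computes, in the spirit of \cref{rem:constructive}, at the price of those extra continuity checks.

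If you keep your version, two things need repair.

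First, the canonical form is asserted rather than constructed, and keying by distance does not supply it. Genericity of an $L$-hop environment only separates same-type atoms inside each atom's $r_c$-neighborhood. Two same-type atoms two hops from~$k$ can therefore be equidistant from~$k$, and sorting by type and distance to~$k$ is then ill-defined and discontinuous. You would need path-based keys. More simply, drop the canonical form and use that a continuous complete invariant on a compact set determines the Gram orbit continuously, which is the paper's argument. Note also that \cref{thm:single-hgnn} alone does not give your strengthened base case.

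Second, to invoke the inductive hypothesis for the lower layers, you must check that the sub-environments $\calD_k^{[r_c,l]}$, $k \in \calN(j)$, arising from~$K$ form a compact set of generic environments satisfying \cref{def:overlap-env}. Compactness uses that no distance equals~$r_c$, so the hop structure is locally constant. The overlap condition transfers because an atom of depth at most $l-1$ in $\calD_k^{[r_c,l]}$ has depth at most~$l$ in $\calD_j^{[r_c,l+1]}$, and $k$ keeps its neighbor~$j$. The paper carries out this check; your proposal skips it.
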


\begin{proof}
By induction on~$L$.  We prove the statement $P(L)$:
\textit{there is a continuous complete representation $\Phi^{*,(L)}$ on $\calE^{[r_c, L]}$ such that for every compact $K$ as in the statement of the theorem and every $\delta > 0$, there exist MLP weights for the $L$-layer HGNN with $\|\Phi^{(L)} - \Phi^{*,(L)}\|_{C^0(K)} < \delta$.
}

The overlap condition gives every center in such a~$K$ at least two neighbors, since the pair formed with one neighbor has a spanning triple containing at least two atoms of~$\calN(j)$.

Base case $P(1)$ follows from \cref{thm:single-hgnn}.

Assume $P(L)$ holds.  Fix the weights for layers $1, \ldots, L$
as guaranteed by~$P(L)$, at an accuracy that may be taken as small as needed.  We find weights for layer~$L+1$.

For $k \in \calN(j)$, genericity keeps every interatomic distance away from~$r_c$, so the hop structure is locally constant and the sub-environment $\calD_k^{[r_c, L]}$ depends continuously on~$\calD_j^{[r_c, L+1]}$.
The set $K_L$ of all sub-environments arising from~$K$ is therefore compact.
Genericity passes to sub-environments.
The overlap condition passes as well.
An atom of hop depth at most $L - 1$ in a sub-environment $\calD_k^{[r_c, L]}$ has hop depth at most $L$ in $\calD_j^{[r_c, L+1]}$, so the overlap condition of the parent applies to its pairs, and the shared neighbors thus provided lie within one hop of that atom, hence at depth at most $L$ and inside the sub-environment.
Each sub-environment center~$k$ retains its neighbor~$j$, as \cref{def:overlap-env} requires.
Hence $K_L$ qualifies for~$P(L)$.

Let $\calW^{(L)} = \calV \times (\RR^d)^n$ denote the
labeled data space for layer~$L+1$, and define the map
$\nu \colon \calD_j^{[r_c, L+1]} \mapsto
\calW_j = (G^{(j)},
(\Phi^{*,(L)}(\calD_1^{[r_c, L]}), \ldots, \Phi^{*,(L)}(\calD_n^{[r_c, L]})))
\in \calW^{(L)}$,
which labels each neighbor with the exact complete representation of its $L$-hop environment supplied by~$P(L)$.
The node features computed by the first $L$ layers approximate these labels, and we account for the difference at the end of the proof.

We first show that $\nu$ is injective modulo~$G$ on the environments of~$K$.
Since $\calN^{L+1}(j) = \bigl(\bigcup_{k \in \calN(j)}
(\{k\} \cup \calN^L(k))\bigr) \setminus \{j\}$, every atom in the $(L+1)$-hop
environment appears in some $k$'s $L$-hop environment.
The $k$-th label is a complete invariant of $\calD_k^{[r_c, L]}$, so it determines the Gram matrix~$G^{(k)}$ and the atom types of that environment up to a relabeling of its atoms.
Genericity fixes the relabeling where it matters.
For $k \in \calN(j)$, the distances from~$k$ to its neighbors are pairwise distinct, and the distance from~$k$ to any atom of~$\calN[j]$ is computable from~$G^{(j)}$.
Hence every atom of $\calN[j] \cap \calN[k]$, and in particular the center~$j$, occupies a determined index in~$G^{(k)}$.
The labeled data~$\calW_j$ therefore determines $\calI_j = (G^{(j)}, \{G^{(k)}\}_{k \in \calN(j)})$ together with these identifications.
Suppose two environments of~$K$ have the same image under~$\nu$ up to~$\Gamma$.
Then they have the same collection~$\calI_j$ of Gram matrices, so by \cref{lem:info-suff} they have the same extended Gram matrix $\bar{G}^{(j)}$, and the labels give their atoms the same types.
By \cref{lem:gram-recon} the positions then agree up to an orthogonal transformation, so the two environments are symmetry-equivalent.

This lets the target representation be pulled back to the labeled data.
By \cref{thm:existence} applied at cutoff $(L+1)r_c$, a continuous complete representation exists on $\calE^{((L+1)r_c)}$.
Since $\calE^{[r_c, L+1]}$ is a $G$-invariant subset, its restriction $\Phi^{*,(L+1)}$ is continuous and complete there.
Because $\nu$ is injective modulo~$G$ on~$K$, $\Phi^{*,(L+1)}$ factors on~$K$ as $\Phi^{*,(L+1)} = g \circ \nu$ for a continuous $\Gamma$-invariant function~$g$ on~$\calW^{(L)}$, by the same factorization and Tietze extension argument used in the proof of \cref{lem:arch-express}.

It remains to approximate~$g$ with a single layer.
As in the proof of \cref{thm:single-hgnn}, it suffices to work on a fixed stratum.
The cross-stratum case follows from the label separation established below.
We verify the hypotheses of \cref{lem:arch-express} for the labels $(\Phi^{*,(L)}(\calD_k^{[r_c, L]}), d_{jk})$, $k \in \calN(j)$.
Let $k \neq k'$ in $\calN(j)$.
If $z_k = z_{k'}$, genericity gives $d_{jk} \neq d_{jk'}$, so the pairs differ in their second component.
If $z_k \neq z_{k'}$, the environments $\calD_k^{[r_c, L]}$ and $\calD_{k'}^{[r_c, L]}$ have different center types, hence lie in different strata and are not symmetry-equivalent, so the complete representation $\Phi^{*,(L)}$ separates them.
The separation $\sigma = \min \|\Phi^{*,(L)}(\calD_k^{[r_c, L]}) - \Phi^{*,(L)}(\calD_{k'}^{[r_c, L]})\|$, taken over $K$ and over the finitely many pairs with $z_k \neq z_{k'}$, is positive because $K$ is compact and $\Phi^{*,(L)}$ is continuous.
The labels of neighbors of different types therefore lie in disjoint sets, and in either case the pairs are distinct.
Both hypotheses of \cref{lem:arch-express} are thus satisfied, so there exist MLP weights for layer~$L+1$ whose realized map~$\tilde{g}$ approximates~$g$ to within~$\delta/2$ on~$\nu(K)$.

Finally, we account for the difference between the exact labels and the computed node features.
The realized map~$\tilde{g}$ is uniformly continuous on a compact neighborhood of~$\nu(K)$, so there is an $\epsilon > 0$ such that inputs within~$\epsilon$ of each other produce outputs within~$\delta/2$.
The exact labels and the computed features differ only in the feature slots, by at most $\max_{k \in \calN(j)} \|\bm{n}_k^{(L)} - \Phi^{*,(L)}(\calD_k^{[r_c, L]})\|$, which is the accuracy of the first $L$ layers on~$K_L$.
Invoking $P(L)$ on~$K_L$ at accuracy~$\epsilon$ therefore keeps the layer-$(L+1)$ output on the computed features within~$\delta/2$ of $\tilde{g} \circ \nu$.
Hence $\Phi^{(L+1)}$ approximates $g \circ \nu = \Phi^{*,(L+1)}$ to within~$\delta$ on~$K$, establishing $P(L+1)$.

By induction, $P(L)$ holds for all $L \ge 1$.
\end{proof}

The induction shows that suitable weights exist but does not say what the network computes at each layer.
For the atoms that share a neighbor's environment this can be made explicit, and the resulting formula shows how Gram information propagates one hop at a time.

\begin{remark}[Constructive Gram propagation]
\label{rem:constructive}
The Case~1 formula~\eqref{eq:gram-case1} provides constructive
insight into how the HGNN extends Gram information layer by layer.
For atoms $a, b$ in a common neighbor's $L$-hop environment
$\calD_k^{[r_c, L]}$, the extended Gram entry
$\bar{G}^{(j)}_{ab} = G^{(k)}_{ab} - G^{(k)}_{aj} -
G^{(k)}_{bj} + d_{jk}^2$
involves only entries of $k$'s Gram matrix and $d_{jk}$.  The HGNN
can evaluate this through~$\rho_3$, since by~$P(L)$ the node
feature~$\bm{n}_k^{(L)}$ approximately encodes a complete
invariant of~$\calD_k^{[r_c, L]}$, from which any continuous
function of~$G^{(k)}$ can be extracted by MLP universality.
Setting $b = a$ yields
$d_{ja}^2 = G^{(k)}_{aa} - 2G^{(k)}_{aj} + d_{jk}^2$,
showing that distances to all $(L+1)$-hop atoms are
constructively determined.
The cross-environment Case~2 requires the universality argument
(\cref{lem:arch-express}) rather than an explicit formula.
\end{remark}

\section{Universal Approximation Theorem}
\label{sec:uat}

\cref{thm:multi-hgnn} supplies a complete representation of the $L$-hop environment, and \cref{thm:completeness-uat} turns completeness into universal approximation. A combination of two results gives the main theorem.
\begin{theorem}[UAT for HGNN]
\label{thm:uat}
Let the $L$-layer HGNN have equal cutoffs $r_c^{(3)} = r_c^{(2)} = r_c$ and sufficient feature dimensions and MLP capacity per layer.
Let $K \subset \calE^{[r_c, L]}$ be a compact set of generic environments that satisfy the overlap condition, each drawn from a configuration satisfying the connectivity condition, with $L$ large enough that $\calD_j^{[r_c, L]} \supseteq \calD_j^{(R_c)}$.
Then for every $\varepsilon \in \calF_{R_c}^0$ and every $\delta > 0$ there exist parameters~$\theta$ with $\|f_\theta \circ \Phi^{(L)} - \varepsilon\|_{C^0(K)} < \delta$.
\end{theorem}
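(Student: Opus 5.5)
The plan is to combine \cref{thm:multi-hgnn} with the completeness-implies-UAT direction of \cref{thm:completeness-uat}. The twist is that the target $\varepsilon$ lives on $\calE^{(R_c)}$, while the representation lives on $\calE^{[r_c,L]}$. So the first step is to transport $\varepsilon$ to a function on the $L$-hop environments in $K$.

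Define the restriction map $\kappa\colon \calD_j^{[r_c,L]} \mapsto \calD_j^{(R_c)}$, which deletes every atom at distance $\ge R_c$ from the center. This map is well defined on $K$ because, by the connectivity hypothesis, $\calD_j^{[r_c,L]} \supseteq \calD_j^{(R_c)}$. It is also $G$-equivariant, since rotations and type-preserving permutations commute with deleting far atoms.

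Set $\tilde\varepsilon = \varepsilon \circ \kappa$ on $K$. Then $\tilde\varepsilon$ is $G$-invariant. It is also continuous on $K$, and this is the main obstacle: $\kappa$ itself is discontinuous when an atom crosses the sphere $|\Delta\bm r| = R_c$. I would handle it through the cross-stratum matching, condition~3 of \cref{def:target-space}. As $|\Delta\bm r_m| \to R_c$ from inside, $\varepsilon(\calD) \to \varepsilon(\calD^-)$. Hence $\varepsilon\circ\kappa$ is continuous along any sequence in a stratum of $\calE^{[r_c,L]}$, including sequences in which atoms enter or leave the $R_c$-ball. Concretely, $\varepsilon$ extends to a continuous function of the positions in $\overline{B}_{R_c}$ with $\varepsilon$ independent of atoms on the boundary, and atoms outside the ball are ignored. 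I would spell this out stratum by stratum for $\calE^{[r_c,L]}$, using that a compact $K$ meets finitely many strata.

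Next, fix $\delta>0$ and let $\Phi^{*,(L)}$ be the continuous complete representation on $\calE^{[r_c,L]}$ supplied by \cref{thm:multi-hgnn}. The completeness-implies-UAT argument in \cref{thm:completeness-uat} does not actually use the domain $\calE^{(R_c)}$. It needs only that the representation is continuous and separates $G$-orbits, that the target is continuous and $G$-invariant, and that $K$ is compact. Repeating that argument on $K$ therefore gives a continuous $f$ on $\RR^d$ with $f\circ\Phi^{*,(L)} = \tilde\varepsilon$ on $K$: first on the compact set $\Phi^{*,(L)}(K)$ via the quotient-map argument, then extended to $\RR^d$ by Tietze. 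By MLP universality, choose $f_\theta$ with $|f_\theta - f| < \delta/2$ on the compact neighborhood $\Sigma_1 = \{y : \operatorname{dist}(y,\Phi^{*,(L)}(K)) \le 1\}$.

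Finally, $f_\theta$ is uniformly continuous on $\Sigma_1$. Pick $\eta \in (0,1)$ such that points of $\Sigma_1$ within $\eta$ of each other have $f_\theta$-values within $\delta/2$. Apply \cref{thm:multi-hgnn} on $K$, whose environments are generic and satisfy the overlap condition, with accuracy $\eta$, so that $\|\Phi^{(L)} - \Phi^{*,(L)}\|_{C^0(K)} < \eta$. Then $\Phi^{(L)}(K) \subset \Sigma_1$, and the triangle inequality gives
\[
|f_\theta(\Phi^{(L)}) - \varepsilon| \le |f_\theta(\Phi^{(L)}) - f_\theta(\Phi^{*,(L)})| + |f_\theta(\Phi^{*,(L)}) - f(\Phi^{*,(L)})| < \delta
\]
uniformly on $K$.
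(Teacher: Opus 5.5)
Your proposal is correct and follows essentially the same route as the paper. You transport $\varepsilon$ to $K$ by discarding atoms beyond $R_c$, with continuity from condition~3 of \cref{def:target-space}. You then run the forward direction of \cref{thm:completeness-uat} with $\Phi^{*,(L)}$ on $K$, and finish with uniform continuity of the readout on a compact neighborhood plus \cref{thm:multi-hgnn} at matching accuracy. The only difference is that you make explicit the restriction map $\kappa$ and the neighborhood $\Sigma_1$ with $\eta<1$, which the paper leaves implicit.
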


\begin{proof}
Let $\Phi^{*,(L)}$ be the complete representation of \cref{thm:multi-hgnn}, whose hypotheses on~$K$ are part of the present assumptions.
The target~$\varepsilon$ depends only on the neighbors within~$R_c$, so it defines a $G$-invariant function on~$K$, continuous by condition~3 of \cref{def:target-space}.
By completeness of $\Phi^{*,(L)}$ and the argument of the forward implication of \cref{thm:completeness-uat}, applied on $\calE^{[r_c, L]}$ in place of $\calE^{(R_c)}$, there exists a readout~$f_\theta$ with $\|f_\theta \circ \Phi^{*,(L)} - \varepsilon\|_{C^0(K)} < \delta/2$.
The readout~$f_\theta$ is uniformly continuous on a compact neighborhood of $\Phi^{*,(L)}(K)$, so there is an $\epsilon > 0$ such that inputs within~$\epsilon$ produce outputs within~$\delta/2$.
By \cref{thm:multi-hgnn} there exist layer weights with $\|\Phi^{(L)} - \Phi^{*,(L)}\|_{C^0(K)} < \epsilon$.
Combining the two bounds gives $\|f_\theta \circ \Phi^{(L)} - \varepsilon\|_{C^0(K)} < \delta$.
\end{proof}

The theorem is stated for the HGNN, which was introduced as a reference architecture. Any network that can reproduce its layer inherits the conclusion. We verify this for DPA3 and CHGNet in the appendices.
\begin{corollary}[UAT for DPA3]
\label{cor:dpa3}
Under the same conditions as \cref{thm:uat},
the $L$-layer DPA3 architecture with
$r_c^{(3)} = r_c^{(2)} = r_c$, sufficient feature
dimension, sufficient MLP capacity, and readout
network~$f_\theta$ approximates every $\varepsilon \in \calF_{R_c}^0$ to arbitrary accuracy on~$K$.
\end{corollary}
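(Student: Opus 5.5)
The plan is to reduce \cref{cor:dpa3} to \cref{thm:uat} by a layerwise simulation argument. We show that for every choice of HGNN weights and every compact set of inputs, a DPA3 network with the same number of layers and suitable weights reproduces the HGNN node features $\bm{n}_j^{(l)}$ to arbitrary accuracy. The conclusion then follows exactly as in the proof of \cref{thm:uat}. We fix $\theta$ so that $\|f_\theta \circ \Phi^{(L)} - \varepsilon\|_{C^0(K)} < \delta/2$, and we use uniform continuity of $f_\theta$ on a compact neighborhood of $\Phi^{(L)}(K)$ to get a tolerance $\epsilon$. It then suffices for the DPA3 node features to lie within $\epsilon$ of the HGNN features on~$K$.

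For a single layer the simulation proceeds through the four sequential DPA3 updates, with $r_c^{(3)} = r_c^{(2)} = r_c$ so that all neighbor sets agree with $\calN(j)$.
\begin{enumerate}
\item Initialization: the distance embedding is chosen injective in $d_{ji}$ and the angle embedding injective in $\cos\theta_{ijk}$.
\item Edge self-message: $\phi_s^{(1)}$ is used to write $(\bm{n}_j^{(l)}, \bm{n}_i^{(l)}, d_{ji})$ into reserved coordinates of $\bm{e}_{ji}^{(l+\frac12)}$. The residual form is harmless, since $\phi_s^{(1)}$ can cancel the old content and the step size $\delta_s^{(1)}$ can be set to~$1$.
\item Angle self-message: $\phi_s^{(2)}$ gives $\bm{a}_{j;i,k}^{(l+1)}$ direct access to $(\bm{n}_j,\bm{n}_i,\bm{n}_k,d_{ji},d_{jk},\cos\theta_{ijk})$, which is exactly the argument of $\rho_3$ in~\eqref{eq:angle-feat}.
\item Angle $\to$ edge: we take $\phi_c^{(2)}$ to approximate $\rho_3$ divided by $s_a(d_{ji})$. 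This division is legitimate on compact sets where $s_a$ is bounded below, as in Step~1 of \cref{lem:arch-express}. The result is that $\phi_u^{(2)}$, taken as an identity-like MLP on compact sets, stores the partial sum
\[
  \bm{b}_{ji} = \sum_{k \ne i} s_a(d_{ji})\,s_a(d_{jk})\,\bm{a}_{j;i,k}
\]
in reserved coordinates of $\bm{e}_{ji}^{(l+1)}$.
\item Edge $\to$ node: the edge weight $w_{ji} = s_a(d_{ji})$ must be removed again, so $\phi_c^{(1)}$ approximates $\bm{b}_{ji}/s_a(d_{ji})$. Summing over $i$ then yields the HGNN aggregate $\sum_{i\ne k} s_a s_a\,\bm{a}_{j;i,k}$, and $\phi_u^{(1)}$ approximates $\psi_3$ minus the residual $\bm{n}_j^{(l)}$. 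Here the residual must be available to $\phi_u^{(1)}$, which we arrange by adding a term $\bm{n}_j^{(l)}/\deg(j)$ inside $\phi_c^{(1)}$. An alternative is to keep $\bm{n}_j$ in spare coordinates whose residual is cancelled at the end.
\end{enumerate}
We must also check that DPA3's stored features at the start of the next layer (edges and angles carried by residuals) do not obstruct this. The reserved coordinates are overwritten each layer, so stale content is irrelevant.

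The main obstacle is the error bookkeeping: every MLP step introduces an approximation error, and the domain of each MLP must be a compact set on which MLP universality applies. The plan is a backward induction on the four substeps and the $L$ layers, mirroring the end of the proof of \cref{thm:multi-hgnn}. Each exact map is uniformly continuous on a compact neighborhood of the image of $K$, so tolerances can be propagated backwards from $\epsilon$. The divisions by switch values need $s_a(d_{ji})$ bounded away from zero on the relevant compact set. This holds because genericity keeps distances away from $r_c$ and $K$ is compact, so $\min s_a > 0$ there. The argument also needs a bound $M$ on neighbor counts so that the summed errors stay controlled. With these in place, the DPA3 network approximates $\Phi^{(L)}$ uniformly on $K$, and \cref{thm:uat} gives the corollary.
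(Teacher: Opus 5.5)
Your proposal is correct and follows essentially the paper's proof: one DPA3 layer simulates one HGNN layer through the four sequential updates, with the edge self-message loading $(\bm{n}_j^{(l)}, \bm{n}_i^{(l)}, d_{ji})$, the switch prefactors divided out where they are bounded below on~$K$, and the approximation errors propagated through finitely many layers by uniform continuity. Two details need touching up but are harmless: with $\phi_c^{(2)} \approx \rho_3/s_a(d_{ji})$ the edge actually stores $\bm{b}_{ji}/s_a(d_{ji})$ rather than $\bm{b}_{ji}$, which is fine because any positive continuous factor of $d_{ji}$ can be moved between your angle-to-edge and edge-to-node steps; and $\deg(j)$ is not an input of $\phi_c^{(1)}$, so use your spare-coordinate alternative, or the paper's device of summing $\bm{n}_j^{(l)}$ together with a constant~$1$ and letting $\phi_u^{(1)}$ take the quotient.
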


The proof simulates one HGNN layer by a single DPA3 layer and is given in \cref{sm:dpa3} of the supplementary materials.

\begin{definition}[Strong genericity]
\label{def:strong-generic}
A local environment $\calD \in \calE^{(r)}$ is \emph{strongly generic} if no two neighbors are equidistant from the center, that is, if $i \neq k$ implies $|\Delta\bm{r}_i| \neq |\Delta\bm{r}_k|$.
An $L$-hop environment is strongly generic if every local environment within it is strongly generic.
\end{definition}

As with \cref{def:generic}, the excluded set is a finite union of zero sets of nonzero polynomials, so the strongly generic environments form an open dense subset of full measure in each stratum.

\begin{corollary}[UAT for CHGNet]
\label{cor:chgnet}
Under the same conditions as \cref{thm:uat},
with the atom graph and the bond graph sharing the cutoff~$r_c$, and with every environment in~$K$ strongly generic in the sense of \cref{def:strong-generic},
the $L$-layer CHGNet architecture of \cref{sm:chgnet}, with sufficient feature
dimension, sufficient MLP capacity, and readout
network~$f_\theta$, approximates every $\varepsilon \in \calF_{R_c}^0$ to arbitrary accuracy on~$K$.
\end{corollary}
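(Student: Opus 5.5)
The plan is to reduce \cref{cor:chgnet} to \cref{thm:uat} by showing that $L$ CHGNet layers can simulate $L$ HGNN layers to arbitrary accuracy on~$K$. The readout is then handled exactly as in the proof of \cref{thm:uat}. We fix the HGNN weights and readout $f_\theta$ that \cref{thm:uat} provides at accuracy $\delta/2$. By uniform continuity of each HGNN layer on compact neighborhoods of its inputs, it suffices to show that one CHGNet layer, fed node features within~$\epsilon$ of the HGNN features, can output node features within any prescribed~$\epsilon'$ of the next HGNN layer's output. A backward chain of moduli over the $L$ layers, the same device used at the end of the proof of \cref{thm:multi-hgnn}, then gives the claim.

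For the single-layer simulation I would follow the angle $\to$ bond $\to$ atom pipeline of CHGNet (\cref{sm:chgnet}). The atom graph and bond graph share the cutoff~$r_c$, so the angle features at~$j$ run over exactly the ordered pairs $i\neq k$ in $\calN(j)$, matching the HGNN index set in~\eqref{eq:aggregation}. The steps are as follows.
\begin{enumerate}
\item Use the first atom$\to$bond and bond$\to$angle updates, which are MLPs, to load $\bm{n}_j,\bm{n}_i,\bm{n}_k,d_{ji},d_{jk},\cos\theta_{ijk}$ into the angle feature $\bm{a}_{j;i,k}$.
\item Use the angle MLP to approximate the HGNN summand $s_a s_a\rho_3(\cdot)$.
\item Aggregate the angles at the bond $(j,i)$, i.e.\ sum over~$k$.
\item Aggregate the bonds at the atom~$j$, i.e.\ sum over~$i$.
\end{enumerate}
The double sum then reproduces $\sum_{i\neq k}\rho_3$, and the node update MLP plays the role of $\psi_3$.

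The main obstacle is the intermediate nonlinearity. CHGNet does not pass the raw sum over~$k$ to the node. It first applies the bond update MLP, together with switch or gating factors and residual connections. The inner sum is therefore distorted before the outer sum over~$i$. This is where strong genericity enters. Because all distances $d_{ji}$ from~$j$ are distinct, the bond $(j,i)$ is identified by $(\bm n_i, d_{ji})$ alone. By \cref{lem:deepsets-mlp}, the bond-level update can therefore realize an approximately invertible (injective, continuous) encoding of the inner multiset $\{\rho_3(\cdot)\}_{k}$ together with $d_{ji}$ and $\bm{n}_i$. The node-level aggregation is then a DeepSets sum over a multiset whose elements are pairwise distinct and labelled by distance. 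As in Step~2 of \cref{lem:arch-express}, this multiset determines all triplet data up to~$\Gamma$, and hence the labeled data $\calL$. So any continuous $\Gamma$-invariant function of $\calL$, in particular the HGNN layer output, factors through it via Tietze extension.

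Mere type-genericity would not suffice here. Two same-type-distinct but equidistant bonds could merge in a way the nested sums cannot unmix. This explains the stronger hypothesis.

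Finally, residual connections and switch prefactors are absorbed as in Step~1 of \cref{lem:arch-express}. Dividing by positive switch values is continuous on compact sets, and residual terms can be cancelled by letting the MLP approximate the target minus the identity. Compactness of all intermediate feature sets follows from continuity on~$K$. Combining the per-layer approximations with the readout bound from \cref{thm:uat} gives $\|f_\theta\circ\Phi^{(L)}_{\mathrm{CHGNet}}-\varepsilon\|_{C^0(K)}<\delta$.
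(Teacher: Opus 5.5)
There is a genuine gap in your Step~1. CHGNet, as formalized in \cref{sm:chgnet}, has no update that writes a neighbor's node feature into a bond or an angle. The bond message $\phi_e(\bm{e}_{jk}^l, \bm{e}_{ji}^l, \bm{a}_{kji}^l, \bm{n}_j^{l+1})$ and the angle message $\phi_a$ receive only the \emph{center's} feature $\bm{n}_j^{l+1}$. What you describe is DPA3's edge self-message~\eqref{eq:dpa3-eself}. The features $\bm{n}_i$ and $\bm{n}_k$ reach the summand at~$j$ only through the node-update aggregate $\sum_{m} \bm{e}^0_{jm}\odot\phi_v(\bm{n}_m, \bm{n}_j, \bm{e}_{jm})$. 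This aggregate mixes all neighbors, and because $L_v$ is linear it is not followed by any outer nonlinearity.

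The missing idea is to read this aggregate as the inner half of a DeepSets encoding of $\calS_j = \{(\bm{n}_m, d_{jm})\}_{m \in \calN(j)}$, whose outer function is supplied downstream by $\phi_e$. Then $\phi_e$ decodes $\bm{n}_i$ and $\bm{n}_k$ from $\calS_j$ by the lookup $\lambda(\calS_j, d)$, keyed on the distances $d_{ji}$ and $d_{jk}$ that the edge features carry. This lookup is well defined and continuous on~$K$ precisely because all neighbor distances are distinct with a uniform gap. That is the actual role of strong genericity.

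Your own justification of that hypothesis does not hold. If $\bm{n}_i$ were available in the bond as you assume, the pairs $(\bm{n}_i, d_{ji})$ would already be distinct under ordinary genericity, exactly as in \cref{lem:arch-express} and the DPA3 proof. So your argument never genuinely uses the stronger assumption.

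Three further steps do not match the architecture.
\begin{enumerate}
\item Since $L_v$ and $L_e$ are linear, the node update cannot apply~$\psi_3$. It must be absorbed into the next block's inner message~$\phi_v$ or into the readout.
\item The ``intermediate nonlinearity'' you identify as the main obstacle is not one. With $L_e = I$ the bond stores $\phi_d(d_{ji})$ and $F_{ji}$ in disjoint blocks, and the next $\phi_v$ only has to approximate $F_{ji} \mapsto s_a(d_{ji}) F_{ji}$. No injective nested encoding is needed.
\item Inside a CHGNet layer the order is node $\to$ bond $\to$ angle, and the node update reads the bond features of the \emph{previous} layer. So one CHGNet layer cannot carry angle $\to$ bond $\to$ atom; in the first layer, $\bm{n}^{(1)}$ depends only on types and distances. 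Moreover, $\calS_j$ must be rebuilt from the current node features before each bond update.
\end{enumerate}
This is why the paper uses two CHGNet layers per HGNN layer. The first is a node-and-bond pass that builds $\calS_j$ and stores $F_{ji}$; the second is a node pass that forms the double sum. Your per-layer error chaining would have to be organized over such pairs.
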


The proof follows the same line as that of \cref{cor:dpa3} and is given in \cref{sm:chgnet} of the supplementary materials.
The strengthening of genericity is needed because the bond message function of CHGNet receives only the center's node features, so the proof recovers each neighbor's features from its distance by a lookup, which requires all neighbor distances to be distinct.
ALIGNN shares the same angle $\to$ edge $\to$ node topology but  differs crucially in that it constructs its messages from single gated linear layers rather than MLPs, and the previous simulation argument does not carry over. We shall describe the architecture and explain where the argument fails in \cref{sm:alignn} of the supplementary materials.
%
\section{Discussion and perspective}
\label{sec:discussion}
The multi-layer theory justifies the practice of stacking layers with a per-layer cutoff smaller than the physical interaction range.
The connectivity condition sets how many layers are needed to cover that range, and the overlap condition tells one how small the per-layer cutoff may be, for condensed matter systems typically slightly above the nearest-neighbor distance.

At the architectural level, the results show that an invariant HGNN relying solely on scalar geometric inputs attains universal approximation, so raising the LiGS order beyond $K = 2$, the 3-body angular level, is not necessary for theoretical expressiveness.
It is important to emphasize, however, that this is a statement about expressiveness alone. The HGNN is intended as a theoretical reference architecture rather than a practical proposal; universal approximation, in itself, does not address data efficiency, computational cost, or the optimization landscape.
Its primary role is to serve as a target for simulation: once a practical architecture can reproduce an HGNN layer, universal approximation follows without the need to re-establish the completeness analysis from scratch.

We carry this out for DPA3 and CHGNet in \cref{cor:dpa3} and \cref{cor:chgnet}, respectively.
The proofs identify a concrete architectural criterion for
guaranteed universal approximation, that the message functions in the
aggregation steps must be \emph{MLPs}, which are universal approximators, not single linear layers.
Both architectures satisfy it.
ALIGNN~\cite{choudhary2021alignn},
which shares the same angle $\to$ edge $\to$ node topology
but uses single gated linear layers in its
convolutions, does not (\cref{rem:alignn}).
This distinction has practical implications.
Replacing the single gated linear layers in ALIGNN's convolutions with MLPs would remove the obstruction identified in \cref{rem:alignn}, although establishing universality for the modified architecture would still require a simulation argument along the lines of \cref{sm:chgnet}.

The theorems rely on genericity.
Completeness is valid on the generic environments of \cref{def:generic}, which excludes highly symmetric arrangements where same-type atoms are equidistant from the center. The restriction is in fact sharp.
The Pozdnyakov counterexample~\cite{pozdnyakov2020incompleteness} exhibits four same-species atoms on a circle, all sharing identical multisets of distances and triplet angles, yet possessing distinct Gram matrices. No single-layer 3-body invariant can separate such configurations. In practice, thermal fluctuations break exact symmetries, so sampled configurations are generic with probability one.
Moreover, in the multi-layer HGNN, the node features of earlier layers help break residual equidistance. The remaining case, zero-temperature crystals, remains a fundamental open problem. Switching to equivariant architectures does not resolve the issue, since equivariant GNNs with a fixed tensor degree $l$ also degenerate on symmetric graphs for specific values of $l$~\cite{cen2024hegnn}.
Both families also face hierarchies of degeneracies.
Raising the body order or increasing the tensor degree resolves progressively more symmetric cases, yet counterexamples are known already at low levels, and no finite level has been shown to achieve completeness across all configurations.

Several other directions remain open.
The DeepSets arguments require the angle-feature dimension~$d_a$, which serves as the latent dimension of the aggregation, to grow at least with the number of neighbor triplets, far beyond the typical practical value of~$64$.
Tightening this gap remains an open problem.
The present results are also qualitative rather than quantitative.
Bounding the approximation rate, that is, how the error decays with network size, would bridge the gap between theory and practical model selection. Finally, extending the analysis from $C^0$ to $C^k$ approximation, so as to match the smoothness of the potential energy surface including forces and Hessians, requires a more detailed study of the differentiability of the DeepSets representation. The smooth symmetrization of Pozdnyakov and Ceriotti~\cite{pozdnyakov2023smooth} offers a promising direction in this regard.

\section*{Acknowledgments}
The AI-driven experiments, simulations and model training were performed on the robotic AI-Scientist platform of Chinese Academy of Sciences. Pingbing Ming is supported by the National Key R\&D Program of China (Grant no.~2024YFA1012502) and by National Natural Science Foundation of China (Grant No.~12371438). Han Wang is supported by the National Key R\&D Program of China (Grant No.~2022YFA1004300) and by the National Natural Science Foundation of China (Grants No.~12525113 and No.~12561160120).
The authors used AI tools to assist with the development of proofs, manuscript editing, and reference verification.
All mathematical proofs have been checked by the authors to the best of their ability, and the authors assume responsibility for all content.

\bibliographystyle{siamplain}
\bibliography{references}

%% file: supplement.tex
\ifdefined\arxivbuild\else
\maketitle
\fi

\section{Proof of the DPA3 corollary}
\label{sm:dpa3}

\begin{proof}[Proof of \cref{cor:dpa3}]
It suffices to show that a single DPA3 layer can simulate one HGNN layer.
An $L$-layer HGNN is then simulated by an $L$-layer DPA3, and the result follows from \cref{thm:uat}.
Each simulated layer realizes a map uniformly continuous on compact sets, so an input error entering a layer propagates to a controlled output error.
Since there are finitely many layers, the per-layer simulation accuracies can be chosen so that the final node features are within any prescribed tolerance of the HGNN output on~$K$.

Set $r_c^{(3)} = r_c^{(2)} = r_c$.
Write $\tilde\rho_3$ for
$s_a(d_{jk})\,\rho_3(\bm{n}_j,\, \bm{n}_i,\, \bm{n}_k,\,
d_{ji},\, d_{jk},\, \cos\theta_{ijk})$,
and define the target intermediate function as
\[
  F_{ji} = \sum_{k \in \calN(j) \setminus \{i\}}
  \tilde\rho_3(\bm{n}_j,\, \bm{n}_i,\, \bm{n}_k,\,
  d_{ji},\, d_{jk},\, \cos\theta_{ijk}),
\]
so that the HGNN output~\eqref{eq:aggregation} is
$\psi_3\bigl(\sum_{i \in \calN(j)} s_a(d_{ji})\,
F_{ji}\bigr)$.
Fix a layer index~$l$ and write $\bm{n}^{(l)}$ for its input node features.
We exhibit weights for which the four updates \eqref{eq:dpa3-eself}--\eqref{eq:dpa3-e2n} carry $\bm{n}^{(l)}$ to the output of one HGNN layer.
Throughout, we maintain the invariant that a designated coordinate of $\bm{e}_{ji}^{(l)}$ carries the distance~$d_{ji}$ and a designated coordinate of $\bm{a}_{j;i,k}^{(l)}$ carries $\cos\theta_{ijk}$.
At initialization this holds by choosing distance and angle embeddings that write $d_{ji}$ and $\cos\theta_{ijk}$ into those coordinates.

\smallskip\noindent\emph{Step~1: the self-message loads node features into the edges.}
At initialization, the edge features depend only on distances, $\bm{e}_{ji}^{(0)} = \phi_d(d_{ji})$.
The self-message~\eqref{eq:dpa3-eself} gives
\begin{equation}\label{eq:dpa3-load}
  \bm{e}_{ji}^{(l+\frac{1}{2})} = \bm{e}_{ji}^{(l)}
  + \delta_s^{(1)}\, \phi_s^{(1)}\bigl(\bm{n}_j^{(l)},\,
    \bm{n}_i^{(l)},\, \bm{e}_{ji}^{(l)}\bigr).
\end{equation}
The MLP~$\phi_s^{(1)}$ receives $\bm{e}_{ji}^{(l)}$ along with both endpoint features, so by MLP universality it can approximate $\bm{x} \mapsto (\bm{x} - \bm{e}_{ji}^{(l)})/\delta_s^{(1)}$ for any continuous target~$\bm{x}$.
With sufficiently large feature dimension~$d$, we may choose the weights so that $\bm{e}_{ji}^{(l+\frac{1}{2})}$ encodes $(\bm{n}_j^{(l)}, \bm{n}_i^{(l)}, d_{ji})$ in designated coordinates.

\smallskip\noindent\emph{Step~2: the angle self-message.}
The angle update~\eqref{eq:dpa3-aself} gives
$\bm{a}_{j;i,k}^{(l+1)} = \bm{a}_{j;i,k}^{(l)} + \delta_s^{(2)}\,\phi_s^{(2)}(\bm{e}_{ji}^{(l+\frac{1}{2})},\, \bm{e}_{jk}^{(l+\frac{1}{2})},\, \bm{a}_{j;i,k}^{(l)})$.
We take $\phi_s^{(2)} = 0$, so the angle features pass through unchanged and the invariant is preserved.
By Step~1 and the invariant, the three arguments of $\phi_c^{(2)}$ in the next step then carry all six arguments of~$\tilde\rho_3$.

\smallskip\noindent\emph{Step~3: the angle aggregation produces $F_{ji}$.}
The edge update~\eqref{eq:dpa3-a2e} reads
\[
  \bm{e}_{ji}^{(l+1)} = \bm{e}_{ji}^{(l+\frac{1}{2})}
  + \delta_u^{(2)}\, \phi_u^{(2)}\Bigl(
    \sum_{k \in \calN(j) \setminus \{i\}}
    w_{jik}\, \phi_c^{(2)}\bigl(
      \bm{e}_{ji}^{(l+\frac{1}{2})},\, \bm{e}_{jk}^{(l+\frac{1}{2})},\,
      \bm{a}_{j;i,k}^{(l+1)}\bigr)
  \Bigr).
\]
By Steps~1 and~2 each summand receives all six arguments of~$\tilde\rho_3$ through a continuous encoding, so by MLP universality $\phi_c^{(2)}$ can approximate any continuous function of them.
The prefactor $w_{jik} = s_a^{(3)}(d_{ji})\, s_a^{(3)}(d_{jk})$ is positive and bounded below on~$K$, because genericity keeps all distances below~$r_c$, the set~$K$ is compact, and $s_a^{(3)}$ is positive on $[0, r_c)$.
Letting $\phi_c^{(2)}$ approximate the quotient of the desired summand by~$w_{jik}$, whose arguments are available through the edge features, the product $w_{jik}\,\phi_c^{(2)}$ realizes the summand.
The sum over~$k$ composed with the outer MLP~$\phi_u^{(2)}$ has DeepSets form, so by \cref{lem:deepsets-mlp} it approximates any continuous symmetric function of the multiset $\{(\bm{n}_k^{(l)}, d_{jk}, \cos\theta_{ijk})\}_{k \neq i}$.
The target below also depends on the parameters $(\bm{n}_j^{(l)}, \bm{n}_i^{(l)}, d_{ji})$, which the outer MLP recovers from the sum as follows.
The inner MLP writes the parameters and the constant~$1$ into designated coordinates, the sum accumulates the parameters times the summand count and the count itself, and the quotient of the two returns the parameters.
The count is at least one, because the spanning triple that the overlap condition provides for the pair $(j, i)$ contains an atom of $\calN(j) \setminus \{i\}$.
Choose the target $(\bm{x} - \bm{e}_{ji}^{(l+\frac{1}{2})})/\delta_u^{(2)}$ with $\bm{x}$ encoding $(F_{ji}, d_{ji})$ in designated coordinates, the distance coordinate taken unchanged from~$\bm{e}_{ji}^{(l+\frac{1}{2})}$.
The target is symmetric because $\bm{e}_{ji}^{(l+\frac{1}{2})}$ is constant across the sum over~$k$, so the residual update gives $\bm{e}_{ji}^{(l+1)} \approx \bm{x}$ and the invariant is preserved.

\smallskip\noindent\emph{Step~4: the edge aggregation produces the HGNN output.}
The node update~\eqref{eq:dpa3-e2n} reads
\[
  \bm{n}_j^{(l+1)} = \bm{n}_j^{(l)} + \delta_u^{(1)}\,
  \phi_u^{(1)}\Bigl(\sum_{i \in \calN(j)} w_{ji}\,
  \phi_c^{(1)}\bigl(\bm{n}_j^{(l)},\, \bm{n}_i^{(l)},\,
  \bm{e}_{ji}^{(l+1)}\bigr)\Bigr).
\]
Since $\bm{e}_{ji}^{(l+1)}$ encodes $(F_{ji}, d_{ji})$, the prefactor $w_{ji} = s_a^{(2)}(d_{ji})$ is likewise positive and bounded below on~$K$, and $\phi_c^{(1)}$ may approximate the quotient $s_a(d_{ji})\, F_{ji}/w_{ji}$, so that $w_{ji}\,\phi_c^{(1)}(\ldots) \approx s_a(d_{ji})\, F_{ji}$.
The sum then gives $\sum_i w_{ji}\,\phi_c^{(1)}(\ldots) \approx \sum_i s_a(d_{ji})\, F_{ji}$.
The target of the outer MLP~$\phi_u^{(1)}$ is $(\psi_3(\sum_i s_a(d_{ji})\, F_{ji}) - \bm{n}_j^{(l)})/\delta_u^{(1)}$, which depends on the parameter $\bm{n}_j^{(l)}$ in addition to the aggregate.
The device of Step~3 recovers $\bm{n}_j^{(l)}$ from the sum, whose summand count $|\calN(j)|$ is positive because the overlap condition grants every center a neighbor.
Hence $\bm{n}_j^{(l+1)} \approx \psi_3(\sum_i s_a(d_{ji})\, F_{ji})$, the output of one HGNN layer.

\smallskip\noindent\emph{Multi-layer extension.}
Steps~1--4 consume the layer's input node features $\bm{n}^{(l)}$ together with the geometric data maintained by the invariant, produce the output $\bm{n}^{(l+1)}$, and preserve the invariant, while the self-message reloads the current node features at the start of every layer.
Stacking therefore needs no separate loading layer, and an $L$-layer HGNN is simulated by an $L$-layer DPA3.
\end{proof}

\section{CHGNet}
\label{sm:chgnet}

CHGNet~\cite{deng2023chgnet} uses the same atom graph and bond graph as DPA3 with $K = 2$: atoms as nodes, bonds within~$r_\mathrm{cut}$ as edges, and the line graph~$\calL(G)$ encoding angles.
We formalize the architecture as written in Eq.~(2) of~\cite{deng2023chgnet}, with \emph{directed} edges, so that each bond $\{j,i\}$ carries two features $\bm{e}_{ji}$ and $\bm{e}_{ij}$.\footnote{The released implementation instead shares one feature per bond ($\bm{e}_{ji} = \bm{e}_{ij}$, a single node of the bond graph) and aggregates the angles centered at both endpoints into it through the same message function~$\phi_e$.  Our simulation argument relies on the directed form and does not cover this bidirectional variant.}
CHGNet maintains three feature types: node features
$\bm{n}_j^l$, edge features $\bm{e}_{ji}^l$ on~$G$,
and angle features $\bm{a}_{ijk}^l$ on~$\calL(G)$.
The initial features are element embeddings (nodes),
smooth radial Bessel function expansions (edges), and
Fourier basis expansions of angles.

Each layer updates all three feature types.
The node update aggregates over neighbors:
\begin{equation}\label{eq:chgnet-atom}
  \bm{n}_j^{l+1} = \bm{n}_j^l
  + L_v\Bigl[\sum_{i \in \calN(j)} \bm{e}_{ji}^0 \odot
    \phi_v\bigl(\bm{n}_i^l,\, \bm{n}_j^l,\,
    \bm{e}_{ji}^l\bigr)\Bigr].
\end{equation}
The bond update aggregates over the angles centered at the tail atom~$j$:
\begin{equation}\label{eq:chgnet-bond}
  \bm{e}_{ji}^{l+1} = \bm{e}_{ji}^l
  + L_e\Bigl[
    \sum_{k \in \calN(j) \setminus \{i\}}
    \bm{e}_{ji}^0 \odot \bm{e}_{jk}^0 \odot
    \phi_e\bigl(\bm{e}_{jk}^l,\, \bm{e}_{ji}^l,\,
    \bm{a}_{kji}^l,\,
    \bm{n}_j^{l+1}\bigr)
  \Bigr].
\end{equation}
Both edge arguments of~$\phi_e$ are the copies directed away from the shared center~$j$.
The angle update is:
\begin{equation}\label{eq:chgnet-angle}
  \bm{a}_{ijk}^{l+1} = \bm{a}_{ijk}^l
  + \phi_a\bigl(\bm{e}_{ji}^{l+1},\,
    \bm{e}_{jk}^{l+1},\, \bm{a}_{ijk}^l,\,
    \bm{n}_j^{l+1}\bigr).
\end{equation}
Here $L_v$, $L_e$ are linear layers, $\phi_v$, $\phi_e$,
$\phi_a$ are gated MLPs (MLPs with sigmoid gating on
intermediate activations), and $\odot$ denotes element-wise
multiplication.
The gate $\bm{e}_{ji}^0$ is a trainable linear image of a smooth radial basis expansion of the bond distance, multiplied by a fixed envelope that vanishes at the cutoff.
By default, CHGNet stacks three full interaction blocks followed by one block with only the node update, and uses a feature dimension of $64$.
The total energy is
$E_\theta = \sum_j \mathrm{MLP}(\bm{n}_j^L)$.

\begin{proof}[Proof of \cref{cor:chgnet}]
We show that two CHGNet layers can approximate the HGNN's per-layer computation up to the outer readout~$\psi_3$, which is absorbed into subsequent blocks or the readout network.
As in the proof of \cref{cor:dpa3}, write $\tilde\rho_3$ for $s_a(d_{jk})\,\rho_3(\bm{n}_j, \bm{n}_i, \bm{n}_k, d_{ji}, d_{jk}, \cos\theta_{ijk})$ and $F_{ji} = \sum_{k \in \calN(j) \setminus \{i\}} \tilde\rho_3$, so that the HGNN layer output is $\psi_3\bigl(\sum_{i \in \calN(j)} s_a(d_{ji})\, F_{ji}\bigr)$.

The gates are handled once and for all.
Since $K$ is compact and the displacement vectors range over $B_{r_c} \setminus \{0\}$, all interatomic distances in~$K$ lie in an interval $[\delta, r_c - \eta]$ with $\delta, \eta > 0$.
Choose the trainable coefficients of every gate~$\bm{e}^0$ so that each component equals the first radial basis function times the envelope, which is bounded below on $[\delta, r_c - \eta]$.
Absorbing a gate into a message function then means letting the MLP approximate the quotient of the target by the gate, which is continuous on~$K$.

At initialization, edge features carry only distance
information: $\bm{e}_{ji}^{(0)} = \phi_d(d_{ji})$, and
node features are type embeddings:
$\bm{n}_k^{(0)} = \mathrm{type\_embed}(z_k)$.

\smallskip\noindent\emph{Step~1: node update.}
The node update~\eqref{eq:chgnet-atom} computes
\[
  \bm{n}_j^{(1)} = \bm{n}_j^{(0)}
  + L_v\Bigl[\sum_{m \in \calN(j)}
  \bm{e}_{jm}^{(0)} \odot
  \phi_v\bigl(\bm{n}_m^{(0)},\, \bm{n}_j^{(0)},\,
  \bm{e}_{jm}^{(0)}\bigr)\Bigr],
\]
where $L_v$ is a linear layer and $\phi_v$ is an MLP.
Since $L_v$ is linear, $\bm{n}_j^{(1)}$ alone does not
necessarily encode the full neighbor multiset.
However, $\bm{n}_j^{(1)}$ will serve as input to the
subsequent bond update MLP~$\phi_e$, and it is the
\emph{composition} that provides universality.

\smallskip\noindent\emph{Step~2: bond update approximates
$\tilde\rho_3$ per summand.}
The bond update~\eqref{eq:chgnet-bond} for
the directed edge~$(j,i)$ is computed with $L_e = I$:
\begin{equation}\label{eq:chgnet-load}
  \bm{e}_{ji}^{(1)} = \bm{e}_{ji}^{(0)}
  + \sum_{k \in \calN(j) \setminus \{i\}}
    \bm{e}_{ji}^{(0)} \odot \bm{e}_{jk}^{(0)} \odot
    \phi_e\bigl(\bm{e}_{jk}^{(0)},\, \bm{e}_{ji}^{(0)},\,
    \bm{a}_{kji}^{(0)},\,
    \bm{n}_j^{(1)}\bigr).
\end{equation}
Substituting the expression for~$\bm{n}_j^{(1)}$, we obtain that each
summand has the form
\[
  \bm{e}_{ji}^{(0)} \odot \bm{e}_{jk}^{(0)} \odot
  \phi_e\Bigl(
    \phi_d(d_{jk}),\, \phi_d(d_{ji}),\,
    \bm{a}_{kji}^{(0)},\;
    \bm{n}_j^{(0)} + L_v\bigl[
      \textstyle\sum_{m} \bm{e}_{jm}^{(0)} \odot
      \phi_v(\bm{n}_m^{(0)},\, \bm{n}_j^{(0)},\,
      \bm{e}_{jm}^{(0)})
    \bigr]
  \Bigr).
\]
This is a function of the per-angle data
$(d_{jk}, d_{ji}, \cos\theta_{kji})$ and the multiset
$\calS_j = \{(\bm{n}_m^{(0)}, d_{jm})\}_{m \in \calN(j)}$.
The dependence on~$\calS_j$ enters through
$\bm{n}_j^{(0)} + L_v[\sum_m g_m]$,
where $g_m = \bm{e}_{jm}^{(0)} \odot
\phi_v(\bm{n}_m^{(0)}, \bm{n}_j^{(0)},
\bm{e}_{jm}^{(0)})$.
The composition $\phi_e(\ldots,\;
\bm{n}_j^{(0)} + L_v[\sum_m g_m])$
has the extended DeepSets form.
The inner function~$g_m$ is parametrized by an MLP~$\phi_v$.
With sufficiently large feature dimension, the type embeddings are chosen supported on one coordinate block and the image of~$L_v$ on the complementary block, so the fourth argument of~$\phi_e$ carries the center feature $\bm{n}_j^{(0)}$ and the aggregate $L_v[\sum_m g_m]$ in disjoint blocks.
By \cref{lem:deepsets-mlp}, there exist weights for $\phi_v$ and~$\phi_e$ such that each summand approximates any continuous function of $(d_{jk}, d_{ji}, \cos\theta_{kji})$, the center feature, and the multiset~$\calS_j$, symmetric in the multiset argument.
The lemma applies in this parametrized form because the per-angle data and the center feature enter the outer MLP as additional inputs, and the approximation is uniform on their compact range.

The target per-angle function is
$\tilde\rho_3(\bm{n}_j^{(0)},\,
\bm{n}_i^{(0)},\, \bm{n}_k^{(0)},\, d_{ji},\, d_{jk},\,
\cos\theta_{ijk})$, depending on specific entries $\bm{n}_i^{(0)}$,
$\bm{n}_k^{(0)}$ of the multiset.
Under strong genericity (\cref{def:strong-generic}), all neighbors of~$j$
have distinct distances.  Define the lookup function
$\lambda : \calM_{\le M}(\RR^d \times \RR_{>0})
\times \RR_{>0} \to \RR^d$ by
$\lambda(\calS_j, d) = \bm{n}_m^{(0)}$, where $m$ is the
unique element of~$\calN(j)$ satisfying $d_{jm} = d$.
On strongly generic configurations this is well-defined and symmetric in~$\calS_j$.
On the compact set~$K$ the distances of distinct neighbors are separated by a positive gap, so $\lambda$ is continuous.
Rewriting:
\[
  \tilde\rho_3(\bm{n}_j^{(0)},\,
  \bm{n}_i^{(0)},\, \bm{n}_k^{(0)},\, \ldots)
  = \tilde\rho_3\bigl(\bm{n}_j^{(0)},\,
  \lambda(\calS_j, d_{ji}),\,
  \lambda(\calS_j, d_{jk}),\, \ldots\bigr),
\]
which is a continuous symmetric function of~$\calS_j$
and the per-angle data.
The gate prefactors $\bm{e}_{ji}^{(0)} \odot \bm{e}_{jk}^{(0)} \odot (\cdot)$
are absorbed by the gate convention fixed at the beginning of the proof, and
$\theta_{kji} = \theta_{ijk}$.
Therefore, each summand in~\eqref{eq:chgnet-load}
approximates $\tilde\rho_3$.
Since the sum in~\eqref{eq:chgnet-load} is a direct
linear sum of per-angle terms, we obtain
\[
  \bm{e}_{ji}^{(1)} \approx \phi_d(d_{ji}) +
  \sum_{k \in \calN(j) \setminus \{i\}}
  \tilde\rho_3(\bm{n}_j^{(0)},\,
  \bm{n}_i^{(0)},\, \bm{n}_k^{(0)},\, d_{ji},\,
  d_{jk},\, \cos\theta_{ijk}).
\]
That is, $\bm{e}_{ji}^{(1)} \approx \phi_d(d_{ji}) + F_{ji}$, and the oppositely directed copy stores $\bm{e}_{ij}^{(1)} \approx \phi_d(d_{ji}) + F_{ij}$.
The summands are written into coordinates on which $\phi_d$ vanishes, so $\bm{e}_{ji}^{(1)}$ carries $\phi_d(d_{ji})$ and $F_{ji}$ in disjoint blocks.

\smallskip\noindent\emph{Step~3: node update computes the HGNN
double sum.}
The node update~\eqref{eq:chgnet-atom} with $L_v = I$
computes
\[
  \bm{n}_j^{(2)} = \bm{n}_j^{(1)}
  + \sum_{i \in \calN(j)}
  \bm{e}_{ji}^{(0)} \odot
  \phi_v\bigl(\bm{n}_i^{(1)},\, \bm{n}_j^{(1)},\,
  \bm{e}_{ji}^{(1)}\bigr).
\]
Each summand receives
$\bm{e}_{ji}^{(1)}$, the copy directed away from~$j$, which stores the inner sum~$F_{ji}$ by Step~2.
By MLP universality, $\phi_v$ can extract this inner
sum from $\bm{e}_{ji}^{(1)}$ and multiply by
$s_a(d_{ji})$, where the distance $d_{ji}$ is available
through the retained block $\phi_d(d_{ji})$ of~$\bm{e}_{ji}^{(1)}$, and the gating factor
$\bm{e}_{ji}^{(0)} \odot (\cdot)$ is absorbed by the gate convention.
The sum over~$i$ then gives
\[
  \bm{n}_j^{(2)} \approx \bm{n}_j^{(1)} +
  \sum_{\substack{i, k \in \calN(j) \\ i \neq k}}
  s_a(d_{ji})\,
  \tilde\rho_3(\bm{n}_j^{(0)},\, \bm{n}_i^{(0)},\,
  \bm{n}_k^{(0)},\, d_{ji},\, d_{jk},\,
  \cos\theta_{ijk}),
\]
which is the HGNN double
sum~\eqref{eq:aggregation} \emph{without} the outer
readout~$\psi_3$.

\smallskip\noindent\emph{Step~4: absorbing~$\psi_3$.}
The HGNN layer output is
$\psi_3(\sum_{i,k} s_a(d_{ji})\, \tilde\rho_3)$, but the
CHGNet layer produces $\bm{n}_j^{(1)} + \sum_{i,k}
s_a(d_{ji})\, \tilde\rho_3$.
With sufficiently large feature dimension, the increment of Step~3 is written into coordinates on which $\bm{n}_j^{(1)}$ vanishes, so the double sum is recoverable from~$\bm{n}_j^{(2)}$.
The missing nonlinearity~$\psi_3$ is handled differently in intermediate and final layers.
In an intermediate layer, the next CHGNet block begins with a node update~\eqref{eq:chgnet-atom}, whose MLP~$\phi_v(\bm{n}_m^{(2)}, \bm{n}_j^{(2)}, \bm{e}_{jm}^{(2)})$ receives each neighbor's double-sum feature and the center's own.
Being an MLP, it can apply~$\psi_3$ to these inputs as part of its computation, so $\bm{n}_j^{(3)}$ encodes $\psi_3$-transformed neighbor information, and the subsequent bond update can compute the next layer's~$\tilde\rho_3$ from it by the extended DeepSets argument of Step~2.
In the final layer, the readout network is an MLP applied after the last CHGNet layer, so it can approximate the composition of the extraction of the double sum, $\psi_3$, and the HGNN readout~$f_\theta$.

\smallskip\noindent\emph{Multi-layer extension.}
Each HGNN layer is simulated by two CHGNet layers.
The first performs the node and bond updates of Steps~1--2, computing the per-angle $\tilde\rho_3$ and storing the inner sums in the edge features, absorbing the previous layer's~$\psi_3$ when $l \ge 2$ as in Step~4.
The second performs the node update of Step~3, which accumulates the double sum into the node features.
The angle updates are set to approximate zero, so the angle features retain the initial encoding of $\cos\theta_{ijk}$ at every layer.
The bond update of the second layer of each pair is not used and is suppressed by setting~$L_e = 0$.
Each simulated layer realizes a map uniformly continuous on compact sets, so an input error entering a layer propagates to a controlled output error.
Since there are finitely many layers, the per-layer simulation accuracies can be chosen so that the final node features are within any prescribed tolerance of the HGNN output on~$K$.
Therefore, an $L$-layer HGNN is simulated by $2L$ CHGNet layers, with the final $\psi_3$ absorbed into the readout, and the result follows from~\cref{thm:uat}.
\end{proof}

\section{ALIGNN}
\label{sm:alignn}

ALIGNN~\cite{choudhary2021alignn} operates on two graphs: an \emph{atom graph}~$G$, whose nodes are atoms and whose edges are the bonds joining pairs of atoms within the cutoff~$r_c$, and its \emph{line graph}~$\calL(G)$.\footnote{We describe all three architectures with the distance cutoff used throughout this paper.  The reference implementation of ALIGNN instead joins each atom to its $12$ nearest neighbors within a cutoff of $8$\,\AA.  This affects neither \cref{rem:alignn} nor any statement below, which depend only on the form of the message functions.}
As in \cref{sm:chgnet}, edges are directed, and each bond $\{j,i\}$ carries two features $\bm{e}_{ji}$ and $\bm{e}_{ij}$, matching the reference implementation.
The line graph, see~\cref{fig:architecture}(c), is constructed
by making each directed edge of~$G$
a node in~$\calL(G)$, and connecting two nodes in~$\calL(G)$ if
the head of the first edge is the tail of the second.  The edges
of~$\calL(G)$ thus correspond to bond angles.
ALIGNN maintains three feature types: node features
$\bm{n}_j^l$, edge features $\bm{e}_{ji}^l$ on~$G$,
and angle features $\bm{a}_{ijk}^l$ on~$\calL(G)$.
The initial node features are element embeddings, the initial edge features are radial basis function (RBF) expansions of bond distances, and the initial angle features are RBF expansions of bond-angle cosines.

Each ALIGNN layer performs two sequential edge-gated
graph convolutions: first on~$\calL(G)$ to update the bond
and angle features using angular information, then
on~$G$ to update the node and bond features using
the enriched bonds.
We write the two convolutions out in turn, since the update order determines which features reach the node aggregation.

\smallskip\noindent\emph{1: Line graph convolution.}
The angle features are updated using the two adjacent
bond features, then the bond features are aggregated
with angle-derived gates:
\begin{align}
  \bm{a}_{ijk}^{l} &= \bm{a}_{ijk}^{l-1}
  + \mathrm{SiLU}\bigl(\mathrm{Norm}(
    A_a\, \bm{e}_{ji}^{l-1} + B_a\, \bm{e}_{jk}^{l-1}
    + C_a\, \bm{a}_{ijk}^{l-1})\bigr),
  \label{eq:alignn-angle} \\
  \tilde{\bm{e}}_{ji}^{l} &= \bm{e}_{ji}^{l-1}
  + \mathrm{SiLU}\Bigl(\mathrm{Norm}\bigl(
    W_{s,a}\, \bm{e}_{ji}^{l-1} +
    \sum_{k \in \calN(j) \setminus \{i\}}
    \sigma_{ijk}\, W_{d,a}\, \bm{e}_{jk}^{l-1}
    \bigr)\Bigr),
  \label{eq:alignn-bond}
\end{align}
where $A_a, B_a, C_a, W_{s,a}, W_{d,a}$ are trainable weight matrices, $\mathrm{Norm}$ is a feature normalization layer, $\sigma$ denotes the sigmoid function, and $\mathrm{SiLU}(x) = x\,\sigma(x)$ is the sigmoid linear unit.
All edge features entering these updates are the copies directed away from the center~$j$.
Both convolutions use the same normalized gate: for features $\bm{x}_s$ aggregated over an index set~$S$,
\begin{equation}\label{eq:alignn-gate}
  \sigma[\bm{x}_s] = \frac{\sigma(\bm{x}_s)}
    {\sum_{s' \in S} \sigma(\bm{x}_{s'}) + \epsilon},
\end{equation}
with $\epsilon > 0$ a small constant for numerical stability.
In~\eqref{eq:alignn-bond} the gate is $\sigma_{ijk} = \sigma[\bm{a}_{ijk}^l]$, aggregated over the angles at the bond~$(j,i)$.
In~\eqref{eq:alignn-node2} it is $\sigma_{ji} = \sigma[\bm{e}_{ji}^l]$, aggregated over the neighbors of~$j$.

\smallskip\noindent\emph{2: Atom graph convolution.}
The edge update loads \emph{both} endpoint node features
into the bond, then the node update aggregates over
neighbors with edge-derived gates:
\begin{align}
  \bm{e}_{ji}^{l} &= \tilde{\bm{e}}_{ji}^{l}
  + \mathrm{SiLU}\bigl(\mathrm{Norm}(
    A\, \bm{n}_j^{l-1} + B\, \bm{n}_i^{l-1}
    + C\, \tilde{\bm{e}}_{ji}^{l})\bigr),
  \label{eq:alignn-edge2} \\
  \bm{n}_j^{l} &= \bm{n}_j^{l-1}
  + \mathrm{SiLU}\Bigl(\mathrm{Norm}\bigl(
    W_s\, \bm{n}_j^{l-1} + \sum_{i \in \calN(j)}
    \sigma_{ji}\, W_d\, \bm{n}_i^{l-1}
    \bigr)\Bigr),
  \label{eq:alignn-node2}
\end{align}
where $A, B, C, W_s, W_d$ are trainable weight matrices.
The total energy is $E_\theta = \sum_j
\mathrm{MLP}(\bm{n}_j^L)$.

A key architectural feature is that the edge
update~\eqref{eq:alignn-edge2} takes both endpoint node
features $\bm{n}_i$ and $\bm{n}_j$ directly.

\begin{remark}\label{rem:alignn}
ALIGNN shares the same
angle $\to$ edge $\to$ node topology as DPA3 and CHGNet,
and its edge update~\eqref{eq:alignn-edge2} incorporates
both endpoint node features, analogous to DPA3's
self-message.  However, the simulation proofs above rely
on the message functions being MLPs, which are universal approximators.
This is
needed for the DeepSets universality argument
and for the lookup
decoding in the proof of \cref{cor:chgnet}.  In ALIGNN, the message
functions are single nonlinear layers.
The inner
functions in the aggregation are sigmoid-gated linear maps, whose only nonlinearity is the scalar gate,
and the outer functions are
$\mathrm{SiLU} \circ \mathrm{Norm} \circ \mathrm{linear}$
rather than MLPs.  Consequently,
\cref{lem:deepsets-mlp} does not directly apply to
a single ALIGNN layer, and the HGNN simulation argument
does not carry over.  Whether ALIGNN achieves UAT through
sufficient depth remains
an open question.
We note that replacing these gated linear messages with MLPs would remove this obstruction.
\end{remark}

\ifdefined\arxivbuild\else
\bibliographystyle{siamplain}
\bibliography{references}
\fi